\xpatchcmd{\proof}{\itshape}{\normalfont\proofnameformat}{}{}
\newcommand{\proofnameformat}{\bfseries}
\newcommand{\pref}[1]{\prettyref{#1}}
\newcommand{\savehyperref}[2]{\texorpdfstring{\hyperref[#1]{#2}}{#2}}
\DeclarePairedDelimiter{\abs}{\lvert}{\rvert} 
\DeclarePairedDelimiter{\crl}{\{}{\}}
\let\Pr\undefined
\DeclareMathOperator{\Pr}{Pr}
\newcommand{\ls}{\ell}
\newcommand{\ind}[1]{\mathbbm{1}\crl*{#1}}    %
\newcommand{\ldef}{\vcentcolon=}
\def\ddefloop#1{\ifx\ddefloop#1\else\ddef{#1}\expandafter\ddefloop\fi}
\def\ddef#1{\expandafter\def\csname bb#1\endcsname{\ensuremath{\mathbb{#1}}}}
\def\ddefloop#1{\ifx\ddefloop#1\else\ddef{#1}\expandafter\ddefloop\fi}
\def\ddef#1{\expandafter\def\csname b#1\endcsname{\ensuremath{\mathbf{#1}}}}
\def\ddef#1{\expandafter\def\csname c#1\endcsname{\ensuremath{\mathcal{#1}}}}
\def\ddef#1{\expandafter\def\csname h#1\endcsname{\ensuremath{\widehat{#1}}}}
\def\ddef#1{\expandafter\def\csname hc#1\endcsname{\ensuremath{\widehat{\mathcal{#1}}}}}
\def\ddef#1{\expandafter\def\csname t#1\endcsname{\ensuremath{\widetilde{#1}}}}
\def\ddef#1{\expandafter\def\csname tc#1\endcsname{\ensuremath{\widetilde{\mathcal{#1}}}}}
\newcommand{\state}{{\normalfont \textsf{I}}}     
\newcommand{\core}{\mathfrak{C}}
\newcommand{\Risk}{\mathcal{E}}
\newcommand{\algsc}{A_{\textsc{cs}}}
\newcommand{\MI}{\mathsf{MI}}
\theoremstyle{plain}
\newtheorem{theorem}{Theorem}[section]
\newtheorem{proposition}[theorem]{Proposition}
\newtheorem{lemma}[theorem]{Lemma}
\newtheorem{corollary}[theorem]{Corollary}
\theoremstyle{definition}
\newtheorem{definition}[theorem]{Definition}
\theoremstyle{definition}
\newtheorem{remark}[theorem]{Remark}
\icmltitlerunning{System-Aware Unlearning Algorithms}
\algrenewcommand\algorithmicprocedure{\textbf{Function}}
\begin{document}

\twocolumn[
\icmltitle{System-Aware Unlearning Algorithms: \\  Use Lesser, Forget Faster}

\icmlsetsymbol{equal}{*}

\begin{icmlauthorlist}
\icmlauthor{Linda Lu}{cornell}
\icmlauthor{Ayush Sekhari}{mit}
\icmlauthor{Karthik Sridharan}{cornell}
\end{icmlauthorlist}

\icmlaffiliation{cornell}{Cornell University}
\icmlaffiliation{mit}{Boston University} 

\icmlcorrespondingauthor{Linda Lu}{lulinda@cs.cornell.edu} 

\icmlkeywords{Machine Learning, ICML} 

\vskip 0.3in 
]

\printAffiliationsAndNotice{}  %

\begin{abstract}
    Machine unlearning addresses the problem of updating a machine learning model/system trained on a dataset $S$ so that the influence of a set of deletion requests $U \subseteq S$ on the unlearned model is minimized. The gold standard definition of unlearning demands that the updated model, after deletion, be nearly identical to the model obtained by retraining. This definition is designed for a worst-case attacker (one who can recover not only the unlearned model but also the remaining data samples, i.e., $S \setminus U$). Such a stringent definition has made developing efficient unlearning algorithms challenging. However, such strong attackers are also unrealistic. In this work, we propose a new definition, \textit{system-aware unlearning}, which aims to provide unlearning guarantees against an attacker that can at best only gain access to the data stored in the system for learning/unlearning requests and not all of $S\setminus U$.  With this new definition, we use the simple intuition that if a system can store less to make its learning/unlearning updates, it can be more secure and update more efficiently against a system-aware attacker. Towards that end, we present an exact system-aware unlearning algorithm for linear classification using a selective sampling-based approach, and we generalize the method for classification with general function classes. We theoretically analyze the tradeoffs between deletion capacity, accuracy, memory, and computation time. 
\end{abstract}

\begin{figure*}
    \vspace{-2mm}
    \centering
    \subfigure[Hard Margin SVM with Deletion]{
    \label{fig:comp-a}
    \centering
    \includegraphics[width=0.3\linewidth]{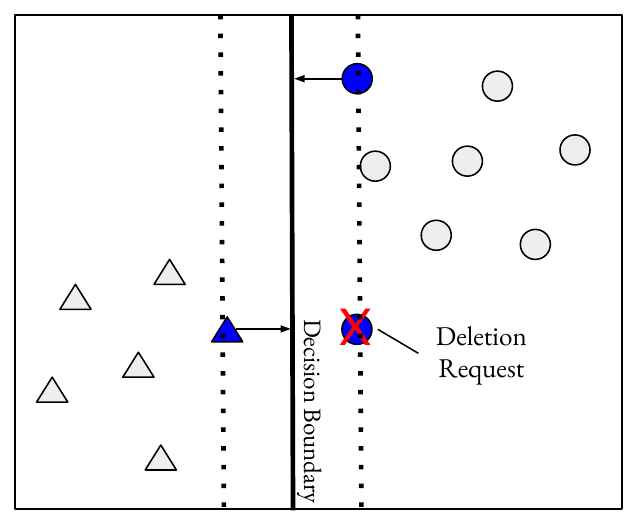}
    } \hfill
    \subfigure[Traditional Unlearning - $A(S \setminus U, \emptyset)$]{
    \label{fig:comp-b}
    \centering
    \includegraphics[width=0.3\linewidth]{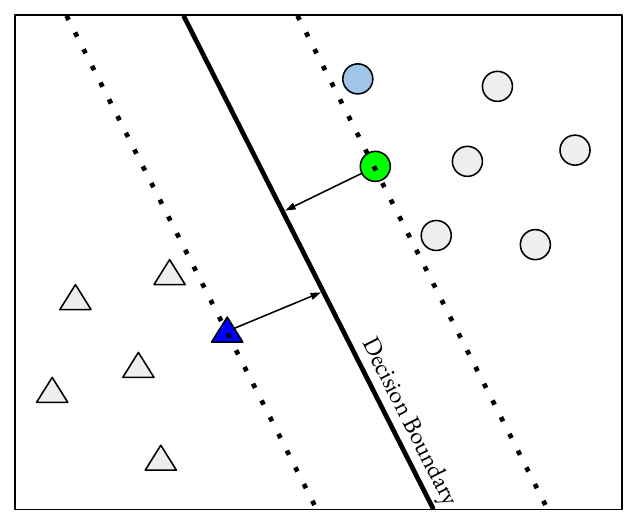}
    } \hfill
    \subfigure[System-Aware Unlearning - $A(S' \setminus U, \emptyset)$]{
    \label{fig:comp-c}
    \centering
    \includegraphics[width=0.3\linewidth]{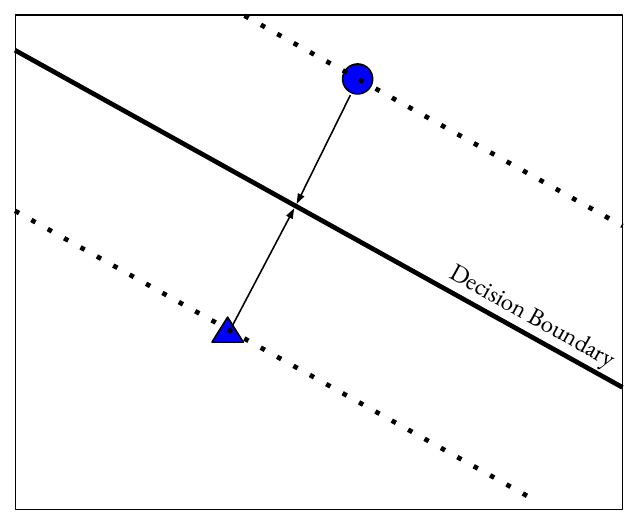}
    }
    \vspace{-2mm}
    \caption{Consider a deletion on a hard margin SVM where the state-of-system $\state_A(S, U)$ is the set of support vectors (in dark blue) and the unlearned model $A(S, U)$. Let $S'$ be the set of support vectors . Thus, we have $A(S, \emptyset) = A(S', \emptyset)$. Under traditional unlearning definitions, when a support vector is deleted, we need to recover the hypothesis from training a new hard margin SVM on the remaining data points $A(S \setminus U, \emptyset)$ (\pref{fig:comp-b}). This can lead to new support vectors being selected (ie. the green point in \pref{fig:comp-b}), which can drastically change the decision boundary. Under system-aware unlearning, since the support vectors are the only points in the sample that affect the decision boundary, we can treat the remaining support vectors after deletion as a plausible starting sample $S' \setminus U$ and train a new hard margin SVM on the remaining support vectors $A(S' \setminus U, \emptyset)$ (\pref{fig:comp-c}), as if the other points in the sample never existed. } 
    \label{fig:def-comparison}
    \vspace{-2mm}
\end{figure*}

\section{Introduction}
Today's large-scale Machine Learning (ML) models are often trained on extensive datasets containing sensitive or personal information. Thus, concerns surrounding privacy and data protection have become increasingly prominent \citep{yao2024survey}. These models, due to their high capacity to memorize patterns in the training data, may inadvertently retain and expose information about individual data points \citep{carlini2021llmleakage}. This presents significant challenges in the context of privacy regulations such as the European Union's \citet{gdpr} (GDPR), \citet{CCPA} (CCPA), and Canada's proposed Consumer Privacy Protection Act, all of which emphasize the ``right to be forgotten".  As a result, there is a growing need for methods that enable the selective removal of specific training data from models that have already been trained, a process commonly referred to as \textit{machine unlearning} \citep{cao2015unlearning}. Beyond privacy, unlearning can be used to combat undesirable model behavior after deployment, such as copyright violations \citep{dou2024avoidingcopyrightinfringementlarge}.

Machine unlearning addresses the need to remove data from a model's knowledge base without the need to retrain the model from scratch each time there is a deletion request, since this can be computationally expensive and often impractical for large-scale systems. The overarching objective is to ensure that, post-unlearning, a model “acts” as if the removed data were never part of the training process. Traditionally, this has been defined through notions of exact (or approximate) unlearning, wherein the model’s hypothesis after unlearning should be identical (or probabilistically equivalent) to the model obtained by retraining from scratch on the entire dataset after removing only the deleted points \citep{sekhari2021unlearning, ghazi23aticketed, guo2019certified}. While such definitions offer rigorous guarantees even in the most pessimistic scenarios, they often impose stringent requirements, limiting the practical applicability of machine unlearning. This is evidenced by the lack of efficient certified exact or approximate unlearning algorithms beyond the simple case of convex loss functions. Furthermore, \citet{cherapanamjeri2024ontheunlearnability} proved that under the traditional definition of unlearning, there exist simple model classes with finite VC dimension, such as linear classifiers, where traditional exact unlearning requires the storage of the entire dataset in order to unlearn. For large datasets, this makes unlearning under the traditional definitions impractical. 

At the core unlearning lies a fundamental question: \textit{What does it truly mean to ``remove" a data point from a trained model? And more importantly, when we provide privacy guarantees for deleted points against an outside observer or attacker, what information can this attacker reasonably possess?} The current definitions of exact and approximate unlearning take a worst-case perspective here and focus on the output hypothesis being indistinguishable from a retrained model \citep{sekhari2021unlearning, ghazi23aticketed, guo2019certified}. However, this approach overlooks a key aspect of the unlearning problem: the observer and their knowledge of the system. In the real world, the feasibility and complexity of unlearning should depend on what the observer can reasonably access, such as model parameters, data retained by the ML system in its memory, data ever encountered by the ML system, etc. For instance, consider a learning algorithm that relies on only a fraction of its training dataset to generate its hypothesis and hence the ML system only stores this data. In such cases, unlearning a data point should intuitively be more straightforward. Even if the entire memory of the system is compromised at some point, only the privacy of the stored points is in jeopardy as long as the learned model does not reveal much about points that the model did not use. Even if the observer has access to larger public datasets that might include parts of the training data, we can expect privacy for data that the system does not use directly for building the model to be preserved. Conversely, if the algorithm utilizes the entire dataset and retains all information in memory, unlearning becomes far more challenging, potentially requiring retraining from scratch, and this is the scenario the current definitions aim to cover.

\vspace{-2mm}
\paragraph{Contributions.} Our main contributions are:
\vspace{-3mm}
\begin{itemize}[leftmargin=5mm, noitemsep]
    \item We propose a new, \textit{system-aware formulation of machine unlearning}, which provides unlearning guarantees against an attacker who can observe the entire state of the system after unlearning (including anything the learning system stores or uses internally). If the system stores the entire remaining dataset, then system-aware unlearning definition becomes as stringent as traditional unlearning, but otherwise relaxes it. Thus, we prove that system-aware unlearning generalizes traditional unlearning.
    \item We present a general framework for system-aware unlearning using sample compression based or core set based algorithms. These algorithms leverage the fact that when an algorithm relies on less information, less information can be exposed to an attacker.
    \item We present an exact system-aware unlearning algorithm for linear classification using selective sampling for sample compression, thus resulting in \textit{the first exact unlearning algorithm for linear classification requiring memory sublinear in the number of samples}. We establish theoretical bounds on its computation time, memory requirements, deletion capacity, and excess risk. 
    \item We generalize our approach from linear classification to classification with general functions. Thus, providing a novel reduction from (monotonic) selective sampling for general function classes to system-aware unlearning.  
\end{itemize}
\vspace{-3mm}
The third bullet is particularly interesting because \citet{cherapanamjeri2024ontheunlearnability} proved that under traditional unlearning, any exact unlearning algorithm for linear classification must store the entire dataset. %

\vspace{-2mm}
\section{Setup and Unlearning Definition}\label{sec:setup}
Let $\mathcal{X}$ be the space of inputs, $\mathcal{Y}$ be the space of outputs, $\mathcal{D}$ be a distribution over an instance space $\mathcal{Z} = \mathcal{X} \times \mathcal{Y}$, $\mathcal{F} \subseteq \mathcal{X}^{\cY}$ be a model class, and $\ell: \cY \times \mathcal{Y} \rightarrow \mathbb{R}$ be a loss function. The goal of a learning algorithm is to take in a dataset $S \in \mathcal{Z}^*$ over the instance space and output a predictor $\hat{f} \in \mathcal{F}$ which minimizes the excess risk compared to the best predictor $f^* \in \mathcal{F}$ in the model class, where the excess risk is
\begin{align*}
    \Risk(\hat{f}) \ldef{} \mathop{\mathbb{E}}_{(x,y) \sim \mathcal{D}}&[\ell (\hat{f}(x), y)] - \min_{f^* \in \mathcal{F}} \mathop{\mathbb{E}}_{(x,y) \sim \mathcal{D}}[\ell (f^*(x), y)].
\end{align*}
Our goal in machine unlearning is to provide a privacy guarantee for data samples that request to be deleted, while ensuring that the updated hypothesis post-unlearning still has small excess risk. We first present the standard definition of machine unlearning, as stated in \citet{sekhari2021unlearning, guo2019certified}, often referred to as \textit{certified machine learning}, which generalizes the commonly used \textit{data deletion guarantee} from \citet{ginart2019unlearning}. 

\newcommand{\Alearn}{A}
\newcommand{\Aunlearn}{\bar A}

\begin{definition}[($\varepsilon, \delta$)-unlearning]\label{def:ep-del-unlearning} 
    For a dataset $S \in \mathcal{Z}^*$, and deletions requests $U \subseteq S$, a learning algorithm $\Alearn : \cZ^* \mapsto \Delta(\cF)$ and an unlearning algorithm $\Aunlearn: \cZ^* \times \cF \times \cT \mapsto \Delta(\cF)$ is $(\varepsilon,\delta)$-unlearning if for any $F\subseteq\mathcal{F}$, 
    \begin{align*}
        \mathrm{Pr}(\Aunlearn(U, \Alearn(S&),T(S))\in F) \\
        \leq e^\varepsilon\cdot &\mathrm{Pr}\left(\Aunlearn(\emptyset,A(S\setminus U),T(S\setminus U))\in F\right)+\delta,
    \intertext{and}
        \mathrm{Pr}(\Aunlearn(\emptyset, \Alearn(S&\setminus U), T(S\setminus U))\in F) \\
        &\leq e^\varepsilon\cdot \mathrm{Pr}\left(\Aunlearn(U,\Alearn(S),T(S))\in F\right)+\delta,
    \end{align*}
    where $T(S)$ denotes any intermediate auxiliary information that is available to  $\Aunlearn$ for unlearning. 
\end{definition}

\citet{sekhari2021unlearning} also defined a notion of \textit{deletion capacity}, which controls the number of samples that can be deleted while satisfying the above definition, and simultaneously ensuring good excess risk performance.
We defer a discussion of other related work on unlearning definitions and algorithms to \pref{app:related-work}.  

The above definition implicitly assumes that the attacker, in the worst case, has knowledge of $S \setminus U$ and can execute the unlearning algorithm on $S \setminus U$. Although this provides privacy against the most knowledgeable attacker, we argue with a very simple motivating example that the above definition may, unfortunately, be an overkill even in some toy scenarios where we want to unlearn. Consider an algorithm that learns by first randomly sampling a small subset $C \subseteq S$ of size \(m\) and then uses $C$ to train a model and discards the rest of the samples in $S \setminus C$. Now, consider an unlearning algorithm that, when given some deletion requests $U$, simply retrains from scratch on $C \setminus U$. This unlearning algorithm is not equivalent to rerunning the algorithm from scratch on $S \setminus U$ which would involve sampling a different subset $C'$ of \(m\) samples from \(S \setminus U\) and then training a model on $C'$. Since $C'$ contains \(m\) samples whereas $C \setminus U$ contains \(m - \abs{U}\) samples, the corresponding hypotheses will not be statistically indistinguishable from each other. Thus, under \pref{def:ep-del-unlearning}, this is not a valid unlearning algorithm. However, this is a valid unlearning algorithm from the perspective of an attacker who only observes the model after unlearning and (in the best case) stored samples $C \setminus U$; here neither of these reveals any information about the deleted samples $U$. Furthermore, an attacker has no ability to gain access to $S \setminus U$ and compare what would have happened had the algorithm been trained on $S \setminus U$.

The crucial thing to note is that \pref{def:ep-del-unlearning} considers a worst-case scenario where every point encountered by the unlearning algorithm except for the deletion requests, regardless of whether those points were actually used or stored, are known to the attacker. \textit{However, this is unrealistic; samples that were never used for learning or stored in memory can never be leaked to the attacker.} Towards this end, we develop an alternative definition of unlearning. However, we first need to formalize the information that an adversary can compromise from the system post-unlearning.   

\begin{definition}[State-of-System] Let  $\cI$ denote some arbitrary set of all possible states. For an unlearning algorithm $A$, we use the mapping \(\state_A: \cZ^* \times \cZ^* \mapsto \Delta (\mathcal{I})\) to denote what is saved in the system by $A$ after unlearning  (e.g. the model, any stored samples, auxiliary data statistics, etc.). That is, $\state_A(S,U)$ denotes the state-of-system (what is stored in the system) after learning from sample $S$ and performing the update for unlearning request $U$.  
\end{definition}

For the system to be useful, it is natural to assume that the state-of-system $\state_A(S, U)$ either contains the unlearned model $A(S, U)$, or more generally, the unlearned model $A(S, U)$ can be computed as a function of $\state_A(S, U)$. If algorithm $A$ requires the storage of samples, intermediate models, or auxiliary information in order to learn or unlearn in the future, then those must also be contained in $\state_A(S, U)$. 
Whenever clear from the context, we will drop the subscript \(A\) from \(\state_A\) to simplify the notation. Using the state-of-system $\state_A(S, U)$, we present a system-aware definition of unlearning. 

\begin{definition}[System-Aware-($\varepsilon, \delta$)-Unlearning] \label{def:our-definition} 
    Let $A$ be a (possibly randomized) learning-unlearning algorithm, that first learns on dataset $S \in \cZ^*$, then processes a set of deletion requests $U \subseteq S$, and after unlearning, has state-of-system $\state_A(S, U)$. We say that $A$ is a system-aware-$(\varepsilon, \delta)$-unlearning algorithm if for all $S$, there exists a \(S' \subseteq S\), such that for all $U \subseteq S$, for all measurable sets $F$,
    \begin{align*}
        \mathrm{Pr}(\state_A(S, U) \in F) \leq e^{\varepsilon} \cdot \mathrm{Pr}(\state_A(S' \setminus U, \emptyset) \in F) + \delta 
    \end{align*}
    and 
    \begin{align*}
        \mathrm{Pr}(\state_A(S' \setminus U, \emptyset) \in F) \leq e^{\varepsilon} \cdot \mathrm{Pr}(\state_A(S, U) \in F) + \delta.
    \end{align*}
\end{definition}

If an unlearning algorithm satisfies \pref{def:our-definition} with $\varepsilon = \delta = 0$, then we say that the algorithm is an \textit{exact system-aware unlearning algorithm}. We further remark that this definition assumes that the selection of $U$ is oblivious to the randomness in $A$, but can depend on $S$.

The definition above indicates that for any sample $S$, there exists a subset $S'$ that one can think of as being a good representative of $S$ because the state-of-system when trained on $S$ is nearly identical to that when trained on $S'$. For all unlearning requests $U$, the state-of-system $\state_A(S,U)$ after processing the set of deletions is statistically similar to the state-of-system $\state_A(S'\setminus U, \emptyset)$ when directly training on $S' \setminus U$.  Under system-aware unlearning, we take advantage of the fact that the algorithm designer has control over the information that an attacker could compromise after unlearning, specifically, the fact that the state-of-system is mostly determined by the smaller subset $S'$. Recall that the traditional unlearning definition does not account for this. 

Clearly, taking $S' = S$ and $\state_A(S, U) = A(S, U)$ (the state-of-system to be exactly the unlearned model), we recover the traditional notion of unlearning from \pref{def:ep-del-unlearning}. This corresponds to the scenario where the entire remaining dataset \(S \setminus U\) can be accessed by the adversary. \textit{Thus, system-aware unlearning strictly generalizes the traditional definition of unlearning}. We point out that the traditional definition of unlearning does not require indistinguishability for auxiliary information stored in the system outside of the unlearned model; thus, the traditional definition does not account for system-awareness. If we wanted to view traditional unlearning through the lens of system-awareness, we would consider \pref{def:our-definition} with the additional requirement that $S' = S$.

\vspace{-2mm}
\subsection{Why is Considering $S' \setminus U$, Instead of $S \setminus U$, Sufficient to Provide Privacy Guarantees?}
Since $S'$ must be fixed ahead of time for all possible deletion requests $U$, \(S'\) depends on \(S\) but not on \(U\). If $S'$ is only a function of $S$ and has no dependence on $U$, then intuitively, $S' \setminus U$ should not leak any more information about $U$ as compared to $S \setminus U$. We formalize this intuition through mutual information. Mutual information is a common way to measure the privacy leakage of an algorithm \citep{mir2013information, cuff2016differential}. Mutual information $\MI(A;B)$ quantifies the amount of information one gains about random variable $A$ by observing random variable $B$.
\begin{theorem} \label{thm:mi-guarantee}
    Let dataset $S$ and set of deletions $U \subseteq S$ come from a stochastic process $\mu$. Then, 
    \begin{align*}
        \sup_{\mu} (\MI(U; S' \setminus U) - \MI(U; S \setminus U)) \leq 0.
    \end{align*}
\end{theorem}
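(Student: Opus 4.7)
The plan is to prove the inequality $\MI(U; S'\setminus U) \le \MI(U; S\setminus U)$ via the data processing inequality, exploiting the structural fact that $S'$ is a deterministic function of $S$ alone and carries no direct dependence on $U$. The first observation is the Markov chain $U \to S \to S'$, which follows immediately because the algorithm fixes $S' = g(S)$ before any deletion request is made. Standard data processing then gives $\MI(U; S') \le \MI(U; S)$, but this is not quite the target inequality; we need the analogous comparison after subtracting $U$ from both sides, so the bulk of the proof lies in relating $(S \setminus U, S'\setminus U)$ to $(S, S')$.

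To bridge the gap, I would apply the chain rule of mutual information to the pair $(S \setminus U, S' \setminus U)$:
\begin{align*}
\MI(U; S \setminus U) - \MI(U; S' \setminus U) = \MI(U; S \setminus U \mid S' \setminus U) - \MI(U; S' \setminus U \mid S \setminus U).
\end{align*}
Since $(S \setminus U) \cup U$ recovers $S$ and $S' \setminus U = g(S) \setminus U$ is then a deterministic function of $(S \setminus U, U)$, the second conditional mutual information collapses to $H(S' \setminus U \mid S \setminus U)$. So the claim reduces to the asymmetric inequality $\MI(U; S \setminus U \mid S' \setminus U) \ge H(S' \setminus U \mid S \setminus U)$. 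Intuitively, decomposing $S \setminus U = (S' \setminus U) \sqcup V$ with $V = (S \setminus S') \setminus U$, the left-hand side is $\MI(U; V \mid S' \setminus U)$: the information about $U$ carried by the ``unused'' part of the remainder.

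My plan for the main step is to condition on $S$ throughout. Given $S$, the sub-sample $S'$ is deterministic, and the map $U \mapsto S' \setminus U = S' \setminus (U \cap S')$ factors through $U \cap S'$, which is a coarsening of $U$. This factoring yields a clean conditional statement $\MI(U; S' \setminus U \mid S) \le \MI(U; S \setminus U \mid S)$ by a direct application of conditional data processing: the map $U \mapsto S \setminus U$ is a bijection given $S$, whereas $U \mapsto S' \setminus U$ is only a function of the projection $U \cap S'$.

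The hard part will be passing from the conditional inequality to the unconditional one, since averaging over $S$ can in principle introduce extra dependence that reverses the direction. My plan is to exploit the Markov chain $U \to S \to S'$ together with chain-rule identities that introduce $S$ as an intermediate variable in both $\MI(U; S \setminus U)$ and $\MI(U; S' \setminus U)$: because the Markov chain forces any dependence of $S'$ on $U$ to be mediated through $S$, the ``cross'' terms $\MI(U; S \mid S \setminus U)$ and $\MI(U; S \mid S' \setminus U)$ arising from this expansion should respect the same ordering as the conditional bound, with the former dominating the latter since $S \setminus U$ retains strictly more of $S$ than $S' \setminus U$ does. Formalizing this last comparison — showing that the residual entropy $H(U \setminus S' \mid S, U \cap S')$ of the ``unseen'' deletions does not cause the conditional inequality to flip upon marginalization — is where I expect the main technical effort to go.
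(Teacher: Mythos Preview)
The paper's proof is a three-line argument that you essentially have in hand but never execute. It writes $S\setminus U$ as the disjoint pair $\bigl(S'\setminus U,\,(S\setminus S')\setminus U\bigr)$, applies the chain rule once,
\[
\MI(U;\,S\setminus U)\;=\;\MI(U;\,S'\setminus U)\;+\;\MI\bigl(U;\,(S\setminus S')\setminus U \mid S'\setminus U\bigr),
\]
and finishes by nonnegativity of the conditional term. There is no Markov-chain machinery, no conditioning on $S$, and no conditional-to-unconditional passage.

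You actually write down this same decomposition, $S\setminus U = (S'\setminus U)\sqcup V$ with $V=(S\setminus S')\setminus U$, and you even use it correctly to rewrite $\MI(U; S\setminus U \mid S'\setminus U)$ as $\MI(U; V \mid S'\setminus U)$. But you then carry the second term of your symmetric identity, $\MI(U; S'\setminus U \mid S\setminus U)=H(S'\setminus U\mid S\setminus U)$, as a nonzero quantity to be controlled, and build an elaborate conditioning-on-$S$ argument around it. Under the paper's reading --- where $S\setminus U$ \emph{is} the pair $(S'\setminus U,V)$ --- that conditional entropy is identically zero, and your reduced inequality becomes $\MI(U; V\mid S'\setminus U)\ge 0$, which is just nonnegativity again. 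In other words, the ``hard part'' you flag is exactly the identification the paper makes in its first sentence; once you grant it (as you implicitly do when rewriting the first term), your symmetric chain-rule expansion collapses to the paper's one-sided chain rule, and the entire $U\to S\to S'$ scaffolding together with the loosely sketched marginalization step is unnecessary.
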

\begin{proof}
Since $S' \subseteq S$, notice that \(S \setminus U = ((S' \setminus U), ((S \setminus S') \setminus U))\). Consider $\MI(S \setminus U; U)$, applying chain rule:
    \vspace{-1mm}
    \begin{align*}
        \MI(S & \setminus U; U) \\
        &= \MI(S' \setminus U; U)  + \MI((S \setminus S') \setminus U; U \mid{} S' \setminus U) \\
        &\geq \MI(S' \setminus U; U), 
    \end{align*}
    where the last line holds by non-negativity of mutual information.
\end{proof}

Thus, $S' \setminus U$ leaks no more information about $U$ than $S \setminus U$. For recovering the retraining-from-scratch hypothesis $A(S \setminus U, \emptyset)$ to be a reasonable objective for providing privacy, traditional unlearning definitions implicitly assume that the information between the deleted individuals $U$ and the remaining dataset $S \setminus U$ is small. \pref{thm:mi-guarantee} implies that if the information between $U$ and $S \setminus U$ is small, the information between $U$ and $S' \setminus U$ is also small. We note that all of our technical results, including the relative privacy guarantee from \pref{thm:mi-guarantee}, hold without any assumptions on the mutual information between $S$ and $S \setminus U$.

\vspace{-2mm}
\subsection{When is the Flexibility of $S' \neq S$ Helpful?} To understand the power of having \(S'\) to be different from \(S\), consider  
the following simple scenario: Suppose the learning algorithm $A$ allows for $S$ to be compressed into a small set $S' \subseteq S$ such that $A(S, \emptyset) \approx A(S', \emptyset)$, with state-of-system \(\state_A (S, \emptyset) =  S'\). In this case, under our new definition (\pref{def:our-definition}), it is straightforward to handle deletion requests \(U\) for which $S' \cap U = \emptyset$, by not making any change to the trained model. To see this, observe that for such a \(U\), $A(S' \setminus U, \emptyset) = A(S', \emptyset) \approx A(S, \emptyset)$, and thus we will satisfy \pref{def:our-definition} with \(S'\) on the right-hand side.  In other words, the individuals outside of $S'$ are already unlearned for free, since the original model output did not rely on them much to begin with. On the other hand, such an algorithm will not satisfy the conditions of the classic unlearning definition in \pref{def:ep-del-unlearning} for such a \(U\) (see \pref{sec:bbq-unlearning} for more details and a concrete example). 

 We can formalize algorithms that only rely on a subset of the training dataset as \textit{core set algorithms}. 

\begin{definition}[Core Set Algorithm] \label{def:core-set} A learning algorithm \(\algsc: \cZ^* \mapsto \Delta (\cF)\) is said to be a core set algorithm if there exists a mapping \(\core: \cZ^* \mapsto \cZ^*\) such that for any \(S \in \cZ^*\), $\core(S) \subseteq S$, and we have  
\begin{align*} 
    \algsc (S) = \algsc (\core(S)). 
\end{align*} 
We define $\core(S) \subseteq S$ to be the \textit{core set} of $S$. 
\end{definition}

Many sample compression-based learning algorithms for classification tasks, such as SVM or selective sampling, are core set algorithms \citep{hanneke2021stable, floyd1995sample}. For hard margin SVMs, the core set is the set of support vectors \citep{vapnik1996svm}, and for selective sampling, the core set is the set of queried points (see \pref{sec:bbq-unlearning} for further details). Typically, the number of samples in $\core(S)$ is much smaller than $\abs{S}$.  

\paragraph{Additional Examples.} To further highlight the benefit of system-awareness in unlearning, consider the following additional examples:  
\vspace{-4mm}
\begin{itemize}[leftmargin=5mm, noitemsep]
    \item \textit{Sample Compression.} Going back to our motivating example, an algorithm that first trains a model on a small compressed set $C \subseteq S$ and then retrains on $C \setminus U$ would be a valid exact unlearning algorithm under system-aware unlearning with $S' = C$.
    \item \textit{Hard Margin SVM.} $S'$ can be taken to be the set of support vectors. \pref{fig:def-comparison} depicts the key differences between traditional unlearning definitions (\pref{def:ep-del-unlearning}) and system-aware unlearning (\pref{def:our-definition}) on a hard margin SVM example.\footnote{As seen in \pref{fig:def-comparison}, depending on the selection of $S'$, system-aware unlearning can lead to very different unlearning objectives compared to traditional unlearning, some of which may be easier to satisfy than the restrictive condition of traditional unlearning. %
    }
    \item \textit{Approximate Sample Compression.} Let $f : \mathcal{Z}^* \mapsto \mathcal{F}$ be the regularized ERM with regularization $\lambda$ of a sample and let $C \subseteq S$ be a small compressed set of the sample. An algorithm that outputs $(1-\gamma) \cdot f(C \setminus U) + \gamma \cdot f(S) + \mathrm{Lap}(\nicefrac{2 \gamma}{\lambda \varepsilon})$ is a $\varepsilon$-system-aware unlearning algorithm. This follows from differentially private output perturbation \citep{chaudhuri2011differentiallyprivateempiricalrisk, dwork2014privacy}. We can interpolate between a sample compression scheme and a private model trained on the full dataset.  
\end{itemize}
\vspace{-2mm}

Further note that for all of the above examples, we only need to make an unlearning update when the set of deletions falls within the small set $S'$. This automatically gives us a large deletion capacity; points outside of $S'$ can always be deleted for free. Furthermore, the expected deletion time is small because computation only needs to be performed for a small number of points at the time of deletion. This ease of unlearning gives us an incentive to learn models that depend on a small number of samples. In \pref{sec:core-set}, we exploit the fact that algorithms that rely on fewer samples while training are easier to unlearn.

\vspace{-2mm}
\section{System-Aware Unlearning Algorithm via Core-Sets} \label{sec:core-set} 

Various learning algorithms rely on core-sets. In this section, we show how traditional unlearning algorithms can build on core-sets to get system-aware unlearning algorithms. Let \(\core\) be a mapping that satisfies
$$\core(\core(S) \setminus U) = \core(S) \setminus U,$$ 
for any \(S\) and \(U \subseteq S\). Furthermore, consider any unlearning algorithm \(A_{\textsc{un}}\),  that induces the state-of-system \(\state_{A_{\textsc{un}}}\), and satisfies the traditional definition of unlearning (\pref{def:our-definition}), i.e. we have: 
\begin{align*} 
        \mathrm{Pr}(\state_{A_{\textsc{un}}}(S, U) \in F) \leq e^{\varepsilon} \cdot \mathrm{Pr}(\state_{A_{\textsc{un}}}(S \setminus U, \emptyset) \in F) + \delta, \intertext{and,}
        \mathrm{Pr}(\state_{A_{\textsc{un}}}(S \setminus U, \emptyset) \in F) \leq e^{\varepsilon} \cdot \mathrm{Pr}(\state_{A_{\textsc{un}}}(S, U) \in F) + \delta. 
    \end{align*} 

Consider the following unlearning algorithm, denoted by \(A_{\textsc{CS}}\): Given a dataset \(S\) and unlearning requests \(U\),  return the hypothesis \(\state_{A_{\textsc{un}}}(\core(S) \setminus U)\). This procedure enjoys the following guarantee: 

\begin{restatable}{theorem}{coresetunlearn}\label{thm:core-set-unlearn}
    The algorithm \(A_{\textsc{CS}}\) (defined above) is a $(\varepsilon, \delta)$-system-aware unlearning algorithm with $S' = \core(S)$, with the state-of-system defined as: $\state_A(S, U) = \state_{A_{\textsc{un}}}(\core(S), U)$.  
\end{restatable}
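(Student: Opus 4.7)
The plan is to verify the two inequalities in Definition~\ref{def:our-definition} directly by rewriting both sides in terms of the traditional unlearning algorithm $A_{\textsc{un}}$ and then invoking the hypothesized traditional unlearning guarantee. I will first unpack the definition of the state-of-system for $A_{\textsc{CS}}$. By construction, $\state_A(S, U) = \state_{A_{\textsc{un}}}(\core(S), U)$, which handles the left-hand side of both inequalities. For the right-hand side, I need to evaluate $\state_A(S' \setminus U, \emptyset)$ with $S' = \core(S)$. Substituting gives $\state_A(\core(S) \setminus U, \emptyset) = \state_{A_{\textsc{un}}}(\core(\core(S) \setminus U), \emptyset)$, and here I would invoke the hypothesized fixed-point property $\core(\core(S) \setminus U) = \core(S) \setminus U$ to simplify this to $\state_{A_{\textsc{un}}}(\core(S) \setminus U, \emptyset)$.

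With both sides rewritten, the remaining task reduces to showing
\begin{align*}
\Pr\bigl(\state_{A_{\textsc{un}}}(\core(S), U) \in F\bigr) &\leq e^{\varepsilon}\, \Pr\bigl(\state_{A_{\textsc{un}}}(\core(S) \setminus U, \emptyset) \in F\bigr) + \delta,
\end{align*}
and the symmetric inequality. I would obtain these by applying the traditional unlearning property of $A_{\textsc{un}}$ directly, treating $\core(S)$ as the training set and $U \cap \core(S)$ as the set of deletion requests. Note that $\core(S) \setminus U = \core(S) \setminus (U \cap \core(S))$, so this substitution is consistent. The traditional unlearning guarantee, assumed as a hypothesis in the theorem statement, yields both inequalities immediately.

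The only subtle point, and the main thing to be careful about, is the role of the idempotence condition $\core(\core(S) \setminus U) = \core(S) \setminus U$: without it, the right-hand side of the system-aware condition would involve a further compression of $\core(S) \setminus U$, and there would be no reason for that to coincide with the ``one-shot'' unlearning output $\state_{A_{\textsc{un}}}(\core(S) \setminus U, \emptyset)$. So the proof is essentially a two-line unwinding of definitions, plus an appeal to the traditional unlearning guarantee, with the assumed stability of $\core$ under deletions being what makes the two definitions line up.
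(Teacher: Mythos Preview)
Your proposal is correct and follows essentially the same approach as the paper: unwind the definition of $\state_A$ on both sides, use the idempotence property $\core(\core(S)\setminus U)=\core(S)\setminus U$ to simplify the right-hand side, and then invoke the traditional unlearning guarantee of $A_{\textsc{un}}$ with $\core(S)$ playing the role of the dataset. Your extra remark about replacing $U$ by $U\cap\core(S)$ is a nice bit of care that the paper leaves implicit.
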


Particularly, \(A_{\textsc{CS}}\) only needs to make unlearning updates for deletions inside the core set, i.e.~when \(\core(S) \cap U \neq \emptyset\), otherwise we just return \(A_{\textsc{UN}}(\core(S), \emptyset)\). Importantly, for most points (which are not in \(\core(S)\)), nothing needs to be done during unlearning. Furthermore, only \(\core(S)\) needs to be stored in the system. Thus, using core set algorithms, we can unlearn more efficiently in terms of computation time and memory compared to traditional unlearning.

\vspace{-2mm}
\section{Efficient Unlearning for Linear Classification via Selective Sampling} \label{sec:bbq-unlearning}

Linear classification is not only a fundamental problem in learning theory, but its use in practice continues to be widespread. For example, in large foundation models and generative models, the last layers of these models are often fine-tuned using linear probing, which trains a linear classifier on representations learned by a deep neural network \citep{belinkov2022probing, kornblith2019better}. As unlearning gains increasing attention for fine-tuned large-scale ML models, the need for efficient unlearning algorithms for linear classification grows significantly. However, \citet{cherapanamjeri2024ontheunlearnability} proved that under traditional unlearning definition, exact unlearning for linear classification requires storing the entire dataset. For today's large datasets, this makes exact unlearning under the traditional definition impractical, even for the simple setting of linear classification. Furthermore, approximate unlearning algorithms, such as the one from \citet{sekhari2021unlearning}, reduce to exact unlearning for linear classification. Our key insight in this section is that by using selective sampling as a core set algorithm, we can design an exact system-aware unlearning algorithm that requires memory that only scales sublinearly in the number of samples; recall that this is theoretically impossible under the traditional unlearning definition \citet{cherapanamjeri2024ontheunlearnability}. We also prove that the expected deletion time is significantly faster than traditional unlearning. Thus, demonstrating that system-aware unlearning algorithms are more efficient than traditional unlearning algorithms.   

Selective sampling \citep{cesabianchi2009selective, dekel2012selective, zhu2022activelearning,  sekhari2023selective, hanneke2014theory} is the problem of finding a classifier with small excess risk using the labels of only a few number of points. It has become particularly important as datasets become larger and labeling them becomes more expensive. Typically, selective sampling algorithms query the label of points whose label they are uncertain of and only update the model on points that they query. %
Thus, selective sampling-based algorithms can be seen as core set algorithms where the core set is the set of points where the label was queried. Under mild assumptions, selective sampling can achieve the optimal excess risk for linear classification with an exponentially small number of samples \citep{dekel2012selective}.

\textbf{Assumptions.} We consider the problem of binary linear classification. Let $x \in \mathbb{R}^d$ be such that $\|x\| \leq 1$ and $y \in \{+1, -1\}$. Furthermore, we assume that there exists a $u \in \mathbb{R}^d$, $\|u\| \leq 1$ such that $\mathbb{E}[y_t \mid{} x_t] = u^\top x_t$. This is the realizability assumption for binary classification and ensures that the Bayes optimal predictor for $y_t$ is $\mathrm{sign}(u^\top x_t)$. In the following sections, let \(T \ldef{} \abs{S}\) and \(N_T \ldef{} \abs{\core (S)}\). 

Our goal in linear classification is to find a hypothesis that performs well under \(0\)-\(1\) loss, i.e.~ we set \(\ls(f(x), y) = \ind{f(x) \neq y}\). With this goal in mind, we define the excess risk for a hypothesis \(w\) as $\Risk(w) \ldef{} \mathop{\mathbb{E}}_{(x,y) \sim \mathcal{D}} \left[\ind{\mathrm{sign} (w^\top x) \neq y} - \ind{\mathrm{sign} ({u}^\top x) \neq y}\right]$.

We use the selective sampling algorithm \textsc{BBQSampler} from \citet{cesabianchi2009selective} to design our unlearning algorithm, given in \pref{alg:bbq-unlearning}. In particular, our algorithm uses  \textsc{BBQSampler} to learn a predictor that only depends on a small number of queried points $\mathcal{Q}$. Let the core set $\core(S)$ be the set of queried points $\mathcal{Q}$ when the \textsc{BBQSampler} executes on $S$. The final predictor of \pref{alg:bbq-unlearning} after learning returns an ERM over $\core(S)$ and stores $\core(S)$. Then when unlearning $U$, \pref{alg:bbq-unlearning} updates the predictor to be an ERM over $\core(S) \setminus U$ and removes $U$ from memory. After unlearning, the model output and everything stored in memory only rely on $\core(S) \setminus U$.

\begin{restatable}{theorem}{bbqunlearning}\label{thm:bbq-unlearning}
    Given the set \(S\) and deletion requests $U$, let $\core(S)$ denote the subset of points for which the labels were queried by \textsc{BBQSampler} in \pref{alg:bbq-unlearning}, and let $A(S, U)$ denote the unlearned model. Then, \pref{alg:bbq-unlearning} is an exact system-aware unlearning algorithm with $S' = \core(S)$ and state-of-system $\state_A(S, U) = (A(S, U), \core(S) \setminus U)$. In particular, it satisfies \pref{def:our-definition} with \(\varepsilon = \delta = 0\).
\end{restatable}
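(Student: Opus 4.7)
The plan is to establish a deterministic, pointwise equivalence between the state-of-system after processing the deletion requests and the state-of-system produced by a fresh learning run on the compressed dataset; this will directly yield the $\varepsilon = \delta = 0$ guarantee, since both directions of the indistinguishability condition in \pref{def:our-definition} reduce to an exact equality. Fix any $S$ and $U \subseteq S$, and let $\mathcal{Q} = \core(S)$ denote the set of queried points when \textsc{BBQSampler} processes $S$. By construction of \pref{alg:bbq-unlearning}, the state after unlearning is the pair $(\ERM(\mathcal{Q} \setminus U), \mathcal{Q} \setminus U)$. Taking $S' = \mathcal{Q}$, the right-hand-side state $\state_A(\mathcal{Q} \setminus U, \emptyset)$ is obtained by feeding $\mathcal{Q} \setminus U$ to the learning algorithm with no deletions. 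Denoting by $\mathcal{Q}''$ the set queried by \textsc{BBQSampler} on input $\mathcal{Q} \setminus U$, this state equals $(\ERM(\mathcal{Q}''), \mathcal{Q}'')$. It therefore suffices to prove that $\mathcal{Q}'' = \mathcal{Q} \setminus U$.

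The heart of the argument is a self-consistency property of \textsc{BBQSampler}: when its input stream consists solely of points that were queried in a previous run (possibly with some further points removed), it queries every one of them. I would prove this by induction on the time index $t$ in the restricted stream $\mathcal{Q} \setminus U$. At step $t$, \textsc{BBQSampler} queries $x_t$ whenever a disagreement statistic built from the design matrix of previously queried points exceeds a prescribed threshold. The crucial observation is that this statistic depends only on queried points before $t$ and is monotone under removal of such points: by a Sherman--Morrison argument, dropping any subset of previously queried points can only increase the quadratic form $x_t^\top A_t^{-1} x_t$ that drives the query rule. Consequently, if $x_t$ triggered a query when \textsc{BBQSampler} was run on $S$, it still triggers one when run on $\mathcal{Q} \setminus U$. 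Applying this inductively gives $\mathcal{Q}'' = \mathcal{Q} \setminus U$.

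Once the queried sets coincide, the ERM step is a deterministic function of its input, so the two pairs $(\ERM(\mathcal{Q} \setminus U), \mathcal{Q} \setminus U)$ and $(\ERM(\mathcal{Q}''), \mathcal{Q}'')$ match identically. Hence $\mathrm{Pr}(\state_A(S, U) \in F) = \mathrm{Pr}(\state_A(\mathcal{Q} \setminus U, \emptyset) \in F)$ for every measurable $F$, yielding exact system-aware unlearning with $\varepsilon = \delta = 0$. The main obstacle in the proof is the monotonicity lemma: while the underlying algebra is clean (a Sherman--Morrison-type inequality on precision matrices), one must verify carefully that deleting an arbitrary subset $U$ of previously queried points preserves positive definiteness of the reduced design matrix, and that \textsc{BBQSampler}'s time-dependent threshold schedule remains consistent after the re-indexing of time steps induced by the deletions.
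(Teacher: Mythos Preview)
Your proposal is correct and follows essentially the same approach as the paper: reduce to showing $\core(\core(S)\setminus U)=\core(S)\setminus U$ via monotonicity of the query statistic (Sherman--Morrison), then conclude that both states are the same regularized ERM on $\mathcal{Q}\setminus U$ together with the set $\mathcal{Q}\setminus U$ itself. The paper factors the monotonicity step out as a separate theorem, whereas you sketch it inline; and your concern about positive definiteness is handled automatically by the $\lambda I$ regularizer in $A_t$.
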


The proof of \pref{thm:bbq-unlearning} relies on a key attribute of the \textsc{BBQSampler}---its query condition is \textit{monotonic} with respect to deletion. In particular, a \textit{monotonic} query condition is one such that for all datasets $S$ whose set of queried points is $\mathcal{Q}$, and deletions $U$, the selective sampling algorithm executed on $\mathcal{Q} \setminus U$ queries every point in $\mathcal{Q} \setminus U$. 

\begin{restatable}{theorem}{monotonicquerycondition}\label{thm:monotonic-query-condition} 
    For any dataset $S$, \pref{alg:bbq-unlearning} satisfies $\core(\core(S) \setminus U) = \core(S) \setminus U$ for all  $U \subseteq S$. 
\end{restatable}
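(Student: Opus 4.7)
The plan is to prove this by induction on the points of $\core(S) \setminus U$, processed in the order of their original indices in $S$, leveraging a key structural property of the \textsc{BBQSampler}: its query condition is \emph{monotonic}, i.e., removing previously queried points can only make the algorithm more likely (never less) to query the current point. Once this is established, the run of \textsc{BBQSampler} on the subsequence $\core(S) \setminus U$ must query every point in the subsequence, which is exactly the statement $\core(\core(S) \setminus U) = \core(S) \setminus U$.

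For the induction, enumerate $\core(S) \setminus U$ as $x_{\tau(1)}, \ldots, x_{\tau(M)}$ in increasing order of their indices in $S$. Since the internal state of \textsc{BBQSampler} at any time depends only on the previously queried points, the hypothetical run on this subsequence is well-defined. Suppose inductively that $x_{\tau(1)}, \ldots, x_{\tau(j-1)}$ have all been queried in this new run; then the previously queried set at step $j$ is $\mathcal{Q}^{\mathrm{new}} = \{x_{\tau(k)} : k < j\}$. In the original run on $S$, when \textsc{BBQSampler} reached $x_{\tau(j)}$, its previously queried set was $\mathcal{Q}^{\mathrm{orig}} = \{x \in \core(S) : x \text{ precedes } x_{\tau(j)} \text{ in } S\}$, which contains $\mathcal{Q}^{\mathrm{new}}$ plus possibly some elements of $\core(S) \cap U$ with earlier indices. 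Because $x_{\tau(j)} \in \core(S)$ means it was queried under $\mathcal{Q}^{\mathrm{orig}}$, monotonicity immediately yields that it is also queried under the smaller set $\mathcal{Q}^{\mathrm{new}}$, closing the induction.

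The core technical ingredient is the monotonicity lemma for the query rule of \pref{alg:bbq-unlearning}. \textsc{BBQSampler} queries when an estimated margin is small relative to a data-dependent threshold, where the threshold is controlled by the inverse regularized Gram matrix $A^{-1}$ of previously queried features, and the uncertainty quantity $r_t = x_t^\top A^{-1} x_t$ is nonincreasing as the previously queried set grows (by the Loewner order on positive semidefinite matrices). Equivalently, shrinking the previously queried set can only inflate $r_t$ and thus relax the threshold, so a point queried with more prior information must also be queried with less.

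The main obstacle will be rigorously verifying this monotonicity for the precise query rule used in \pref{alg:bbq-unlearning}. The threshold side is clean via Loewner-order monotonicity of $A^{-1}$, but the estimated margin is itself constructed from queried labels, so one must check that the combined condition reduces to a rule whose firing is monotone purely in the previously queried set. This is exactly the algorithmic design choice that enables the core-set-based reduction in \pref{thm:core-set-unlearn} to apply here, and it is where the real content of the proof lies.
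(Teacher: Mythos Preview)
Your high-level plan---induction over the surviving queried points together with Loewner-order monotonicity of $x_t^\top A^{-1} x_t$---is correct and is essentially what the paper does (the paper inducts over deletions one at a time and uses Sherman--Morrison to verify $x_t^\top(A_t - x_j x_j^\top)^{-1}x_t \ge x_t^\top A_t^{-1}x_t$, which is the same inequality you state via the Loewner order).

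However, you have misread the query rule of \textsc{BBQSampler}. The condition in \pref{alg:bbq-unlearning} is simply
\[
x_t^\top A_{t-1}^{-1} x_t > T^{-\kappa},
\]
with no margin term and no label dependence whatsoever; $A_{t-1}$ is built only from the features of previously queried points. Consequently the ``main obstacle'' you flag---that the estimated margin depends on queried labels and so one must verify the combined rule is monotone---does not exist for this algorithm. The proof is strictly easier than you anticipate: once you note the Loewner monotonicity of $A^{-1}$ under removal of a rank-one term (equivalently, the Sherman--Morrison computation), the result is immediate. The paper in fact singles out this $y$-independence as the reason \textsc{BBQSampler} was chosen over other selective samplers; it explicitly remarks that label-dependent query conditions are \emph{not} monotonic, so the concern you raise is precisely the failure mode the algorithm was designed to avoid.
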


\begin{algorithm}[H]
    \begin{algorithmic}[1]
\Require  
\begin{minipage}[t] {0.8\textwidth} 
    \begin{itemize}[noitemsep, leftmargin=*] 
        \item Dataset $S$ of size $T$
        \item Deletion requests $U$
        \item Deletion capacity $K > 0$
        \item Sampling parameter $0\leq\kappa\leq1$      
    \end{itemize}
\end{minipage} 

\vspace{2mm}

\Procedure{BBQSampler$(S, K, \kappa)$}{}
        \State Initialize: $\lambda = K, w_0=\Vec{0}, A_0 = \lambda I, b_0=\Vec{0},\mathcal{Q}=\emptyset$
        \For {each $t= 1, 2, \dots, T$}
        \If{$x_t^\top A_{t-1}^{-1} x_t > T^{-\kappa}$} 
                \State Query label $y_t$, and update 
                \State $\quad A_{t} \leftarrow A_{t-1} + x_t x_t^\top, b_{t} \leftarrow b_{t-1} + y_t x_t $
                \State $ \quad w_{t} \leftarrow A_{t}^{-1} b_{t}, \mathcal{Q} \leftarrow \mathcal{Q} \cup \{(x_t, y_t)\}$
        \Else
        \State Set $A_{t} \leftarrow A_{t-1}$, $b_{t} \leftarrow b_{t-1}$, $w_{t} \leftarrow w_{t-1}$ 
        \EndIf
        \EndFor 
        \State \textbf{return} $\mathcal{Q}, A_T, b_T, w_T$
        \EndProcedure
\vspace{1mm}

\Procedure{DeletionUpdate$(\mathcal{Q}, A, b, w, U)$}{} 
        \For {$(x,y) \in U$ such that $(x,y)\in \mathcal{Q}$}
        \State Update $\mathcal{Q} \leftarrow \mathcal{Q} \setminus \{x\}$
        \State Update $A \leftarrow A - xx^\top$, $b \leftarrow b - yx$ and $w \leftarrow A^{-1}b$ 
        \EndFor
        \State \textbf{return} $\mathcal{Q}, X, b, w$
        \EndProcedure 
        \vspace{1mm} 
 \State \textcolor{blue}{// Learn a predictor via selective sampling //} 

        \State $\mathcal{Q}, A, b, w \leftarrow$  \textsc{BBQSampler}$(S, \lambda, \kappa)$
    \vspace{1mm} 
         \State \textcolor{blue}{// Update the predictor for core set deletions //}

        \State $\mathcal{Q}, A, b, w \leftarrow$  \textsc{DeletionUpdate}$(\mathcal{Q}, A, b, w, U)$ %
        
        \State \textbf{return} $\mathrm{sign}(w^\top x)$
        
    \end{algorithmic}
    \caption{System-Aware Unlearning Algorithm for Linear Classification using Selective Sampling} \label{alg:bbq-unlearning}  
\end{algorithm}
\vspace{-5mm}

The monotonicity of the query condition of the \textsc{BBQSampler} stems from the fact that the query condition is only $x$-dependent and does not depend on the labels $y$ at all. We can interpret the query condition as testing if we have already queried many points that lie in the same direction as $x_t$, because if so, we can be fairly confident of the label of $x_t$. In particular, we have that if $x_t^\top A_t^{-1} x_t > T^{-\kappa}$ for some $t \in [T]$, then $x_t^\top A_{t \setminus x_j}^{-1} x_t > T^{-\kappa}$ for any $j \in [T]$. Intuitively, the monotonicity of the query condition follows from the fact that if a direction was not well queried before deletion, it will not be well queried after deletion. $\core(S) \setminus U$ only contains queried points, all of which will be re-queried after deletion. Thus, we do not need to re-execute the \textsc{BBQSampler} at the time of unlearning in order to determine the new set of queried points. We can simply remove the effect of $U$ from the predictor, and we only need to make an update for deletion requests in $U$ that are also in $\core(S)$.

We note that not every selective sampling algorithm has $\core(\core(S) \setminus U) = \core(S) \setminus U$. Various selective sampling algorithms, such as the ones from \citet{dekel2012selective} or \citet{sekhari2023selective}, use a query condition that depends on the labels $y$. Due to the noise in these $y$'s, $y$-dependent query conditions are not monotonic; points that were queried before deletion can become unqueried after deletion. This makes it computationally expensive to compute the core set after unlearning. We note that since the \textsc{BBQSampler} uses a $y$-independent query condition, the predictor before unlearning is slightly suboptimal in terms of excess risk. However, we are willing to tolerate a small increase in excess risk in order to unlearn efficiently. Additionally, it is unclear how much the excess risk of $y$-dependent selective sampling algorithms would suffer after unlearning. 

\textbf{\pref{alg:bbq-unlearning} is not a valid unlearning algorithm under the traditional definition (\pref{def:ep-del-unlearning}).} 
When a queried point is deleted, an unqueried point could become queried, therefore, $\core (S \setminus U) \neq \core (S) \setminus U$. Thus, under traditional unlearning, during \textsc{DeletionUpdate}, not only would we have to remove the effect of $U$, but we would also have to add in any unqueried points that would have been queried if $U$ never existed in $S$. It is computationally inefficient to determine which points would have been queried, and it is unnecessary from a privacy perspective. An attacker could never have known that such an unqueried point existed and should have become queried after deletion since it was never used or stored by the original model. This highlights the need for having our proposed definition of unlearning.

\begin{restatable}{theorem}{bbqregretbound}\label{thm:bbq-regret-bound}
    The memory required by \pref{alg:bbq-unlearning} is determined by the number of core set points, which is bounded by $N_T = O(dT^\kappa \log T)$. Furthermore, with probability \(1 - \delta\), the excess risk of the final predictor $\mathrm{sign}(w^\top x)$ returned by \pref{alg:bbq-unlearning} satisfies 
    \begin{align*}
        \Risk(w) &= O \left(\frac{N_T \log T + \log (1/\delta)}{T - T_{\bar{\varepsilon}} - N_T} \right)
    \end{align*}
    after unlearning up to
       \begin{align*}
        K = O \left( \frac{\bar{\varepsilon}^2 \cdot T^{\kappa}}{d \log T \cdot \log (1/\delta)} \right)
    \end{align*}
    many core set deletions, 
    where $T_{\varepsilon} = \sum_{t=1}^T \ind{|u^\top x_t| \leq \varepsilon}$, and $\bar{\varepsilon} > 0$ denotes the minimizing $\varepsilon$ in the regret bound in \pref{thm:bbq-orig-regret-bound}.
\end{restatable}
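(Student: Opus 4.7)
The plan is to combine three ingredients: a query-count bound that yields the memory bound $N_T = O(dT^\kappa \log T)$, the regret/risk guarantee from \pref{thm:bbq-orig-regret-bound} applied to \textsc{BBQSampler}, and the monotonicity identity $\core(\core(S) \setminus U) = \core(S) \setminus U$ from \pref{thm:monotonic-query-condition}, which lets us analyze the post-unlearning predictor as if it were produced by a fresh run of \textsc{BBQSampler} on a shorter stream.

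First I would bound the number of queried points. By the query rule, every $t$ added to $\mathcal{Q}$ satisfies $x_t^\top A_{t-1}^{-1} x_t > T^{-\kappa}$. Summing only over queried rounds, the elliptical potential lemma (the standard $\sum_t \min\{1, x_t^\top A_{t-1}^{-1} x_t\} \leq O(d \log T)$ bound for the Gram matrix initialized at $\lambda I$ with $\lambda = K$) gives $N_T \cdot T^{-\kappa} \leq O(d \log T)$, i.e.\ $N_T = O(dT^\kappa \log T)$. This bounds the storage required by \pref{alg:bbq-unlearning} since the system only stores $\mathcal{Q}$ together with $(A,b,w)$, all of whose sizes are controlled by $|\mathcal{Q}|$.

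Next I would derive the excess risk. I would apply \pref{thm:bbq-orig-regret-bound} to the run of \textsc{BBQSampler} that produces the post-unlearning state: by \pref{thm:monotonic-query-condition}, the predictor $w$ output after \textsc{DeletionUpdate} equals the predictor that \textsc{BBQSampler} would have produced on the reduced stream $\core(S) \setminus U$, with every point in that stream queried. An online-to-batch conversion (or the direct cumulative-risk form already present in \pref{thm:bbq-orig-regret-bound}) then converts the cumulative regret, which scales as $N_T \log T + \log(1/\delta)$, into an average excess risk over the ``confident'' rounds whose margin exceeds $\bar\varepsilon$ and whose label was not queried, namely $T - T_{\bar\varepsilon} - N_T$ rounds; this yields the claimed $\Risk(w) = O\bigl((N_T \log T + \log(1/\delta))/(T - T_{\bar\varepsilon} - N_T)\bigr)$ bound, choosing $\bar\varepsilon$ as the minimizer in \pref{thm:bbq-orig-regret-bound}.

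For the deletion capacity, I would track how the same bound degrades when up to $K$ core-set points are removed. Deleting a queried pair $(x,y)$ subtracts $xx^\top$ from $A$, which can inflate $x_t^\top A^{-1} x_t$ by a factor controlled by the regularizer $\lambda = K$. I would use a Sherman--Morrison style perturbation on $A^{-1}$ to show that, as long as $K \leq O(\bar\varepsilon^2 T^\kappa/(d \log T \cdot \log(1/\delta)))$, the confidence bounds underlying \pref{thm:bbq-orig-regret-bound} survive the removal of $K$ rank-one updates, so the post-unlearning regret inherits the same high-probability estimate up to constants. The two bounds combined then yield the stated deletion capacity.

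The main obstacle is the third step: cleanly translating the effect of removing up to $K$ rank-one terms from $A$ into a uniform deterioration of the query-confidence intervals used in the original regret proof, without re-deriving \pref{thm:bbq-orig-regret-bound} from scratch. The correct choice of the regularization constant $\lambda = K$ is what makes this possible, since it ensures $A$ remains positive definite after the updates in \textsc{DeletionUpdate}, and it sets the scale at which the perturbation argument couples $\bar\varepsilon$ to the allowed $K$.
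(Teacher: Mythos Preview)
Your memory bound is fine, and you correctly identify the Sherman--Morrison perturbation as the right tool for the deletion-capacity step. But the excess-risk argument has a genuine gap.

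You propose to analyze the post-unlearning predictor as the output of a fresh \textsc{BBQSampler} run on the stream $\core(S)\setminus U$ and then apply the regret bound from \pref{thm:bbq-orig-regret-bound} followed by online-to-batch. The trouble is that this fresh run has only $N_T-K$ rounds and, by the monotonicity you invoke, \emph{every} point in it is queried. There are no unqueried ``confident'' rounds in that stream, so there is nothing to divide by when you attempt an online-to-batch conversion; you would get a bound with denominator $N_T-K$, not $T-T_{\bar\varepsilon}-N_T$. The $T-T_{\bar\varepsilon}-N_T$ unqueried points you want to average over live in the \emph{original} stream $S$, and the post-unlearning run never sees them. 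A regret bound on one stream cannot be cashed in as an average over points from a different, larger stream.

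What the paper actually does is a \emph{sample compression} generalization argument, not online-to-batch. The logic is: (i) the pre-deletion predictor $w_T$ agrees with the Bayes predictor on every unqueried point outside the $\bar\varepsilon$-margin, with margin $\bar\varepsilon/2$; (ii) a stability bound (the Sherman--Morrison computation you allude to, carried out precisely) shows $|w_T^\top x - w_{T\setminus U}^\top x| \le O\bigl(\sqrt{K}\, T^{-\kappa/2}\sqrt{d\log T\log(1/\delta)}\bigr)$ on unqueried $x$, so for $K$ up to the stated threshold the post-deletion predictor still classifies those points correctly; (iii) since $w_{T\setminus U}$ depends only on the $\le N_T$ points in $\core(S)\setminus U$, the unqueried points of $S$ form a genuine held-out set of size $T-T_{\bar\varepsilon}-N_T$, and a union bound over the $\le T^{N_T}$ possible compression sets converts the empirical zero--one loss on that set into the stated excess-risk bound. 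Step (iii) is where the $N_T\log T$ in the numerator and the $T-T_{\bar\varepsilon}-N_T$ in the denominator come from, and it is absent from your plan.
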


We remark that \(T_\varepsilon\) represents the number of points where even the Bayes optimal predictor is unsure of the label, which we expect to be small in realistic scenarios. We give a proof sketch of the theorem (full proof in \pref{app:bbq-unlearn-proofs}). 
\begin{proof}[Proof Sketch]
    The bound on the number of points queried by \textsc{BBQSampler} is well known and can be derived using standard analysis for selective sampling algorithms from \citet{cesabianchi2009selective} and \citet{dekel2012selective} (see \pref{thm:bbq-orig-regret-bound} for details). The number of queries made by the \textsc{BBQSampler} is exactly the size of the core set. 

    To bound the excess risk, we first show that the final predictor $\hat{w} = w_T$ from the \textsc{BBQSampler} before unlearning agrees with the Bayes optimal predictor on the classification of all the unqueried points outside the $T_{\bar{\varepsilon}}$ margin points. Let $\Tilde{w}$ be the predictor after $K$ core set deletions. We want to ensure that the signs of $\hat{w}$ and $\Tilde{w}$ remain the same for all unqueried points. We do so by first demonstrating that $\hat{w}$ exhibits stability \citep{bousquet2002stability, shalev-shwartz2010learnability} on unqueried points. For any unqueried point $x$, $|\hat{w}^\top x - \Tilde{w}^\top x| < \sqrt{K \cdot d \log T \cdot \log (1/\delta) \cdot T^{-\kappa}}$. Then we show that $\hat{w}$ has a ${\bar{\varepsilon}}/{2}$ margin on the classification of every unqueried point. Putting these together, we show that for up to $K \leq O \left( \frac{\bar{\varepsilon}^2 \cdot T^{\kappa}}{d \log T \cdot \log (1/\delta)} \right)$ deletions, we can ensure that the signs of $\hat{w}$ and $\Tilde{w}$ agree on unqueried points. Thus, after unlearning, we can maintain correct classification on unqueried points with respect to the Bayes optimal predictor. 
    
    We cannot make any guarantees on the $N_T$ queried points and the $T_{\bar{\varepsilon}}$ margin points, so we assume full classification error on those points. Finally, using generalization bounds for sample compression \citep{lecturenotes-tewari}, we convert the empirical classification loss to an excess risk bound. 
\end{proof}

\textbf{Memory Required for Unlearning.} The memory required for unlearning is exactly the number of core set points, $O(dT^\kappa \log T)$, and the size of the model, $O(d^2)$, which is significantly less memory than storing the entirety of $S$ of size $T$.  
Under system-aware unlearning, we obtain the first exact unlearning algorithm for linear classification requiring memory sublinear in the size of the dataset.

\textbf{Deletion Capacity and Excess Risk.} 
\pref{thm:bbq-regret-bound} bounds the core set deletion capacity. Since $\kappa$ is a free parameter, we can tune it to increase the core set deletion capacity at the cost of increasing the excess risk after deletion. We are trading off deletion capacity at the cost of performance.

\vspace{-1mm}
\subsection{Expected Deletion Capacity} \label{sec:exp-del-cap}
Notice that the deletion capacity bound in \pref{thm:bbq-unlearning} only applies to core set deletions and does not account for samples in $U$ that are outside $\core(S)$, and these can be deleted for free. Thus, we have a much larger deletion capacity than what is implied by the bound in \pref{thm:bbq-unlearning}. Assume that deletions are drawn without replacement from $\mu: \mathcal{X} \rightarrow [0,1]$, a probability weight vector over points in $S$. This implies that the probability of \(x\) requesting for deletion, i.e. $\mu(x)$, only depends on $x$ and not on its index within \(S\) or on other points in $S$. This assumption is useful for capturing scenarios where the users make requests for deletion solely based on their own data and have no knowledge of where in the sample they appear. Let \(K_{\textsc{total}}\) be the total number of deletions we can process under $\mu$ before exhausting the core set deletion capacity $K$. 
\begin{restatable}{theorem}{expdelcapw}
\label{thm:exp-del-cap-w}
Consider any core-set algorithm \(A\). Let $\pi$ denote denote a uniformly random permutation of the samples in \(S\), and let $\sigma$ be a sequence of deletion requests samples from \(\mu\), without replacement. Further, let \(K_{\textsc{csd}}\) denote the number of core set deletions within the first \(K_{\textsc{total}}\) deletion requests, then for any \(K \geq 1\), 
    \begin{align*}
        \mathrm{Pr}_{S, \pi, \sigma}(K_{\textsc{csd}} > K) \leq \frac{1}{K} \mathbb{E}_{S}\bigg [&\sum_{t=1}^T \mathbb{E}_{\pi} [\ind{x_t \in \core_A(\pi(S))}] \\
        &\cdot \sum_{k=1}^{K_{\textsc{total}}} \mathbb{E}_{\sigma} [\ind{x_t = x_{\sigma_k}}] \bigg ],
    \end{align*} 
    where $\core_A(\pi(S))$ denotes the coreset resulting from running $A$ on the permuted dataset $\pi(S)$. Instantiating the above bound for \pref{alg:bbq-unlearning}, we have
    \begin{align*}
        \mathrm{Pr}_{S, \pi, \sigma}(K_{\textsc{csd}} > K) &\leq \tfrac{K_{\textsc{total}} \cdot T^\kappa}{K} \cdot \mathbb{E}_S [\mathbb{E}_{x \sim \mu} [x^\top \overline{M} x]],
    \end{align*} 
    where $\overline{M} \ldef{} \mathbb{E}_{\pi} [ \frac{1}{T} \sum_{s=1}^T A_{s-1}^{-1} ]$ and \(\kappa \in (0, 1)\) is the sampling parameter from \pref{alg:bbq-unlearning}. 
\end{restatable}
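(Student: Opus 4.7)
The plan is to reduce the tail probability to an expectation bound via Markov's inequality, then decompose the expectation using the natural independence structure between the randomness in the permutation $\pi$ and the deletion sequence $\sigma$ given $S$.

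First, I would apply Markov's inequality: $\mathrm{Pr}(K_{\textsc{csd}} > K) \leq \mathbb{E}[K_{\textsc{csd}}]/K$. Since $K_{\textsc{csd}}$ counts how many of the first $K_{\textsc{total}}$ deletion requests lie inside the core set, I would rewrite it as the double sum
\begin{align*}
K_{\textsc{csd}} = \sum_{t=1}^T \sum_{k=1}^{K_{\textsc{total}}} \ind{x_t = x_{\sigma_k}} \cdot \ind{x_t \in \core_A(\pi(S))}.
\end{align*}
Taking expectation and exploiting that, conditional on $S$, the permutation $\pi$ (used internally by $A$) is independent of the deletion sequence $\sigma$ (chosen by the users from $\mu$), the joint expectation factorizes:
\begin{align*}
\mathbb{E}[K_{\textsc{csd}}] = \mathbb{E}_S \left[\sum_{t=1}^T \mathbb{E}_\pi[\ind{x_t \in \core_A(\pi(S))}] \cdot \sum_{k=1}^{K_{\textsc{total}}} \mathbb{E}_\sigma[\ind{x_t = x_{\sigma_k}}]\right].
\end{align*}
Combined with Markov, this gives the first (algorithm-agnostic) bound in the theorem.

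For the second (BBQSampler-specific) bound, I would instantiate the two factors. For the deletion-side factor, since $\sigma$ is sampled without replacement from $\mu$, a union bound over the $K_{\textsc{total}}$ draws yields $\sum_{k=1}^{K_{\textsc{total}}} \mathbb{E}_\sigma[\ind{x_t = x_{\sigma_k}}] \leq K_{\textsc{total}} \cdot \mu(x_t)$. For the query-side factor, the key fact is that the query condition of \textsc{BBQSampler} has the form $x_t^\top A_{\pi^{-1}(x_t)-1}^{-1} x_t > T^{-\kappa}$; since $x_t^\top A^{-1} x_t \geq 0$, Markov's inequality gives
\begin{align*}
\mathbb{E}_\pi[\ind{x_t \in \core_A(\pi(S))}] \leq T^\kappa \cdot \mathbb{E}_\pi\bigl[x_t^\top A_{\pi^{-1}(x_t)-1}^{-1} x_t\bigr].
\end{align*}

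The final step is to assemble these factors and relate the per-point average $\mathbb{E}_\pi[x_t^\top A_{\pi^{-1}(x_t)-1}^{-1} x_t]$ to $x_t^\top \overline{M} x_t$. Concretely, after plugging the two factor bounds in, one obtains (for any fixed $S$) a term of the form $\mathbb{E}_\pi\bigl[\sum_{t=1}^T \mu(x_t)\, x_t^\top A_{\pi^{-1}(x_t)-1}^{-1} x_t\bigr]$, which by the reindexing $s = \pi^{-1}(x_t)$ equals $\mathbb{E}_\pi\bigl[\sum_{s=1}^T \mu(x_{\pi(s)}) \, x_{\pi(s)}^\top A_{s-1}^{-1} x_{\pi(s)}\bigr]$. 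I would then bring $\mu(\cdot)$ inside and use that $A_{s-1}$ depends only on $\pi(1{:}s{-}1)$, so that conditioning on the prefix, $\pi(s)$ is uniform over the remaining points; this lets me exchange the expectation in $\pi(s)$ with the $\mu$-average and then upper-bound by the full $\mu$-average over $S$. Summing over $s$ and dividing by $T$ recovers exactly $\overline{M} = \mathbb{E}_\pi[\tfrac{1}{T}\sum_{s=1}^T A_{s-1}^{-1}]$, yielding $\mathbb{E}_{x\sim\mu}[x^\top \overline{M} x]$.

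The main obstacle I anticipate is this last reindexing / symmetrization step: conditioning on the position of $x_t$ in $\pi$ perturbs the distribution of the earlier points (it removes $x_t$ from the pool), and one must argue carefully that after taking the $\mu$-weighted average over $t$, the per-point conditional matrix can be cleanly replaced by the unconditional average $\overline{M}$. The cleanest route, as sketched above, is to avoid comparing matrices point-by-point and instead bound the $\mu$-weighted sum uniformly by a full $\mu$-average over $S$ at each position $s$ before averaging over positions.
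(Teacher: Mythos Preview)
Your argument for the first, algorithm-agnostic bound is correct and matches the paper exactly: Markov on $K_{\textsc{csd}}$, rewrite as a double sum, and factorize using the conditional independence of $\pi$ and $\sigma$ given $S$.

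For the \textsc{BBQSampler}-specific bound, however, your ``union bound'' step
\[
\sum_{k=1}^{K_{\textsc{total}}} \mathbb{E}_\sigma[\ind{x_t = x_{\sigma_k}}] \;\leq\; K_{\textsc{total}} \cdot \mu(x_t)
\]
is false for weighted sampling without replacement: the marginal probability $\Pr[\sigma_k = x_t]$ is \emph{not} bounded by $\mu(x_t)$ for $k>1$. (With two items of weights $0.9$ and $0.1$, the light item sits at position $2$ with probability $0.9$; summing over $k=1,2$ gives $1$, not $\leq 0.2$.) The paper does not bound this per-point. Instead it sets $\nu(x_t) := \mathbb{E}_\pi[\ind{x_t \in C_\pi}]$, collapses the first bound back to $\tfrac{1}{K}\,\mathbb{E}_{S,\sigma}\bigl[\sum_{k=1}^{K_{\textsc{total}}} \nu(x_{\sigma_k})\bigr]$, and then invokes a result of Ben-Hamou--Peres--Salez on weighted sampling without replacement: if $\mu$ and $\nu$ are co-monotone, i.e.\ $\mu(x)>\mu(x') \Rightarrow \nu(x)\geq\nu(x')$, then the without-replacement expectation is dominated by the with-replacement one, yielding $\leq \tfrac{K_{\textsc{total}}}{K}\,\mathbb{E}_S\bigl[\mathbb{E}_{x\sim\mu}[\nu(x)]\bigr]$. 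The paper explicitly adds this co-monotonicity as an assumption, arguing it is the adversarial case (points most likely to be deleted are most likely to be in the core set). This lemma and assumption are the key ingredients missing from your plan.

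This route also dissolves what you flagged as the main obstacle. After the Ben-Hamou step, the $\mu$-average is already sitting outside and is fully decoupled from $\pi$, so the paper simply bounds $\nu(x)$ pointwise by $T^\kappa \cdot x^\top \overline{M} x$ (Markov on the query threshold, then average over the position of $x$ in $\pi$); your reindexing/symmetrization machinery is never needed. The paper, like you, is informal about replacing $\mathbb{E}_\pi[A_{s-1}^{-1}\mid \pi(s)=x]$ by the unconditional $\mathbb{E}_\pi[A_{s-1}^{-1}]$, so the conditioning issue you worried about is not what separates the two arguments---the without-replacement sampling lemma is.
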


Given the deletion distribution $\mu$, \pref{thm:exp-del-cap-w} can be used to bound the total number of deletions $K_{\textsc{total}}$ that \pref{alg:bbq-unlearning} can tolerate while ensuring that the probability of exhausting the core set deletion capacity $K$ is small. This is done by bounding $\mathbb{E}_S [\mathbb{E}_{x \sim \mu} [x^\top \overline{M} x]]$, given $\mu$. For a deletion $x$ drawn from $\mu$, $\mathbb{E}_S [\mathbb{E}_{x \sim \mu} [x^\top \overline{M} x]]$ can be interpreted as the expected value of the query condition $x^\top A_t^{-1} x$ when \pref{alg:bbq-unlearning} encounters $x$ during learning. The bound on the total number of deletions $K_{\textsc{total}}$ depends inversely on $\mathbb{E}_S [\mathbb{E}_{x \sim \mu} [x^\top \overline{M} x]]$. When $\mathbb{E}_S [\mathbb{E}_{x \sim \mu} [x^\top \overline{M} x]]$ is small, $x$ is unlikely to be queried, and thus, \pref{alg:bbq-unlearning} can tolerate a large number of deletions $K_{\textsc{total}}$ before exhausting its core set deletion capacity $K$. Furthermore, the query condition decreases as it encounters and queries more points. Thus, $\mathbb{E}_S [\mathbb{E}_{x \sim \mu} [x^\top \overline{M} x]]$ is decreasing as the sample size $T$ increases, and we would expect it to be small for large $T$. $x^\top \overline{M} x$ is maximized when $x$ lies in a direction which does not occur very often. Deletion distributions which place large weight on uncommon directions in $S$ will maximize $\mathbb{E}_S [\mathbb{E}_{x \sim \mu} [x^\top \overline{M} x]]$ and lead to smaller $K_{\textsc{total}}$. %

\begin{restatable}{lemma}{expdelcapwunif}
\label{lem:exp-del-cap-unif}
Let the deletion distribution $\mu$ be the uniform distribution. In this case, the  bound in \pref{thm:exp-del-cap-w} implies that 
we can process a total of $K_{\textsc{total}} = \frac{c \cdot K \cdot T}{dT^{\kappa} \log T}$
deletions while ensuring that the probability of exhausting the core set deletion capacity $K$ is at most $c$.
\end{restatable}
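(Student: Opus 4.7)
The plan is to apply the \emph{first} (combinatorial) bound from \pref{thm:exp-del-cap-w}, rather than the $\overline{M}$-based second form, together with the worst-case core-set size bound from \pref{thm:bbq-regret-bound}. Under uniform deletions the per-step marginal $\Pr_\sigma[\sigma_k = t] = 1/T$ simplifies the first bound dramatically, reducing the whole problem to estimating $\mathbb{E}_\pi[|\core_A(\pi(S))|]$, which is already controlled by the deterministic bound $N_T = O(dT^\kappa \log T)$ coming from the standard elliptic-potential argument for \textsc{BBQSampler}.

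\textbf{Key steps.} First, since $\mu$ is uniform on $S$ and $\sigma$ is drawn without replacement, marginal symmetry of $\sigma$ gives $\mathbb{E}_\sigma[\ind{x_t = x_{\sigma_k}}] = 1/T$ for every $k \in [K_{\textsc{total}}]$ and every $t \in [T]$; hence $\sum_{k=1}^{K_{\textsc{total}}} \mathbb{E}_\sigma[\ind{x_t = x_{\sigma_k}}] = K_{\textsc{total}}/T$ is independent of $t$. Pulling this factor out of the $t$-sum in the first bound of \pref{thm:exp-del-cap-w} and applying linearity of expectation yields
\begin{align*}
\Pr_{S, \pi, \sigma}(K_{\textsc{csd}} > K) \;\leq\; \frac{K_{\textsc{total}}}{K\,T}\,\mathbb{E}_S\!\left[\mathbb{E}_\pi\bigl[\,|\core_A(\pi(S))|\,\bigr]\right].
\end{align*}
Next, I invoke the core-set size bound from \pref{thm:bbq-regret-bound}, namely $|\core_A(\pi(S))| = N_T = O(dT^\kappa \log T)$ deterministically (this bound comes from an elliptic-potential argument on the queried subsequence and is therefore invariant under the permutation $\pi$). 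Substituting and requiring the resulting bound to be at most $c$, then solving for $K_{\textsc{total}}$, gives the claimed $K_{\textsc{total}} = cKT / (dT^\kappa \log T)$.

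\textbf{Main obstacle.} The only genuine subtlety is picking the right form of \pref{thm:exp-del-cap-w}. Going through the $\overline{M}$-based form would require a tight $\tilde O(d/T)$ bound on $\mathbb{E}_S[\mathbb{E}_x[x^\top \overline{M} x]]$, which in fact fails on degenerate inputs: for example, when all samples in $S$ are concentrated along a single direction, one can compute $x^\top \overline{M} x \asymp T^{-\kappa}$, which is much larger than the $\tilde O(1/T)$ rate that the lemma would need. The reason is that the pointwise relaxation $\ind{\text{queried}} \leq T^\kappa \cdot x^\top A^{-1} x$ is very loose for early-queried points whose leverage scores are $\Theta(1/K) \gg T^{-\kappa}$. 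The first combinatorial form sidesteps this entirely, since it sums query probabilities directly and lets us plug in the worst-case $N_T$ bound, which is already tight up to logarithmic factors.
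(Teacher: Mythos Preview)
Your argument is correct and, interestingly, takes a different route from the paper's own proof. The paper proceeds via the \emph{second} ($\overline{M}$-based) bound of \pref{thm:exp-del-cap-w}: it writes $\mathbb{E}_{x\sim\text{unif}}[x^\top \overline{M} x]=\tfrac{1}{T}\sum_t x_t^\top \overline{M} x_t$ and then asserts this is at most $d\log T/T$ by appealing to the elliptic-potential identity $\sum_t x_t^\top A_{t-1}^{-1} x_t \le d\log T$, after which plugging into $\tfrac{K_{\textsc{total}} T^\kappa}{K}\cdot \tfrac{d\log T}{T}\le c$ yields the lemma. You instead stay with the first, purely combinatorial bound, collapse the uniform-$\mu$ marginals $\mathbb{E}_\sigma[\ind{x_t=x_{\sigma_k}}]=1/T$, and reduce everything to the deterministic core-set size bound $|\core_A(\pi(S))|\le O(dT^\kappa\log T)$ from \pref{thm:bbq-regret-bound}.

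Your route is more elementary and arguably more robust. The step you flag as problematic in the $\overline{M}$ form is a real concern: the elliptic-potential lemma controls $\sum_t Z_t\, x_t^\top A_{t-1}^{-1} x_t$ along the \emph{queried} subsequence, not $\sum_t x_t^\top \overline{M} x_t$, and your single-direction example shows the latter can be $\Theta(T^{1-\kappa})$ rather than $O(d\log T)$, so the paper's one-line justification needs additional argument as written. By contrast, your use of the first bound sidesteps this entirely and needs only the worst-case $N_T$ estimate, which is permutation-invariant. The price you pay is that your argument is specialized to uniform $\mu$ (where the $t$-independence of $\sum_k \mathbb{E}_\sigma[\ind{x_t=x_{\sigma_k}}]$ is what makes the factorization work), whereas the $\overline{M}$ form is what the paper intends for general $\mu$; but for this lemma that specialization is exactly what is called for.
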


\vspace{-2mm}
\subsection{Expected Deletion Time} \label{sec:exp-del-time}
We can make a similar argument for the deletion time. At the time of unlearning, we only need to make an update when deleting a core set point. For all other points, there is no computation time for unlearning. Given \(K_{\textsc{total}}\), which can be derived using \pref{thm:exp-del-cap-w}, we can give an expression for the expected time for deletion.

\begin{restatable}{theorem}{expdeltime}\label{thm:exp-del-time}
    For a deletion distribution $\mu$, if a core set algorithm $A$ can tolerate up to \(K_{\textsc{total}}\) deletions before exhausting the core set deletion capacity $K$,
    \begin{align*}
        \mathbb{E}[\text{time } &\text{per deletion}] \leq \tfrac{K}{K_{\textsc{total}}} \times \{\text{time per core set deletion}\}.
    \end{align*}
\end{restatable}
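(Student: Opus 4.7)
The plan is to exploit the core-set structure: for any core-set algorithm $A$, a deletion request contributes nonzero computation time only when it falls inside the current core set, in which case it contributes exactly the ``time per core set deletion.'' Thus, writing $\mathbbm{1}_t$ for the indicator that the $t$-th deletion is a core set deletion, the total cost across $K_{\textsc{total}}$ requests is $\sum_{t=1}^{K_{\textsc{total}}} \mathbbm{1}_t \cdot \{\text{time per core set deletion}\} = K_{\textsc{csd}} \cdot \{\text{time per core set deletion}\}$, where $K_{\textsc{csd}} \ldef \sum_t \mathbbm{1}_t$ denotes the number of core set deletions among these $K_{\textsc{total}}$ requests.

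Next, I would invoke the hypothesis: since $K_{\textsc{total}}$ is, by assumption, the number of deletion requests that the algorithm tolerates before exhausting the core set deletion capacity $K$, we have $\mathbb{E}[K_{\textsc{csd}}] \leq K$. This follows directly from \pref{thm:exp-del-cap-w}, either by defining $K_{\textsc{total}}$ via an in-expectation version of that bound, or by applying Markov's inequality to its high-probability tail bound and absorbing the constant into $K_{\textsc{total}}$. Taking expectation of the total cost and dividing by $K_{\textsc{total}}$, linearity of expectation then gives
\begin{align*}
    \mathbb{E}[\text{time per deletion}] = \frac{\mathbb{E}[\text{total time}]}{K_{\textsc{total}}} \leq \frac{K}{K_{\textsc{total}}} \cdot \{\text{time per core set deletion}\},
\end{align*}
which is the claimed bound.

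The only (minor) obstacle is reconciling the probabilistic statement of \pref{thm:exp-del-cap-w} with the in-expectation cost claim. The cleanest reading is to absorb the tail-probability constant $c$ into the definition of $K_{\textsc{total}}$, at which point the proof reduces essentially to one application of linearity of expectation. No deeper combinatorial or probabilistic argument is needed here; all the heavy lifting---namely bounding the expected number of core set deletions within a given horizon---has already been carried out in \pref{thm:exp-del-cap-w}.
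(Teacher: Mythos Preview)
Your proposal is correct. The paper actually states \pref{thm:exp-del-time} without an explicit proof, treating it as an immediate consequence of the core-set structure together with \pref{thm:exp-del-cap-w}; your derivation---writing the total cost as $K_{\textsc{csd}}$ times the per-core-set-deletion cost, bounding $\mathbb{E}[K_{\textsc{csd}}] \leq K$ via the hypothesis, and dividing by $K_{\textsc{total}}$---is exactly the intended one-line argument.
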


Updating the predictor of \pref{alg:bbq-unlearning} for a core set deletion takes $O(d^2)$ time, using the Sherman-Morrison update \citep{Hager1989UpdatingTI}. Thus, for \pref{alg:bbq-unlearning}, under a uniform deletion distribution, we have $\mathbb{E}[\text{time per deletion}] \leq \tfrac{d^3 T^\kappa \log T}{T}$, by plugging in $K_{\textsc{total}}$ from \pref{lem:exp-del-cap-unif}. For large $T$, this is a significant improvement over an exact traditional unlearning algorithm that requires $O(d^2)$ time for each deletion.

\begin{remark}
    For large $d$, the update time can be replaced by a quantity that depends on the eigenspectrum of the data’s Gram matrix. Furthermore, since \pref{alg:bbq-unlearning} updates an ERM on $\core(S)$ to an ERM on $\core(S) \setminus U$, we can further speed up the update time for a core set deletion using gradient descent, which takes $O(d)$ time per update.
\end{remark}  

\textbf{Empirical Evaluation.} In \pref{app:experiments}, we compare the performance of \pref{alg:bbq-unlearning} to some common unlearning procedures, from \citet{bourtoule2021machine} and \citet{sekhari2021unlearning}, and exact retraining. Our experiments show that for linear classification, \pref{alg:bbq-unlearning} can unlearn significantly faster with significantly less memory resources compared to other methods while maintaining comparable accuracy.

\vspace{-2mm}
\section{Extension to Efficient Unlearning for Classification with General Function Classes} \label{sec:gen-bbq-unlearning}

Analyzing the structure of \pref{alg:bbq-unlearning}, we can begin to identify a general framework for unlearning for classification beyond linear functions. We select a core set using a monotonic selective sampler and perform regression on the core set to obtain our learned model. Then, during unlearning, if a core set point is deleted, we perform regression on the remaining core set points to obtain our unlearned model.

\begin{algorithm}[h]
    \begin{algorithmic}[1]
\Require  
\begin{minipage}[t] {0.8\textwidth} 
    \begin{itemize}[noitemsep, leftmargin=*] 
        \item Dataset $S$ of size $T$
        \item Deletion requests $U$
        \item Function class $\mathcal{F}$   
        \item \textsc{GeneralBBQSampler} from \pref{app:gen-bbq-sampler} 
    \end{itemize}
\end{minipage} 

\vspace{2mm}

\Procedure{DeletionUpdate$(\mathcal{Q}, U, \mathcal{F})$}{} 
        \If{$U \cap \mathcal{Q} \neq \emptyset$}
        \State Update $\mathcal{Q} \leftarrow \mathcal{Q} \setminus U$
        \State $\hat{f}=\mathrm{argmin}_{f\in\mathcal{F}} \sum_{(x_i, y_i) \in \mathcal{Q}}\left(\frac{1+y_i}2-f(x_i)\right)^2$
        \State \textbf{return} $\mathcal{Q}, \hat{f}$
        \EndIf
        \EndProcedure 
        \vspace{1mm} 
\State \textcolor{blue}{// Learn a predictor via selective sampling //} 

\State $\mathcal{Q}, \hat{f}$ $\leftarrow$ \textsc{GeneralBBQSampler}$(S, \mathcal{F})$

\vspace{1mm} 
\State \textcolor{blue}{// Update the predictor for core set deletions //} 

\State $\mathcal{Q}, \hat{f}$ $\leftarrow$ \textsc{DeletionUpdate}$(\mathcal{Q}, U, \mathcal{F})$
        
        \State \textbf{return} $\mathrm{sign}(\hat{f}(x) - 1/2)$
        
    \end{algorithmic}
    \caption{System-Aware Unlearning Algorithm for General Classification using Selective Sampling} \label{alg:gen-bbq-unlearning}  
\end{algorithm}

\textbf{Assumptions.} We consider the problem of binary classification with a general model class. Let $x \in \mathcal{X}$ and $y \in \{+1, -1\}$. We are given a class of models $\mathcal{F} = \{f : \mathcal{X} \rightarrow [0, 1]\}$, and we assume that there exists a $f^* \in \mathcal{F}$ such that $\mathbb{E}[y_t \mid{} x_t] = f^*(x_t)$, where the Bayes optimal predictor is $\mathrm{sign}(f^*(x_t) - 1/2)$. We define the excess risk for a hypothesis $\hat{f}$ according to the \(0\)-\(1\) loss as $\Risk(\hat{f}) \ldef{} \mathop{\mathbb{E}}_{(x,y) \sim \mathcal{D}} [\ind{\mathrm{sign} (\hat{f}(x) - \tfrac{1}{2}) \neq y} $ $-\ind{\mathrm{sign} (f^*(x) - \tfrac{1}{2}) \neq y}]$.

It turns out that for a general $\mathcal{F}$, we can construct a version of the \textsc{BBQSampler}, as shown by \citet{gentile2022fastratespoolbasedbatch}. The \textsc{GeneralBBQSampler} is described in \pref{app:gen-bbq-sampler}.

\begin{restatable}{theorem}{genbbqunlearning}\label{thm:gen-bbq-unlearning}
    Given the set \(S\) and deletion requests $U$, let $\core(S)$ denote the subset of points for which the labels were queried by \textsc{GeneralBBQSampler} in \pref{alg:gen-bbq-unlearning}, and let $A(S, U)$ denote the unlearned model. Then, \pref{alg:gen-bbq-unlearning} is an exact system-aware unlearning algorithm with $S' = \core(S)$ and state-of-system $\state_A(S, U) = (A(S, U), \core(S) \setminus U)$.
\end{restatable}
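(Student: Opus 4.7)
The plan is to mirror the strategy used for \pref{thm:bbq-unlearning} and reduce the claim to two ingredients: (i) a monotonicity property of the query condition of the \textsc{GeneralBBQSampler} that is analogous to \pref{thm:monotonic-query-condition}, and (ii) an application of the core-set reduction \pref{thm:core-set-unlearn}. Concretely, I will argue that \pref{alg:gen-bbq-unlearning} is an instance of the generic scheme $A_{\textsc{CS}}$ from \pref{sec:core-set}, where the underlying traditional unlearning algorithm $A_{\textsc{un}}$ is ``retrain the square-loss ERM over $\mathcal{F}$ on whatever labeled set is currently retained.'' Since retraining from scratch trivially satisfies traditional $(0,0)$-unlearning, \pref{thm:core-set-unlearn} then delivers exact system-aware unlearning with $S' = \core(S)$, provided the mapping $\core$ induced by the \textsc{GeneralBBQSampler} is idempotent under deletion.

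First I would establish monotonicity: for every $S$ and every $U\subseteq S$, $\core(\core(S)\setminus U) = \core(S)\setminus U$. The key observation, inherited from the construction of \citet{gentile2022fastratespoolbasedbatch}, is that the sampler's decision to query $x_t$ depends only on an $x$-side uncertainty functional computed from $\mathcal{F}$ and the previously \emph{queried} covariates (a version-space width or its computational surrogate), and not on the observed labels. If one deletes some previously queried points, this only shrinks the set of queried covariates and hence (weakly) enlarges the version space, which can only (weakly) increase the uncertainty of every remaining point. Thus every point that triggered a query in the original run still triggers a query in the deletion-modified run restricted to $\core(S)\setminus U$, yielding $\core(\core(S)\setminus U) = \core(S)\setminus U$. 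This is the direct generalization of the linear-algebraic monotonicity of $x^\top A^{-1} x$ used in \pref{thm:monotonic-query-condition}.

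Next I would verify the state-of-system identity. By \pref{alg:gen-bbq-unlearning}, after processing deletions the algorithm retains exactly $\core(S)\setminus U$ in memory and outputs $\hat f$ which is the square-loss ERM over $\mathcal{F}$ on $\core(S)\setminus U$. By the idempotence established above, running \textsc{GeneralBBQSampler} on $\core(S)\setminus U$ from scratch queries exactly the same set $\core(S)\setminus U$ and produces the same ERM $\hat f$. Hence as distributions (in fact pointwise),
\begin{equation*}
\state_A(S,U) = \bigl(A(S,U),\, \core(S)\setminus U\bigr) = \state_A(\core(S)\setminus U,\, \emptyset),
\end{equation*}
so both indistinguishability conditions of \pref{def:our-definition} hold with $\varepsilon = \delta = 0$ and $S' = \core(S)$.

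The main obstacle is the monotonicity step: although for the linear \textsc{BBQSampler} it is transparent from the Sherman--Morrison structure that $x^\top A^{-1} x$ only grows when labeled samples are removed, for a general function class one must argue that the \textsc{GeneralBBQSampler}'s query rule (as defined in \pref{app:gen-bbq-sampler}) is (a) label-independent and (b) monotone under removal of previously queried points. I would verify both properties by direct inspection of the rule, relying on the fact that version-space-style uncertainty and its empirical proxies are non-increasing in the set of constraints (queried labels), and hence the corresponding query indicator is non-decreasing under deletions. Once this is in hand, the rest of the proof is a mechanical application of \pref{thm:core-set-unlearn}.
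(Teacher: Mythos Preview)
Your proposal is correct and follows essentially the same route as the paper: reduce the claim to the monotonicity property $\core(\core(S)\setminus U)=\core(S)\setminus U$ (which the paper states and proves separately as \pref{thm:gen-monotonic}), and then verify directly that $\state_A(S,U)=\state_A(\core(S)\setminus U,\emptyset)$ because both sides store $\core(S)\setminus U$ together with the square-loss ERM over that set. One cautionary note: your sketch of the monotonicity step (``uncertainty only grows under deletion'') is morally right, but the \textsc{GeneralBBQSampler} is a \emph{multi-stage} procedure with per-stage pools $\mathcal{P}_\ell$, per-stage thresholds $\varepsilon_\ell$, and an argmax selection rule, so the actual argument requires tracking which stage the deleted point lived in and verifying that the selection order and stage boundaries are preserved on $\mathcal{Q}\setminus\{x_j\}$; the paper's proof of \pref{thm:gen-monotonic} does exactly this, and it is a bit more delicate than the one-line Sherman--Morrison monotonicity of the linear case.
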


Once again, the proof relies on the monotonicity of the \textsc{GeneralBBQSampler}.

\begin{restatable}{theorem}{genbbqmonotonic} \label{thm:gen-monotonic}
    For any dataset $S$, \pref{alg:gen-bbq-unlearning} has the property that  for all  $U \subseteq S$, $\core(\core(S) \setminus U) = \core(S) \setminus U$.
\end{restatable}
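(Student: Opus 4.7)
The plan is to mirror the proof of \pref{thm:monotonic-query-condition} for the linear case, exploiting the fact that the \textsc{GeneralBBQSampler} of \citet{gentile2022fastratespoolbasedbatch} has a query condition that depends only on the feature vectors $x$ and on the set of previously queried labeled points, and that this condition is monotonic with respect to the inclusion of prior queries. Exactly as in the linear case, this $y$-independence plus monotonicity is what allows us to conclude that deletions cannot cause a previously unqueried point to suddenly become queried, and hence that $\core(S) \setminus U$ is closed under a re-run of the sampler.

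First I would fix the enumeration $\core(S) = (x_{t_1}, \ldots, x_{t_{N_T}})$ of the queried points in the order they appeared in the original run on $S$, and let $\core(S) \setminus U = (x_{s_1}, \ldots, x_{s_M})$ with $s_1 < \cdots < s_M$ be the subsequence surviving the deletion. Running \textsc{GeneralBBQSampler} on $\core(S) \setminus U$ in this order produces, at each step $j$, an internal state (e.g.\ the accumulated labeled set, or an induced version space / confidence set) determined solely by the queries made during the first $j-1$ steps. I would then prove by induction on $j$ that every $x_{s_j}$ is queried in the new run and, consequently, that after step $j$ the new run's set of queried points is exactly $\{x_{s_1}, \ldots, x_{s_j}\}$, which immediately yields $\core(\core(S)\setminus U) = \core(S)\setminus U$.

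The inductive step reduces to the following monotonicity property: if $x$ triggers a query under the state induced by a prior queried set $Q$, then $x$ also triggers a query under the state induced by any $Q' \subseteq Q$. This is the analogue of the PSD-monotonicity $A^{-1} \preceq A_{-j}^{-1}$ used in \pref{thm:monotonic-query-condition}, and is natural for any disagreement- or version-space-based sampler, since removing labeled examples only enlarges the version space and hence the measured uncertainty. Given this property, the inductive step is immediate: at step $j$ of the run on $\core(S)\setminus U$, the set of previously queried points is $\{x_{s_1}, \ldots, x_{s_{j-1}}\}$, which is a subset of the queried set accumulated prior to time $s_j$ in the original run on $S$. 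Since $x_{s_j} \in \core(S)$ was queried in that original run, the monotonicity property forces $x_{s_j}$ to be queried in the new run as well.

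The main obstacle, and the only part that genuinely depends on the specifics of \textsc{GeneralBBQSampler}, is verifying this monotonicity property of its query condition for a general function class $\mathcal{F}$. For the linear case the argument followed from a one-line PSD identity; for the general case one has to inspect how the sampler quantifies predictive uncertainty (whether via a data-dependent width, a ghost-sample regression gap, or a version-space diameter) and check that this uncertainty functional is antitone in the set of prior queries. Once that lemma is in place, the rest of the proof is the short induction above.
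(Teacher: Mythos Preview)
Your overall strategy is right, and the monotonicity you flag as ``the main obstacle'' is in fact a one-liner once you look at the sampler: since
\[
D^2(x;Q)=\sup_{f,g\in\mathcal{F}}\frac{(f(x)-g(x))^2}{\sum_{x_i\in Q}(f(x_i)-g(x_i))^2+1},
\]
removing any point from $Q$ deletes a nonnegative term from the denominator, so $D^2(x;Q')\ge D^2(x;Q)$ whenever $Q'\subseteq Q$. This is the direct analogue of the PSD identity in the linear case, and the paper verifies it exactly this way.

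The genuine gap is structural. Your induction treats \textsc{GeneralBBQSampler} as if it processes points in a fixed input order, so that at ``step $j$'' the algorithm is handed $x_{s_j}$ and one merely checks whether it triggers a query. But unlike the linear \textsc{BBQSampler}, \pref{alg:gen-bbq} is a multi-stage \emph{pool-based} procedure: within each stage $\ell$ it repeatedly selects $x\in\mathrm{argmax}_{x\in\mathcal P_{\ell-1}\setminus\mathcal Q_\ell}D(x,\mathcal Q_\ell)$ and queries it until the max drops below $\varepsilon_\ell$. After deleting $x_j$, the argmax at subsequent steps can shift, so the new run need not produce queries in the order $(x_{s_1},\ldots,x_{s_M})$, and your inductive hypothesis ``after step $j$ the queried set is exactly $\{x_{s_1},\ldots,x_{s_j}\}$'' does not follow from monotonicity of $D^2$ alone. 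The paper's proof confronts this head-on: it first notes that dropping all unqueried points from the pool leaves every argmax unchanged (each argmax was itself a queried point), which gives $\core(\mathcal Q)=\mathcal Q$; then, inducting over single deletions $x_j\in U$, it isolates the stage $\ell^\ast$ in which $x_j$ was queried, argues that the argmax selections within stage $\ell^\ast$ are preserved after removing $x_j$, and observes that all later stages $\ell>\ell^\ast$ are entirely unaffected because each stage starts with $\mathcal Q_\ell=\emptyset$ and depends only on its own queries. Your proposal needs analogous reasoning about the argmax selection and the stage decomposition to close the argument.
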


\textbf{Memory Required for Unlearning.} The memory required by \pref{alg:gen-bbq-unlearning} is determined by the query complexity of the \textsc{GeneralBBQSampler}, which depends on an eluder-dimension-like quantity of $\mathcal{F}$. The dimension $\mathfrak{D}(\mathcal{F}, S)$ of model class $\mathcal{F}$ projected onto sample $S$ is defined as: 
\begin{align*}
    \mathfrak{D}(\mathcal{F}, S) = \sup_\pi \sum_{t=1}^T \sup_{f,g \in \mathcal{F}} \frac{(f(x)-g(x))^2}{\sum_{i=1}^t (f(x_{\pi(i)})-g(x_{\pi(i)}))^2 + 1},
\end{align*}
where $\pi$ is a permutation on $[T]$ \citep{gentile2022fastratespoolbasedbatch}.

$\mathfrak{D}(\mathcal{F}, S)$ is closely related to the eluder dimension \citep{russo2013eluder} and the disagreement coefficient \citep{foster2020instancedependentcomplexitycontextualbandits}, two well-studied active learning complexity measures (see \citet{gentile2022fastratespoolbasedbatch} for details). The connection between the memory complexity of unlearning and the query complexity of active learning has also been demonstrated in \citet{ghazi23aticketed,cherapanamjeri2024ontheunlearnability}.

\begin{restatable}{theorem}{genbbqmemory}\label{thm:gen-bbq-memory}
Assume that, with probability $1 - \delta / T$, the ERM \(\hat f\) in \pref{alg:gen-bbq-unlearning} satisfies the bound 
$\sum_{t=1}^T (f^*(x_t) - \hat{f}(x_t))^2 \leq \mathfrak{R}(T, \delta).$ 
    Then, the memory required by \pref{alg:gen-bbq-unlearning} is bounded by
    \begin{align*}
        N_T &= O \left( \min_{\varepsilon} \left \{T_{\varepsilon} + \frac{\mathfrak{R}(T, \delta) \cdot \mathfrak{D}(\mathcal{F}, S)}{\varepsilon^2} \right \}\right),
    \end{align*}
\end{restatable}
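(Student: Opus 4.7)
The plan is to bound $N_T = \abs{\core(S)}$ by decomposing queried points according to whether they lie near the decision boundary of $f^*$. Fix any margin $\varepsilon > 0$. Points satisfying $\abs{f^*(x_t) - \tfrac{1}{2}} \le \varepsilon$ contribute at most $T_\varepsilon$ queries trivially, so the problem reduces to bounding the number of \emph{non-margin} queried points by $O(\mathfrak{R}(T,\delta) \cdot \mathfrak{D}(\mathcal{F}, S)/\varepsilon^2)$; the theorem then follows by minimizing over $\varepsilon$.

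For the non-margin queries, I would use the query rule of the \textsc{GeneralBBQSampler}. At each time $t$, the sampler maintains a version space $V_t \subseteq \mathcal{F}$ of predictors with small cumulative squared error on previously queried labels, and queries $x_t$ precisely when there exist $f, g \in V_t$ with $(f(x_t) - g(x_t))^2$ exceeding a disagreement threshold. By a standard margin-to-width reduction, when $\abs{f^*(x_t) - \tfrac{1}{2}} > \varepsilon$, triggering a query forces the witnessing width to be $\gtrsim \varepsilon^2$. Let $t_1 < t_2 < \cdots < t_N$ index the non-margin queried points and pick $f_k, g_k \in V_{t_k}$ with $(f_k(x_{t_k}) - g_k(x_{t_k}))^2 \gtrsim \varepsilon^2$ for each $k$.

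Next I would invoke the ERM regression bound together with version-space membership to obtain a cumulative width bound: for every $k$,
$$\sum_{i \le k} (f_k(x_{t_i}) - g_k(x_{t_i}))^2 \lesssim \mathfrak{R}(T,\delta),$$
because both $f_k$ and $g_k$ are close to $f^*$ in squared loss on the queried points, and $f^*$ is close to $\hat f$ by the assumed ERM bound, so a triangle inequality in the empirical $\ell_2$ inner product on queried points suffices. Plugging this cumulative bound into the denominator of the defining ratio of $\mathfrak{D}(\mathcal{F}, S)$, using the permutation $\pi$ that lists the queried points in the order they were queried, and summing the ratios along $k = 1, \dots, N$ yields
$$N \cdot \varepsilon^2 \;\lesssim\; \sum_{k=1}^{N} (f_k(x_{t_k}) - g_k(x_{t_k}))^2 \;\lesssim\; \bigl(\mathfrak{R}(T,\delta) + 1\bigr)\cdot \mathfrak{D}(\mathcal{F}, S),$$
which rearranges to $N \lesssim \mathfrak{R}(T,\delta)\cdot \mathfrak{D}(\mathcal{F}, S)/\varepsilon^2$, as required.

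The main obstacle is the cumulative width bound $\sum_{i \le k}(f_k(x_{t_i}) - g_k(x_{t_i}))^2 \lesssim \mathfrak{R}(T,\delta)$: it requires tracking the exact radius of the version space used inside \textsc{GeneralBBQSampler} and combining it with the ERM guarantee via triangle inequality uniformly over $k$, and it is the place where one must pay a union bound over the $T$ possible stopping times (this is why the hypothesis is stated with failure probability $\delta/T$). The remainder, i.e., the margin decomposition and the eluder-style pigeonhole against the ratio form of $\mathfrak{D}(\mathcal{F}, S)$, is routine.
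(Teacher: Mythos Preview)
Your argument has a genuine gap stemming from a mismatch between your model of the sampler and the actual \textsc{GeneralBBQSampler} in \pref{alg:gen-bbq}. You treat it as a one-pass version-space sampler that queries when two functions in a shrinking $V_t$ disagree, but the real algorithm is stage-based and, crucially, its query condition in stage $\ell$ is $D(x,\mathcal{Q}_\ell) > \varepsilon_\ell$ where $D^2$ is a supremum over \emph{all} of $\mathcal{F}$, not over a version space. This $y$-independence is by design (it is exactly what makes the query rule monotonic under deletion, \pref{thm:gen-monotonic}). Two of your steps break as a result. First, the ``margin-to-width reduction'' fails: whether $x_t$ is a margin point has no bearing on $D(x_t,\mathcal{Q}_\ell)$, since the latter involves neither $f^*$ nor any labels; a non-margin point can be queried with witnessing $f_k,g_k$ arbitrarily far from $f^*$. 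Second, for the same reason your cumulative bound $\sum_{i\le k}(f_k(x_{t_i})-g_k(x_{t_i}))^2 \lesssim \mathfrak{R}(T,\delta)$ is unjustified: the witnesses $f_k,g_k$ realizing the sup in $D^2$ are not constrained to any neighborhood of $f^*$ or $\hat f$, so the triangle inequality to the ERM gives you nothing.

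The paper's proof proceeds quite differently. It never splits the \emph{queried} points by margin. Instead it invokes the label-complexity bound $N_T \le 4^{L+1}\,\mathfrak{R}(T,\delta)\,\mathfrak{D}(\mathcal{F},S)$ from \citet{gentile2022fastratespoolbasedbatch} (\pref{thm:gen-bbq-orig-query-bound}), so the task reduces to bounding $4^{L}$ where $L$ is the final stage. The margin parameter $\varepsilon$ enters only through the \emph{residual} set $\mathcal{R}_L$ (points neither queried nor confidently classified): \pref{lem:close-to-f-star} shows every $x\in\mathcal{R}_L$ satisfies $|f^*(x)-\tfrac12|\le 2^{-L+2}$. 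A case split on whether $2^{-L+2}\ge\varepsilon$ then finishes the proof: either $4^{L}\lesssim 1/\varepsilon^2$ directly, or $\mathcal{R}_L$ consists entirely of $\varepsilon$-margin points and the exit condition of the algorithm ties $4^{L}\,\mathfrak{R}\,\mathfrak{D}$ to $|\mathcal{R}_L|\le T_\varepsilon$. The ERM hypothesis with failure probability $\delta/T$ is used inside \pref{lem:close-to-f-star} (union-bounded over stages), not in the place you anticipated.
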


To find bounds on the convergence rate $\mathfrak{R}(T, \delta)$ for the ERM, one can look into  \citet{yangandbarron1999, koltchinskii-local-rademacher-2006, offset-rademacher-Liang15}. We expect $\mathfrak{R}(T, \delta)$ to be small. Thus, \pref{thm:gen-bbq-memory} implies that if a function class has small dimension $\mathfrak{D}(\mathcal{F}, S)$ (see \citet{russo2013eluder, foster2020instancedependentcomplexitycontextualbandits, gentile2022fastratespoolbasedbatch} for examples), we have an efficient algorithm for unlearning. For example, linear functions have $\mathfrak{D}(\mathcal{F}, S) = O(d\log T)$. 

\textbf{Excess Risk.} In general, bounding the excess risk after deletion for a generic $\mathcal{F}$ is hard. However, if the regression oracle of $\mathcal{F}$ is stable, \pref{alg:gen-bbq-unlearning} maintains small excess risk after deletion. We formally define stability as follows: 

\begin{definition} [Uniform Stability; \citet{bousquet2002stability}]
    Let $\hat{f}_S \in \mathcal{F}$ be the predictor returned by a learning algorithm $A$ on sample $S \in \mathcal{Z}^n$ and let $\hat{f}_{S \setminus i} \in \mathcal{F}$ be the predictor returned by $A$ on $S \setminus \{x_i\}$. The learning algorithm $A$ satisfies \textit{uniform stability} with rate \(\beta\) if for all $i \in [n]$, for all $z = (x, y) \in \mathcal{Z}$, $|\ell(\hat{f}_{S}(x), y) - \ell(\hat{f}_{S \setminus i}(x), y)| \leq \beta(n)$. 
\end{definition}

\begin{restatable}{theorem}{genbbqexcessrisk}\label{thm:gen-bbq-excess-risk}
    If the regression oracle for $\mathcal{F}$ satisfies uniform stability under the squared loss with rate \(\beta\), then with probability $1-\delta$, the excess risk of the final predictor returned by \pref{alg:gen-bbq-unlearning} satisfies
    \begin{align*}
        \Risk (\hat{f}) = O \left( \frac{1}{T - N_T} \cdot (N_T\log T + \log (1/\delta)) \right)
    \end{align*}
    after unlearning up to
    \begin{align*}
        K = O \left( \frac{\sqrt{\mathfrak{R}(T, \delta)}}{\sqrt{N_T} \cdot \beta(N_T)} \right)
    \end{align*}
    many core set deletions.
\end{restatable}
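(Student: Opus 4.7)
\textbf{Proof plan for \pref{thm:gen-bbq-excess-risk}.} The approach mirrors the linear special case (\pref{thm:bbq-regret-bound}), replacing the explicit ridge-regression calculations by oracle-level arguments that apply to any \(\mathcal{F}\). Let \(\mathcal{Q}\) denote the core set produced by \textsc{GeneralBBQSampler} on \(S\), of size \(N_T\); let \(\hat f\) denote the ERM on \(\mathcal{Q}\), and let \(\tilde f\) denote the ERM on \(\mathcal{Q}\setminus U\) obtained after up to \(K\) core-set deletions.

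First, I would establish a pointwise margin for \(\hat f\) on unqueried non-margin points. Combining the regression guarantee \(\sum_{t=1}^T(\hat f(x_t)-f^*(x_t))^2 \le \mathfrak{R}(T,\delta)\) with the version-space-based query rule of \textsc{GeneralBBQSampler}---which, whenever \(x_t\) is not queried, forces the width \(\sup_{f,g\in \mathcal{V}_t}(f(x_t)-g(x_t))^2\) to be small relative to a threshold---yields a pointwise bound \(|\hat f(x_t)-f^*(x_t)|\le \varepsilon/2\) at every unqueried \(x_t\), for the same \(\varepsilon\) that optimizes the minimum defining \(N_T\) in \pref{thm:gen-bbq-memory}. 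Triangle inequality then gives \(|\hat f(x_t)-\tfrac12|\ge \varepsilon/2\) with \(\mathrm{sign}(\hat f(x_t)-\tfrac12)=\mathrm{sign}(f^*(x_t)-\tfrac12)\) on every unqueried point whose Bayes margin satisfies \(|f^*(x_t)-\tfrac12|\ge \varepsilon\); in particular, \(\hat f\) misclassifies (relative to the Bayes predictor) at most \(N_T+T_{\bar\varepsilon}\) of the \(T\) training points before unlearning.

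Next, I would lift the squared-loss uniform-stability assumption to pointwise prediction stability. Since \(\hat f\) and the rescaled labels \((1+y)/2\) take values in \([0,1]\), expanding \(|(\hat f_{\mathcal{Q}}(x)-y')^2-(\hat f_{\mathcal{Q}\setminus\{i\}}(x)-y')^2|\le \beta(N_T)\) at \(y'=0\) and \(y'=1\) and keeping the better bound (one of the two factors \(|\hat f+\hat f'|\) or \(|2-\hat f-\hat f'|\) is always at least \(1\)) gives \(|\hat f_{\mathcal{Q}}(x)-\hat f_{\mathcal{Q}\setminus\{i\}}(x)|\le \beta(N_T)\) uniformly in \(x\). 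Iterating this pointwise across the up-to-\(K\) core-set deletions and using monotonicity of \(\beta\) yields \(|\hat f(x)-\tilde f(x)|\le K\,\beta(N_T)\) for every \(x\). Requiring \(K\,\beta(N_T)<\varepsilon/2\), so that the sign of \(\tilde f(x_t)-\tfrac12\) cannot flip on unqueried non-margin points, and substituting the \(\varepsilon\) from Step 1 then gives the claimed \(K=O(\sqrt{\mathfrak{R}(T,\delta)}/(\sqrt{N_T}\,\beta(N_T)))\).

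Finally, to pass from the empirical classification guarantee to excess risk, I would invoke a standard sample-compression generalization bound. Because \(\tilde f\) depends only on the \(\le N_T\) points in \(\mathcal{Q}\setminus U\) and (by Steps 1--2) misclassifies at most \(N_T+T_{\bar\varepsilon}\) of the \(T\) training points relative to the Bayes predictor, sample compression converts this into the stated \(O((N_T\log T+\log(1/\delta))/(T-N_T))\) excess-risk bound. The principal obstacle is Step 1: extracting a tight pointwise margin \(\varepsilon\) on unqueried points from the sampler's normalized width condition together with the \(L^2\) regression bound \(\mathfrak{R}(T,\delta)\), and matching it to the \(\varepsilon\) optimized in \pref{thm:gen-bbq-memory} so that the final \(K\) bound comes out exactly as stated; a secondary subtlety is that \(\tilde f\) is a fully recomputed ERM on a shrinking training set, so the iterated stability bound must be applied across nested ERMs rather than a one-step incremental update.
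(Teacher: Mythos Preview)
Your high-level plan (margin on unqueried points $+$ stability under core-set deletions $+$ sample compression) is the right skeleton, and your derivation of pointwise prediction stability from squared-loss uniform stability matches the paper's \pref{lem:unif-stability}. The substantive divergence is in how stability is transferred to unqueried points, and this is where your proposal has a gap.

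\textsc{GeneralBBQSampler} is a \emph{multi-stage} procedure: points are resolved into confidence sets $\mathcal{C}_\ell$ at different levels $\ell$, and both the closeness $|\hat f_{\mathcal{Q}}(x)-f^*(x)|\le 2^{-\ell}$ and the margin $|\hat f_{\mathcal{Q}}(x)-\tfrac12|>2^{-\ell}$ are \emph{level-dependent} (see \pref{lem:close-to-f-star} and \pref{lem:correct-margin-on-unqueried}). Your Step~1 claim that a \emph{single} $\varepsilon$ governs all unqueried points, and that this $\varepsilon$ coincides with the optimizer in \pref{thm:gen-bbq-memory}, does not match the algorithm: a point in $\mathcal{C}_3$ has margin $\sim 2^{-3}$, one in $\mathcal{C}_L$ has margin $\sim 2^{-L}$. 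With your direct pointwise stability bound $|\hat f-\tilde f|\le K\beta(N_T)$, the binding constraint becomes $K\beta(N_T)<2^{-L}$, and nothing in your outline connects $2^{-L}$ to $\sqrt{\mathfrak{R}(T,\delta)/N_T}$; the termination rule of the sampler determines $L$ in a data-dependent way that you would need to analyze.

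The paper avoids this by a different route: rather than using pointwise stability directly, it first bounds the $\ell_2$ deviation on the \emph{queried} set, $\sum_{\mathcal{Q}}(\hat f_{\mathcal{Q}\setminus U}-\hat f_{\mathcal{Q}})^2\le K^2 N_T\beta(N_T)^2$, and then invokes the query condition $D^2(x;\mathcal{Q}_\ell)\le \varepsilon_\ell^2$ to push this to unqueried $x\in\mathcal{C}_\ell$, yielding $|\hat f_{\mathcal{Q}\setminus U}(x)-\hat f_{\mathcal{Q}}(x)|\le \varepsilon_\ell\, K\sqrt{N_T}\,\beta(N_T)$. Because $\varepsilon_\ell=2^{-\ell}/\sqrt{\mathfrak{R}(T,\delta)}$, both the stability bound and the margin scale with $2^{-\ell}$, so the ratio is \emph{level-independent} and the constraint $\varepsilon_\ell K\sqrt{N_T}\beta(N_T)\le 2^{-\ell}$ collapses to exactly $K\le \sqrt{\mathfrak{R}(T,\delta)}/(\sqrt{N_T}\,\beta(N_T))$ at every level simultaneously. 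This coupling through the query condition is the step you are missing, and it is precisely what produces the stated $K$ without any reference to $\bar\varepsilon$ or $2^{-L}$.
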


Typically, we have $\beta (N_T) \approx \tfrac{1}{N_T}$. Thus, the number of core set deletions that we can tolerate looks like $\sqrt{N_T \cdot \mathfrak{R}(T, \delta)}$. Similarly to the linear case, the proof follows by showing that the predictor after learning agrees with the classification of the Bayes optimal predictor on the unqueried points with some margin and then leveraging uniform stability to ensure that the predictor before and after unlearning continue to agree on the classification of the unqueried points.

\vspace{-2mm}
\section{Conclusion}
We proposed a new definition for unlearning, called \textit{system-aware unlearning}, that provides unlearning guarantees against an attacker who compromises the system after unlearning. We proved that system-aware unlearning generalizes traditional unlearning definitions and demonstrated that core set algorithms are a natural way to satisfy system-aware unlearning. By using less information, we expose less information to a potential attacker, leading to easier unlearning. To highlight the power of this viewpoint of unlearning, we show that selective sampling can be used to design a more memory and computation-time-efficient exact system-aware unlearning algorithm for classification. %
Looking forward, it would be interesting to explore how approximate system-aware unlearning ($\varepsilon, \delta \neq 0$) can lead to even faster and more memory-efficient unlearning algorithms.

\section*{Acknowledgements}
LL acknowledges support from the Cornell Bowers CIS-Linkedin Fellowship. AS thanks Claudio Gentile for helpful discussions. KS acknowledges support from the Cornell Bowers CIS-Linkedin Grant.

\section*{Impact Statement}

This paper presents work whose goal is to advance the field of 
Machine Learning. There are many potential societal consequences 
of our work, none which we feel must be specifically highlighted here.

\bibliography{refs}
\bibliographystyle{icml2025}

\newpage
\appendix
\onecolumn
\appendix

\section{Related Work} \label{app:related-work}

Beyond the standard definition of machine unlearning from \citet{sekhari2021unlearning, guo2019certified}, some other machine unlearning definitions have been proposed. \citet{gupta2021adaptivemachineunlearning} generalizes the machine unlearning definition from \citet{sekhari2021unlearning, guo2019certified} to handle adaptive requests. \citet{chourasia23aforget} proposes a data deletion definition under adaptive requesters which does not require indistinguishability from retraining from scratch. They require that the model after deletion be indistinguishable from a randomized mapping $\pi$ on $S$ with the deleted individual $z$ replaced. This definition assumes that the attacker does not have knowledge of the unlearning algorithm itself. If the data deletion requesters are non-adaptive, then $\pi$ can be replaced by the unlearning algorithm $A$, and we recover the standard definition of machine unlearning. However, in general, system-aware unlearning does not generalize this definition. Compared to system-aware unlearning, the data deletion definition from \citet{chourasia23aforget} makes the stronger assumption that the attacker has knowledge of every remaining individual, but the weaker assumption that the attacker does not have knowledge of the unlearning algorithm.

\citet{neel2021deletion} proposed a distinction between traditional unlearning and ``perfect" unlearning. Under perfect unlearning, not only must the observable outputs of the unlearning algorithm be indistinguishable from the retrained-from-scratch model, but the complete internal state of the unlearning algorithm must be indistinguishable from the retrained-from-scratch state. We note that system-aware unlearning when $S'=S$ is exactly equivalent to perfect unlearning. \citet{Golatkar_2020_CVPR} proposed a definition of unlearning that requires the existence of a certificate of forgetting, where the certificate can be any function that does not depend on the deleted individuals, rather than the fixed certificate of retraining-from-scratch. This is akin to requiring the existence of a $S' \subseteq S$ in the definition of system-aware unlearning, rather than fixing $S' = S$. We note that our system-aware algorithm is able to leverage the flexibility in $S'$ to achieve more efficient unlearning, while the algorithms in \citet{Golatkar_2020_CVPR} ultimately do attempt to recover a hypothesis close to retraining-from-scratch; however, we find the connections in the definitions interesting to point out.

Maintaining privacy under system-aware unlearning is closely related to the goal of pan-privacy \citep{Dwork2010PanPrivateSA, pan-private-amin20a, cheu2020limitspanprivacyshuffle}. In the setting of pan-privacy, user data is processed in a streaming fashion, outputs are produced in a sequence, and an adversary may compromise the internal state of the algorithm at any point during the stream. The goal of pan-privacy is provide privacy against an adversary who compromises the internal state at any point in the stream and has access to the preceding outputs. 

Beyond unlearning definitions, there has been much work in the development of certified unlearning algorithms. The current literature generally falls into two categories: exact unlearning algorithms which exactly reproduce the model from retraining from scratch on $S \setminus U$ \citep{ghazi23aticketed, cherapanamjeri2024ontheunlearnability, bourtoule2021machine, cao2015unlearning, chowdhury2024towards} or approximate unlearning algorithms which use ideas from differential privacy \citep{dwork2014privacy} to probabilistically recover a model that is ``essentially indistinguishable" from the model produced from retraining from scratch on $S \setminus U$ \citep{izzo2021approximate, sekhari2021unlearning, chien2024langevin, guo2019certified}. The exact unlearning algorithms are typically memory intensive and require the storage of the entire dataset and multiple models, while the approximate unlearning algorithms tend to be more memory efficient. Certified machine unlearning algorithms meet provable guarantees of unlearning. However, many of the algorithms are limited to the convex setting.

There are a number of (uncertified) unlearning algorithms which have been shown to work well empirically in the nonconvex setting \citep{goel2023adversarialevaluationsinexactmachine, kurmanji2023unboundedmachineunlearning, jang2022knowledge}. Furthermore, a number of these empirical methods attempt to unlearn in a ``data-free" manner where the remaining individuals are not stored in memory when unlearning \citep{foster2023fastmachineunlearningretraining, bonato2024retainsetneedmachine}. However, recent work has shown that these empirical methods do not unlearn properly and do indeed leak the privacy of the unlearned individuals \citep{hayes2024inexact, pawelczyk2025machineunlearningfailsremove}.

Furthermore, existing lower bounds prove that there exist simple model classes with finite VC and Littlestone dimension where traditional exact unlearning requires the storage of the entire dataset \citep{cherapanamjeri2024ontheunlearnability}. For large datasets, this makes exact unlearning under the traditional definition impractical. Additionally, \citet{cherapanamjeri2024ontheunlearnability} proved that even approximate algorithms for certain model classes require the storage of the entire dataset for the hypothesis testing problem after deletion. This provides strong evidence that even approximate learning under the traditional definition requires storing the entire dataset. 

\newpage
\section{Experimental Evaluation}  \label{app:experiments} 

\begin{figure*}
    \centering
    \subfigure[Purchase Dataset]{
    \label{fig:exp-a}
    \centering
    \includegraphics[width=0.35\linewidth]{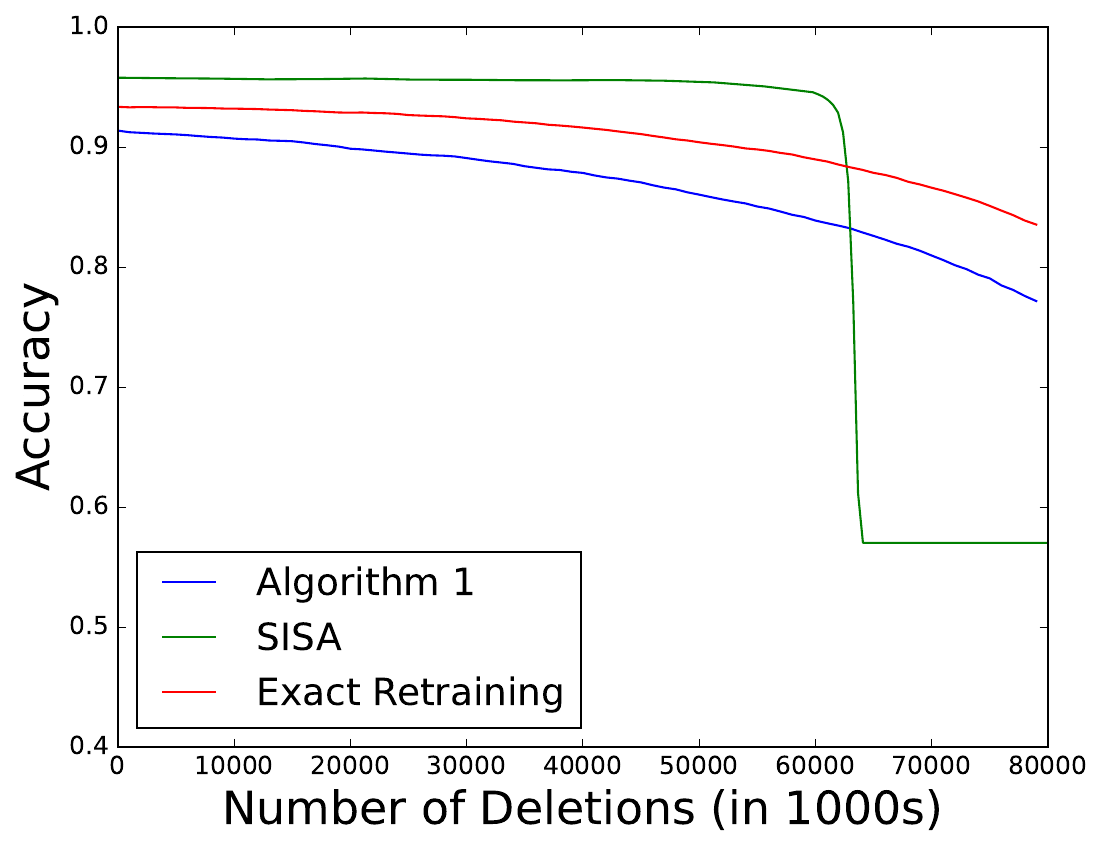}
    }
    \subfigure[Margin Dataset]{
    \label{fig:exp-b}
    \centering
    \includegraphics[width=0.35\linewidth]{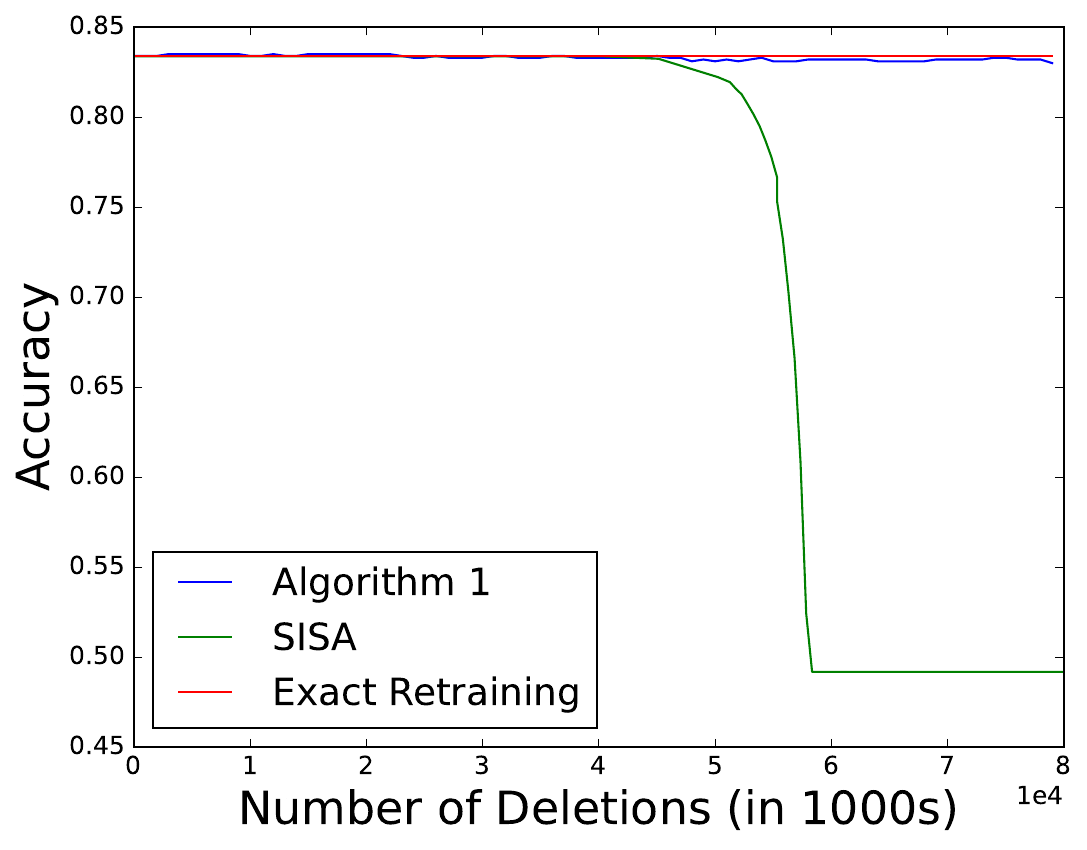}
    }
    \caption{The test accuracy of each unlearning method over the course 80,000 label dependent deletions.}
    \label{fig:experiments}
\end{figure*}

Our theoretical results provide guarantees for the worst case deletions. We experimentally verify our theory, and we demonstrate that in practice, \pref{alg:bbq-unlearning} can maintain small excess error beyond the core set deletion capacities proven in \pref{thm:bbq-unlearning}. Furthermore, \pref{alg:bbq-unlearning} is significantly more memory and computation time efficient compared to other unlearning methods for linear classification.

We compare the accuracy, memory usage, training time, and unlearning time of \pref{alg:bbq-unlearning} to the following unlearning procedures. \pref{alg:bbq-unlearning} trains a linear model ($y = Ax + b$) on the core set $\core(S)$.
\vspace{-3mm}
\begin{itemize}[leftmargin=5mm, noitemsep]
    \item \textit{SISA from \citet{bourtoule2021machine}:} SISA trains models on separate data shards and aggregates them together to produce a final model. During the initial training process, SISA stores intermediate models in order to speed up retraining at the time of unlearning. We train a linear model ($y = w^\top x$) on each shard and aggregate the models using a uniform voting rule.
    \item \textit{Exact Retraining:} We train a linear model ($y = w^\top x$) on the entire dataset and perform an exact retraining update for each deletion.
    \item \textit{Unlearning Algorithm from \citet{sekhari2021unlearning}:} For linear regression ($y = w^\top x$), this algorithm precisely reduces to exact retraining.
\end{itemize}
\vspace{-2mm}

We compare the results on two different datasets.
\vspace{-3mm}
\begin{itemize}[leftmargin=5mm, noitemsep]
    \item \textit{Purchase Dataset:} A binary classification dataset on item purchase data curated by \citet{bourtoule2021machine} with 249,215 points in dimension $d=600$.
    \item \textit{Margin Dataset:} A synthetic binary classification dataset with 200,000 points in dimension $d=100$ with a hard margin condition of $\gamma = 0.1$ ($|u^\top x| > 0.1, \forall x$ for some underlying $u \in \mathbb{R}^d$).
\end{itemize}
\vspace{-2mm}

For each method, we train an initial linear classifier, and then we process a sequence of 80,000 deletions (around $40\%$ of the dataset) of points all with class label $-1$ (we refer to this as a sequence of \textit{label dependent deletions}). \pref{fig:experiments} compares the accuracy of the various unlearning methods over the sequence of deletions, and \pref{tab:purchase-results} and \pref{tab:margin-results} compare the computation time and memory usage of the various methods on the two datasets.

\begin{table}[ht]
\caption{Computation and Memory Usage on the Purchase Dataset}
\begin{center}
\begin{tabular}{lccc}
    \toprule
    & Initial training time (secs) & Accumulated deletion time (secs) & \% of data stored in memory \\
    \midrule
    \pref{alg:bbq-unlearning} & 186.3 & 58.3 & 13.1\% \\
    SISA & 30.2 & 1174.3 & 100\% \\
    Exact Retraining  & 1828.7 & 579.3 & 100\% \\
    \bottomrule
\end{tabular}
\end{center}
\label{tab:purchase-results}
\end{table}

\begin{table}[ht]
\caption{Computation and Memory Usage on the Margin Dataset}
\begin{center}
\begin{tabular}{lccc}
    \toprule
    & Initial training time (secs) & Accumulated deletion time (secs) & \% of data stored in memory \\
    \midrule
    \pref{alg:bbq-unlearning} & 1.3 & 0.5 & 0.6\% \\
    SISA & 20.6 & 697.1 & 100\% \\
    Exact Retraining  & 67.8 & 27.1 & 100\% \\
    \bottomrule
\end{tabular}
\end{center}
\label{tab:margin-results}
\end{table}

On the Purchase Dataset, first observe that \pref{alg:bbq-unlearning} can maintain comparable accuracy to exact retraining while dominating exact retraining in initial training time, deletion time, and memory usage. SISA has the best initial accuracy, but \pref{alg:bbq-unlearning} is able to maintain significantly better accuracy under a longer sequence of label dependent deletions compared to SISA. The sequence of label dependent deletions creates a distribution shift in the training data. \pref{alg:bbq-unlearning} is robust to this shift due to theoretical guarantees, but SISA lacks such theoretical guarantees on its accuracy after unlearning. We also note that SISA is training a more expressive model compared to \pref{alg:bbq-unlearning} and exact retraining which could be contributing to its improved initial accuracy. Furthermore, \pref{alg:bbq-unlearning} can maintain comparable accuracy with significantly fewer samples. \pref{alg:bbq-unlearning} has a longer initial training time, but requires significantly less computation time at the time of deletion compared to SISA. The use of sample compression leads to more efficient unlearning.

When the dataset allows for a more favorable compression scheme, as the Margin Dataset does, the improvements are even more pronounced. \pref{alg:bbq-unlearning} can match the initial accuracy of SISA and exact retraining, despite using much less data, and \pref{alg:bbq-unlearning} can maintain significantly better accuracy under a longer sequence of label dependent deletions. Furthermore, \pref{alg:bbq-unlearning} requires significantly less memory and significantly less computation time, both at the time of training and the time of deletion, compared to SISA and exact retraining due to increased sample compression. When the dataset allows for significant compression, \pref{alg:bbq-unlearning} dominates SISA and exact retraining in accuracy, memory, and computation time.

\newpage

\section{Proofs from \pref{sec:core-set}}

\coresetunlearn*
\begin{proof}
    We have $\state_A(S, U) = \state_{A_{\textsc{un}}}(\core(S), U)$.

    We also have
    \begin{align*}
        \state_{A}(S' \setminus U, \emptyset) &= \state_{A}(\core(S) \setminus U, \emptyset) \\
        &= \state_{A_{\textsc{un}}}(\core(\core(S) \setminus U), \emptyset) \\
        &= \state_{A_{\textsc{un}}}(\core(S) \setminus U, \emptyset).
    \end{align*}
    
    By definition, $\state_{A_{\textsc{un}}}(\core(S), U)$ and $\state_{A_{\textsc{un}}}(\core(S) \setminus U, \emptyset)$ are $(\varepsilon, \delta)$-indistinguishable, so $\state_A(S, U)$ and $\state_{A}(S' \setminus U, \emptyset)$ must be $(\varepsilon, \delta)$-indistinguishable. Thus, $A$ is an $(\varepsilon, \delta)$-system-aware unlearning algorithm.
\end{proof}

\section{Proofs from \pref{sec:bbq-unlearning}} \label{app:bbq-unlearn-proofs}

\subsection{Notation} 
\begin{itemize}
    \item $[n] = \{1, 2, \dots, n\}$
    \item $A_T = \lambda I + \sum_{t=1}^T x_t x_t^\top$
    \item $A_{T \setminus U} = \lambda I + \sum_{t=1}^T x_t x_t^\top - \sum_{x_i \in U} x_i x_i^\top$, where $U$ is a set of deletions
    \item $A_{t \setminus x_j} = 
    \begin{cases}
        \lambda I + \sum_{t=1}^T x_t x_t^\top - x_j x_j^\top & \text{when } j \leq t \\
        \lambda I + \sum_{t=1}^T x_t x_t^\top & \text{otherwise} \\
    \end{cases}$, \\
    for some $t, j \in [T]$
    \item $A_{S} = \lambda I + \sum_{x_t \in S} x_t x_t^\top$, where $S$ is a set of points
    \item $b_T = \sum_{t=1}^T y_t x_t$
    \item $b_T = \sum_{t=1}^T y_t x_t - \sum_{x_i \in U} y_i x_i$, where $U$ is a set of deletions
    \item $w_{T} = A_{T}^{-1} b_{T}$
    \item $w_{T \setminus U} = A_{T \setminus U}^{-1} b_{T \setminus U}$, where $U$ is a set of deletions
    \item $\|u\|_{X} = u^\top X u$, where $u \in \mathbb{R}^d$ and $X \in \mathbb{R}^{d \times d}$
\end{itemize}

\vspace{3mm}
\bbqunlearning*
\begin{proof}
    Fix any sample $S$ and set of deletions $U$. Define $S' = \core(S) = \mathcal{Q}$. Clearly, $S' \subseteq S$. The core set of the \textsc{BBQSampler} is exactly the set of points that it queries.
    Thus, applying \pref{thm:monotonic-query-condition}, we know $\core(\core(S) \setminus U) = \core(S) \setminus U$. $A(S' \setminus U, \emptyset)$ returns a regularized ERM over $\core(\core(S) \setminus U)$, which is exactly $\core(S) \setminus U$, and stores that ERM and the set $\core(S) \setminus U$. To process the deletion of $U$, $A(S, U)$ returns a regularized ERM over $\core(S) \setminus U$ and stores that ERM and the set $\core(S) \setminus U$. Thus, $\state_A(S, U) = \state_A(S' \setminus U, \emptyset)$ for all $U \subseteq S$.
\end{proof}
\vspace{3mm}

\monotonicquerycondition*
\begin{proof}
    Fix any sample $S$, and consider the deletion of a point $x_j \in U$. Consider executing the \textsc{BBQSampler} on the queried points $\mathcal{Q} \setminus \{x_j\}$ compared to executing on $S$. First, observe that the removal of all of the unqueried points has no effect on any of the query conditions, $x_t^\top A_{t \setminus x_j}^{-1} x_t = x_t^\top A_t^{-1} x_{t} > T^{-\kappa}$. Thus, when $x_j$ is unqueried, all of the points in $\mathcal{Q} \setminus \{x_j\}$ will be queried when executing the \textsc{BBQSampler} on the queried points $\mathcal{Q} \setminus \{x_j\}$
    
    Next, consider the case that the deleted point $x_j$ was queried. Consider a point $x_t$ that was queried at time $t$. We know $x_t^\top A_t^{-1} x_{t} > T^{-\kappa}$. We note that any points after time $t$ do not affect the query condition at time $t$, so we only focus on deletions of $x_j$ where $j < t$. We have that
    \begin{align*}
        x_t^\top A_{t \setminus x_j}^{-1} x_t &= x_t^\top (A_t - x_j x_j^\top)^{-1} x_t \\
        &= x_t^\top A^{-1}_t x_t + \bigg ( \frac{x_t^\top A_t^{-1} x_j x_j^\top A_t^{-1} x_t}{1 - x_j^\top A_t^{-1} x_j} \bigg ) \\
        &= x_t^\top A^{-1}_t x_t + \frac{(x_t^\top A_t^{-1} x_j)^2}{1 - x_j^\top A_t^{-1} x_j} \\
        &\geq x_t^\top A^{-1}_t x_t \\
        &\geq T^{-\kappa}, 
    \end{align*}
    where the second to last line follows because the second term is always positive. Thus, $x_t$ remains queried when executing the \textsc{BBQSampler} on $\mathcal{Q} \setminus \{x_j\}$. We can apply the above argument inductively for each $x_j \in U$ to conclude that $\core (\core (S) \setminus U) = \core (S) \setminus U$.
    \end{proof}

\vspace{3mm}
\bbqregretbound*
\begin{proof}
    The bound on the number of points queried by the BBQ sampler $|\core (S)|$ is given by \pref{thm:bbq-orig-regret-bound} using standard analysis for selective sampling algorithms from \citet{cesabianchi2009selective, dekel2012selective, agarwal2013selective}.
    
    First let's set all of the $T_{\bar{\varepsilon}}$ margin points aside. Let $w_T$ be the last predictor from the \textsc{BBQSampler}.
    
    First we argue that before deletion, $w_T^\top x$ and $u^\top x$ agree on the sign of all unqueried points $x$ (outside of the $T_{\bar{\varepsilon}}$ margin points). An unqueried point $x_t$ must have a margin of $\bar{\varepsilon}$ with respect to $u$, which means $|u^\top x| > \bar{\varepsilon}$. For an unqueried point $x_t$, we also have
    \begin{align*}
        |w_T^\top x_t - u^\top x_t| &= \|w_T - u\|_{A_T} \cdot \|x_t\|_{A_T} \\
        &\leq \|w_T - u\|_{A_T} \cdot \|x\|_{A_t} \tag{using the monotonicity of the query condition}\\
        &\leq \sqrt{d\log T \cdot \log (1/\delta)} \cdot T^{-\kappa} \tag{applying \pref{prop:wt-regret} on the first term and the query condition on the second term}\\
        &\leq \frac{\bar{\varepsilon}}{2}. \tag{for sufficiently large T}
    \end{align*}
    Thus, $\mathrm{sign}(w_T^\top x)=\mathrm{sign}(u^\top x)$, so the final predictor after learning $w_T$ and the Bayes optimal predictor $u$ agree on the classification of all of the unqueried points. Furthermore, all of the unqueried points $x$ have a margin of $\frac{\bar{\varepsilon}}{2}$ with respect to $w_T$.

    Thus, in order to ensure that $w_T$ and $w_{T\setminus U}$ after $|U| = K$ deletions continue to agree on the classification of all unqueried points, we need to ensure that $|w_T^\top x - w_{T\setminus U}^\top x| = \Delta < \frac{\bar{\varepsilon}}{2}$. Using the upper bound on $\Delta$ derived using a stability analysis in \pref{thm:delta-bound-K} , we get the following deletion capacity on queried points,
    \begin{align*}
        \Delta \leq 2\sqrt{e(K+1)} \cdot T^{-\kappa /2} \cdot \sqrt{d\log T \cdot \log (1/\delta)} &\leq \frac{\bar{\varepsilon}}{2} \tag{\pref{thm:delta-bound-K}} \\
        e(K+1) \cdot T^{-\kappa} \cdot d\log T \cdot \log (1/\delta) &\leq \frac{\bar{\varepsilon}^2}{16} \\
        K + 1 &\leq \frac{\bar{\varepsilon}^2 \cdot T^{\kappa}}{16e \cdot d \log T \cdot \log (1/\delta)} \\
        K &\leq \frac{\bar{\varepsilon}^2 \cdot T^{\kappa}}{16e \cdot d \log T \cdot \log (1/\delta)} - 1 \\
        K &\leq O \bigg ( \frac{\bar{\varepsilon}^2 \cdot T^{\kappa}}{d \log T \cdot \log (1/\delta)} \bigg ).
    \end{align*}
    For up to $K$ deletions on queried points, $w_T$ and $w_{T\setminus U}$ are guaranteed to agree on the classification of all unqueried points. Thus after unlearning up to $K$ queried points, $w_{T\setminus U}$ and the Bayes optimal predictor $u$ agree on the classification on all of the unqueried points. Note that $w_{T\setminus U}$ is a predictor that only used points in $\mathcal{Q} \setminus U$ during training, and yet, has good performance on points that it never used during training. In particular, we have that
    \begin{align*}
        \hat{L}_{S \setminus \{\mathcal{Q} \setminus U\}}(w_{T\setminus U}) = \sum_{(x,y) \in S \setminus \{\mathcal{Q} \setminus U\}} \ind{\mathrm{sign}(u^\top x) \neq \mathrm{sign}(w_{T\setminus U}^\top x)} \leq T_{\bar{\varepsilon}} + K,
    \end{align*}
    because outside of the $T_{\bar{\varepsilon}}$ margin points, we showed that $u$ and $w_{T\setminus U}$ agree on the classification of all of the unqueried points. Furthermore, $u$ and $w_{T\setminus U}$ may disagree on the classification of the $K$ deleted queried points.

    Through this observation, we can use techniques from generalization for sample compression algorithms \citep{lecturenotes-tewari} to convert the empirical classification loss to an excess risk bound for $w_{T\setminus U}$. First, observe that
    \begin{align*}
        \mathcal{E}(w_{T\setminus U}) &= \mathbb{E}_{(x, y) \sim \mathcal{D}}[\mathbbm{1}\{\mathrm{sign}(w_{T\setminus U}^\top x) \neq y\} - \mathbbm{1}\{\mathrm{sign}(u^\top x) \neq y\}] \\
        &= \mathbb{E}_{(x, y) \sim \mathcal{D}}[|2|u^\top x| - 1| \cdot \mathbbm{1}\{\mathrm{sign}(w_{T\setminus U}^\top x) \neq \mathrm{sign}(u^\top x)\}] \\
        &\leq \mathbb{E}_{(x, y) \sim \mathcal{D}}[\mathbbm{1}\{\mathrm{sign}(w_{T\setminus U}^\top x) \neq \mathrm{sign}(u^\top x)\}]
\end{align*}

Thus, we look to bound the loss of $L(w_{T\setminus U}) = \mathbb{E}_{(x, y) \sim \mathcal{D}}[\mathbbm{1}\{\mathrm{sign}(w_{T\setminus U}^\top x) \neq \mathrm{sign}(u^\top x)\}]$. We are interested in the event that there exists a $\mathcal{Q} \setminus U \subseteq S$, $|\mathcal{Q} \setminus U| = l$ such that $\hat{L}_{S \setminus \{\mathcal{Q} \setminus U\}}(w_{T\setminus U}) \leq T_{\bar{\varepsilon}} + K$ and $L(\hat{f}_{\mathcal{Q} \setminus U}) \geq \varepsilon$.
\begin{align*}
    \Pr[\exists~ &\mathcal{Q} \setminus U \subseteq S \text{ such that } \hat{L}_{S \setminus \{\mathcal{Q} \setminus U\}}(w_{T\setminus U}) \leq T_{\bar{\varepsilon}} + K \text{ and } L(w_{T\setminus U}) \geq \varepsilon] \\
    &\leq \sum_{l=1}^T \Pr[\exists~ \mathcal{Q} \setminus U \subseteq S,~ |\mathcal{Q} \setminus U| = l \text{ such that } \hat{L}_{S \setminus \{\mathcal{Q} \setminus U\}}(w_{T\setminus U}) \leq T_{\bar{\varepsilon}} + K \text{ and } L(w_{T\setminus U}) \geq \varepsilon]\\
    &\leq \sum_{l=1}^T \sum_{\mathcal{Q} \setminus U \subseteq S, |\mathcal{Q} \setminus U| = l} \Pr[\hat{L}_{S \setminus \{\mathcal{Q} \setminus U\}}(w_{T\setminus U}) \leq T_{\bar{\varepsilon}} + K \text{ and } L(w_{T\setminus U}) \geq \varepsilon] \\
    &= \sum_{l=1}^T \sum_{\mathcal{Q} \setminus U \subseteq S, |\mathcal{Q} \setminus U| = l} \mathbb{E}\left[\Pr_{S \setminus \{\mathcal{Q} \setminus U\}}[\hat{L}_{S \setminus \{\mathcal{Q} \setminus U\}}(w_{T\setminus U}) \leq T_{\bar{\varepsilon}} + K \text{ and } L(w_{T\setminus U}) \geq \varepsilon \mid{} \mathcal{Q} \setminus U]\right]
\end{align*}

Let $|\mathcal{Q}| = N_T$ and let $|U| = K$, where $N_T - K = l$.
Now for any fixed $\mathcal{Q} \setminus U$, the above probability is just the probability of having a true risk greater than $\varepsilon$ and an empirical risk at most $T_{\bar{\varepsilon}} + K$ on a test set of size $T - N_T + K$. Now for any random variable $z \in [0, 1]$, if $\mathbb{E}[z] \geq \varepsilon$ then $\Pr[z = 0] \leq 1 - \varepsilon$. Thus, for a given $\mathcal{Q} \setminus U$,
\begin{align*}
    \Pr_{S \setminus \{\mathcal{Q} \setminus U\}}[\hat{L}_{S \setminus \{\mathcal{Q} \setminus U\}}(w_{T\setminus U}) \leq T_{\bar{\varepsilon}} + K \text{ and } L(w_{T\setminus U}) \geq \varepsilon] \leq (1 - \varepsilon)^{T-N_T - T_{\bar{\varepsilon}}}
\end{align*}

Plugging this in above, we have
\begin{align*}
    \Pr[\exists~ &\mathcal{Q} \setminus U \subseteq S,~ |\mathcal{Q} \setminus U| = l \text{ such that } \hat{L}_{S \setminus \{\mathcal{Q} \setminus U\}}(w_{T\setminus U}) \leq T_{\bar{\varepsilon}} + K \text{ and } L(w_{T\setminus U}) \geq \varepsilon]\\
    &\leq \sum_{l=1}^T \sum_{\mathcal{Q} \setminus D \subseteq S, |\mathcal{Q} \setminus D| = l} (1 - \varepsilon)^{T-N_T- T_{\bar{\varepsilon}}} \\
    &\leq \sum_{l=1}^T T^l \cdot (1 - \varepsilon)^{T-N_T- T_{\bar{\varepsilon}}} \\
    &= \sum_{l=1}^T T^{N_T - K} \cdot (1 - \varepsilon)^{T-N_T- T_{\bar{\varepsilon}}} \\
    &= \sum_{l=1}^T T^{N_T-K} \cdot (1 - \varepsilon)^{T-N_T- T_{\bar{\varepsilon}}} \\
    &\leq \sum_{l=1}^T T^{N_T} \cdot e^{-\varepsilon(T-N_T- T_{\bar{\varepsilon}})} \\
    &= T^{N_T+1} \cdot e^{-\varepsilon(T-N_T-T_{\bar{\varepsilon}})}
\end{align*}

We want this probability to be at most $\delta$. Setting $\varepsilon$ appropriately, we have
\begin{align*}
    \varepsilon &= \frac{1}{T - N_T - T_{\bar{\varepsilon}}} \cdot ((N_T + 1) \log T + \log (1/\delta))
\end{align*}

Thus, with probability at least $1 - \delta$, 
\begin{align*}
    L(\hat{f}_{\mathcal{Q} \setminus D}) \leq \frac{1}{T - N_T- T_{\bar{\varepsilon}}} \cdot ((N_T + 1) \log T + \log (1/\delta))
\end{align*}

This implies that with probability at least $1 - \delta$, 
\begin{align*}
    \mathcal{E}(w_{T \setminus U}) \leq \frac{1}{T - N_T- T_{\bar{\varepsilon}}} \cdot ((N_T + 1) \log T + \log (1/\delta)).
\end{align*}
\end{proof}

\vspace{3mm}
\expdelcapw*
\begin{proof}
We begin by considering
\begin{align*}
    \mathrm{Pr}_{S, \pi, \sigma}(K_{\textsc{csd}} > K) &\leq
    \frac{1}{K} \mathbb{E}[K_{\textsc{csd}}] \tag{Markov's Inequality}\\
    &= \frac{1}{K} \mathbb{E}_{S, \pi, \sigma}\bigg [\sum_{t=1}^T \ind{x_t \in C_\pi} \cdot \sum_{k=1}^{K_{\textsc{total}}} \ind{x_t = x_{\sigma_k}} \bigg ] \tag{$C_\pi$ is the resulting core set after executing on $\pi(S)$} \\
    &= \frac{1}{K} \mathbb{E}_{S}\bigg [\sum_{t=1}^T \mathbb{E}_{\pi} [\ind{x_t \in C_\pi}] \cdot \sum_{k=1}^{K_{\textsc{total}}} \mathbb{E}_{\sigma} [\ind{x_t = x_{\sigma_k}}] \bigg ] \\
    &= \frac{1}{K} \mathbb{E}_{S, \sigma}\bigg [\sum_{t=1}^T \mathbb{E}_{\pi} [\ind{x_t \in C_\pi}] \cdot \sum_{k=1}^{K_{\textsc{total}}} \ind{x_t = x_{\sigma_k}} \bigg ] \\
    &= \frac{1}{K} \mathbb{E}_{S, \sigma}\bigg [\sum_{k=1}^{K_{\textsc{total}}} \mathbb{E}_\pi[\ind{x_{\sigma_k} \in C_\pi }] \bigg ]. 
\end{align*}
This proves the first half of the theorem.

Next define $\nu(x) = \mathbb{E}_\pi[\ind{x \in C_\pi }]$. Consider the case when the deletion distribution $\mu$ satisfies $\mu(x)>\mu(x') \implies \nu(x)\geq\nu(x')$. This is exactly the worst case in terms of deletion capacity: points that have a high probability of being included in the core set are exactly the points that have a high probability of being deleted.

In this case, we can apply \pref{thm:sampling} to get
\begin{align*}
    \mathrm{Pr}_{S, \pi, \sigma}(X > K) &\leq \frac{1}{K} \mathbb{E}_{S, \sigma}\bigg [\sum_{k=1}^{K_{\textsc{total}}} \mathbb{E}_\pi[\ind{x_{\sigma_k} \in C_\pi }] \bigg ] \\
    &\leq \frac{1}{K} \mathbb{E}_{S} \bigg [ \mathbb{E}_{x \sim \mu} \sum_{k=1}^{K_{\textsc{total}}} \bigg [\mathbb{E}_\pi[\ind{x \in C_\pi }] \bigg ] \bigg ] \tag{where $x$ is sampled without replacement from $W$} \\
    &\leq \frac{K_{\textsc{total}}}{K} \mathbb{E}_{S} \bigg [ \mathbb{E}_{x \sim \mu} \bigg [\mathbb{E}_\pi[\ind{x \in C_\pi }] \bigg ] \bigg ] \\
    &\leq \frac{K_{\textsc{total}}}{K} \mathbb{E}_{S} \bigg [ \mathbb{E}_{x \sim \mu} \bigg [\frac{T^\kappa}{T} \sum_{s=1}^T x^\top \mathbb{E}_{\pi} [A_{s-1}^{-1}] x \bigg ] \bigg ] \tag{plugging in upper bound for $\mathbb{E}_\pi[\ind{x \in C_\pi }]$} \\
    &\leq \frac{K_{\textsc{total}} \cdot T^{\kappa}}{K \cdot T} \mathbb{E}_{S} \bigg [ \mathbb{E}_{x \sim \mu} \bigg [\sum_{s=1}^T x^\top \mathbb{E}_{\pi} [A_{s-1}^{-1}] x \bigg ] \bigg ] \\
    &\leq \frac{K_{\textsc{total}} \cdot T^{\kappa}}{K} \mathbb{E}_{S} \bigg [ \mathbb{E}_{x \sim \mu} \bigg [ x^\top \mathbb{E}_{\pi} \bigg [ \frac{1}{T} \sum_{s=1}^T A_{s-1}^{-1} \bigg ] x \bigg ] \bigg ] \\
    &\leq \frac{K_{\textsc{total}} \cdot T^\kappa}{K} \mathbb{E}_S [\mathbb{E}_{x \sim \mu} [x^\top \overline{M} x]],
\end{align*}
where $\overline{M} = \mathbb{E}_{\pi} [ \frac{1}{T} \sum_{s=1}^T A_{s-1}^{-1} ]$ for a given sample $S$.
\end{proof}

\vspace{3mm}
\expdelcapwunif*
\begin{proof}
First, we consider
\begin{align*}
    \mathbb{E}_S [\mathbb{E}_{x \sim \text{unif}} [x^\top \overline{M} x]] &= \mathbb{E}_S \bigg [\frac{1}{T} \sum_{t=1}^T x_t^\top \overline{M} x_t \bigg ] \\
    &\leq \frac{d \log T}{T} .\tag{$\sum_{t=1}^T x_t A_{t-1}^{-1} x_t \leq d\log T$}
\end{align*}

Plugging this into \pref{thm:exp-del-cap-w} and solving for $K_{\textsc{total}}$ completes the proof of the lemma.
\end{proof}

\subsection{Auxiliary Results}
\begin{restatable}{theorem}{deltaboundk}\label{thm:delta-bound-K}
    Let $w_T$ be the final predictor after running the \textsc{BBQSampler} from \pref{alg:bbq-unlearning} with $\lambda = K$. Let $U$ be a sequence of deletions of length $K$. Let $w_{T\setminus U}$ be the predictor after the sequence of $U$ deletions have been applied. Let $x$ be an unqueried point. Then we have
    \begin{align*}
        \Delta = w_{T\setminus U}^\top x -w_T^\top x -  &\leq 2\sqrt{e(K+1)} \cdot T^{-\kappa /2} \cdot \sqrt{d\log T \cdot \log (1/\delta)} \\
        &= O \left( \sqrt{K} \cdot T^{-\kappa /2} \cdot \sqrt{d\log T \cdot \log (1/\delta)} \right),
    \end{align*}
    with probability at least $1-\delta$.
\end{restatable}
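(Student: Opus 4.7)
}
The plan is to bound $\Delta$ through the closed-form perturbation identity between the two regularized ERMs and a careful Cauchy-Schwarz argument in the $A_{T\setminus U}^{-1}$-norm. Writing $X_U$ for the matrix whose rows are the deleted points in $U$ and $y_U$ for the corresponding label vector, the normal equations $A_T w_T = b_T$ and $A_{T\setminus U} w_{T\setminus U} = b_{T\setminus U}$, combined with $A_{T\setminus U} = A_T - X_U^\top X_U$ and $b_{T\setminus U} = b_T - X_U^\top y_U$, yield the identity
\[
w_{T\setminus U} - w_T \;=\; A_{T\setminus U}^{-1} X_U^\top \bigl(X_U w_T - y_U\bigr).
\]
Taking the inner product with $x$ and Cauchy-Schwarz give
\[
\Delta^2 \;\leq\; \|x\|_{A_{T\setminus U}^{-1}}^2 \cdot \bigl\|X_U^\top (X_U w_T - y_U)\bigr\|_{A_{T\setminus U}^{-1}}^2.
\]

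The first factor I would bound by performing the $K$ deletions sequentially. Let $A^{(0)} = A_T$ and $A^{(j)} = A^{(j-1)} - x_j x_j^\top$, so $A^{(K)} = A_{T\setminus U}$. Sherman-Morrison gives
\[
x^\top (A^{(j)})^{-1} x \;=\; x^\top (A^{(j-1)})^{-1} x \;+\; \frac{\bigl(x^\top (A^{(j-1)})^{-1} x_j\bigr)^2}{1 - x_j^\top (A^{(j-1)})^{-1} x_j}.
\]
The crucial observation is that the choice $\lambda = K$ makes $A^{(j-1)} \succeq K I$ at every intermediate step, so $x_j^\top (A^{(j-1)})^{-1} x_j \leq 1/K$ uniformly in $j$, and by Cauchy-Schwarz the numerator is at most $x^\top (A^{(j-1)})^{-1} x \cdot (1/K)$. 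This yields a multiplicative growth of at most $1 + O(1/K)$ per step, so iterating $K$ times gives $\|x\|_{A_{T\setminus U}^{-1}}^2 \leq e(K+1)\, \|x\|_{A_T^{-1}}^2 \leq e(K+1)\, T^{-\kappa}$, where the last step uses that $x$ is unqueried.

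For the second factor, I would decompose $X_U w_T - y_U = X_U(w_T - u) - \eta_U$, where $\eta_U$ is the conditionally sub-Gaussian noise vector with entries $y_j - u^\top x_j$. The operator norm $\|X_U A_{T\setminus U}^{-1} X_U^\top\|_{\mathrm{op}}$ is controlled by the trace bound $\sum_{j} x_j^\top A_{T\setminus U}^{-1} x_j \leq K \cdot (1/K) = 1$. Combining this with the self-normalized concentration estimate $\|w_T - u\|_{A_T}^2 = O\bigl(d \log T \cdot \log(1/\delta)\bigr)$ from \pref{prop:wt-regret} (which controls the signal piece) together with the standard sub-Gaussian tail bound for $\eta_U$ (controlling the noise piece), both contributions can be bounded by $O(d\log T \cdot \log(1/\delta))$ with probability at least $1-\delta$. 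Multiplying the two factors and taking a square root produces the claim.

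The main obstacle is the sequential Sherman-Morrison accounting in step two: making sure the denominator $1 - x_j^\top (A^{(j-1)})^{-1} x_j$ does not collapse, and that the cross-term $\bigl(x^\top (A^{(j-1)})^{-1} x_j\bigr)^2$ is controlled uniformly across the $K$ iterations. The precise calibration $\lambda = K$ is what makes both controls hold simultaneously, and is the delicate point that drives the final $\sqrt{e(K+1)}$ factor; without it the iteration would either diverge or accumulate much larger constants.
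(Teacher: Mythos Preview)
Your block-perturbation route is a genuinely different argument from the paper's. The paper telescopes over single deletions, writing $\Delta = \sum_{i=1}^K (w_{T\setminus U_i}^\top x - w_{T\setminus U_{i-1}}^\top x)$ and bounding each summand separately (their \pref{thm:delta-bound-1}) via a rank-one Sherman--Morrison expansion together with \pref{prop:wt-regret} applied to the \emph{intermediate} predictor $w_{T\setminus U_{i-1}}$; summing the $K$ terms is where their $\sqrt{K}$ comes from. Your approach avoids the telescope entirely and only invokes \pref{prop:wt-regret} once, at $w_T$, which is arguably cleaner. The sequential Sherman--Morrison control of $\|x\|_{A_{T\setminus U}^{-1}}^2$ that you outline is essentially the same computation the paper packages as \pref{lem:x-data-norm-K} and \pref{corr:x-data-norm}.

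There is, however, a bookkeeping inconsistency in your proposal. Your own iteration argument (``multiplicative growth of at most $1+O(1/K)$ per step, iterate $K$ times'') gives $\|x\|_{A_{T\setminus U}^{-1}}^2 \leq e\cdot T^{-\kappa}$, not $e(K+1)\cdot T^{-\kappa}$; this is exactly \pref{corr:x-data-norm}. So the $(K+1)$ factor cannot come from the first term. In the paper it comes from summing $K$ single-step bounds; in your decomposition it has to come from the second factor. If you bound the noise contribution deterministically via $\|\eta_U\|_2^2 \leq cK$ and use $\|X_U A_{T\setminus U}^{-1} X_U^\top\|_{\mathrm{op}}\le 1$, you recover the stated bound. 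If instead your ``standard sub-Gaussian tail bound'' means Hanson--Wright applied to $\eta_U^\top (X_U A_{T\setminus U}^{-1} X_U^\top)\eta_U$ (valid when $U$ is fixed conditionally on the labels), you actually get the sharper estimate $\Delta = O\bigl(T^{-\kappa/2}\sqrt{d\log T\,\log(1/\delta)}\bigr)$ with no $\sqrt{K}$ at all---better than what is claimed. Either way, the $(K+1)$ does not belong on the first factor; fix the attribution and the argument goes through.
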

\begin{proof}
    Let $U_i$ be the set of the first $i$ deletions. Then we have
    \begin{align*}
        \Delta &= w_{T\setminus U}^\top x -w_T^\top x \\
        &= \sum_{i=1}^K (w_{T\setminus U_i}^\top x - w_{T\setminus U_{i-1}}^\top x) \\
        &= \sum_{i=1}^K \frac{2\sqrt{e(K+1)}}{K} \cdot T^{-\kappa /2} \cdot \sqrt{d\log T \cdot \log (1/\delta)} \tag{applying \pref{thm:delta-bound-1}} \\
        &\leq \frac{2K\sqrt{e(K+1)}}{K} \cdot T^{-\kappa /2} \cdot \sqrt{d\log T \cdot \log (1/\delta)} \\
        &\leq 2\sqrt{e(K+1)} \cdot T^{-\kappa /2} \cdot \sqrt{d\log T \cdot \log (1/\delta)}.
    \end{align*}
\end{proof}

\begin{restatable}{theorem}{deltabound1}\label{thm:delta-bound-1}
    Let $\lambda=K$ be the regularization parameter. Consider a set \(U\) of deletions where $|U|<K$. Let $w_{T\setminus U}$ be the predictor after the set of $U$ deletions have been applied and let $w_{T \setminus (U \cup x_i)}$ be the predictor after the set of $U$ deletions have been applied along with an additional deletion of $x_i$. Let $x$ be an unqueried point. Then we have
    \begin{align*}
        \Delta = w_{T \setminus (U \cup x_i)}^\top x - w_{T \setminus U}^\top x \leq \frac{2\sqrt{e(K+1)}}{K} \cdot T^{-\kappa /2} \cdot \sqrt{d\log T \cdot \log (1/\delta)},
    \end{align*}
    for \(\lambda = K\), with probability at least $1 - \delta$. 
\end{restatable}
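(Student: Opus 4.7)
The plan is to reduce $\Delta$ to three scalar quantities via a leave-one-out Sherman--Morrison identity, then control each using (a) the query condition, (b) the regularizer $\lambda=K$ acting as a permanent floor on $\lambda_{\min}$ along the deletion path, and (c) a ridge confidence bound analogous to \pref{prop:wt-regret}. Write $A = A_{T\setminus U}$, $\tilde A = A - x_i x_i^\top = A_{T\setminus(U\cup x_i)}$, $w = A^{-1}b_{T\setminus U}$, and $w' = \tilde A^{-1}b_{T\setminus(U\cup x_i)}$. Subtracting the normal equations $Aw=b$ and $\tilde A w'=b-y_i x_i$ and using $A=\tilde A+x_i x_i^\top$ yields $\tilde A(w-w')=(y_i-x_i^\top w)x_i$, so that
\[
w'-w=(x_i^\top w-y_i)\,\tilde A^{-1}x_i, \qquad |\Delta|\le |x_i^\top w-y_i|\cdot \|x\|_{\tilde A^{-1}}\cdot \|x_i\|_{\tilde A^{-1}}
\]
by Cauchy--Schwarz. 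Bounding these three scalar factors is all that remains.

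For the Mahalanobis norms, the crucial observation is that every deletion is from the queried set $\mathcal Q$, so every intermediate matrix $M$ along the downdate path $A_T\to\tilde A$ has the form $\lambda I+(\text{sum of PSD rank-one terms from }\mathcal Q)$ and thus $M\succeq \lambda I=KI$. This immediately gives $\|x_i\|_{\tilde A^{-1}}^2\le 1/K$ and, uniformly along the path, the per-deletion leverage $\|x_j\|_{M^{-1}}^2\le 1/K$. The query condition supplies $\|x\|_{A_T^{-1}}^2\le T^{-\kappa}$ for the unqueried $x$: at the time $t$ when $x$ was encountered we had $x^\top A_{t-1}^{-1}x\le T^{-\kappa}$, and $A_T\succeq A_{t-1}$ implies $A_T^{-1}\preceq A_{t-1}^{-1}$. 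Iterating the Sherman--Morrison downdate inequality $\|x\|_{(M-x_j x_j^\top)^{-1}}^2\le \|x\|_{M^{-1}}^2/(1-\|x_j\|_{M^{-1}}^2)$ once for each of the $|U|+1\le K$ deletions inflates the squared norm by at most $K/(K-1)$ per step, so, using $(1+1/(K-1))^{K-1}\le e$,
\[
\|x\|_{\tilde A^{-1}}^2 \;\le\; T^{-\kappa}\,\Big(\tfrac{K}{K-1}\Big)^{K} \;\le\; \tfrac{e(K+1)}{K}\,T^{-\kappa}.
\]
Combining with $\|x_i\|_{\tilde A^{-1}}\le 1/\sqrt K$ produces exactly the prefactor $\sqrt{e(K+1)}/K\cdot T^{-\kappa/2}$ appearing in the claim.

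For the residual $|x_i^\top w-y_i|$, I decompose as $|x_i^\top(w-u)|+|u^\top x_i-y_i|$; the first term is at most $\|x_i\|_{A^{-1}}\|w-u\|_A\le \|w-u\|_A/\sqrt K$ (using $A\succeq \lambda I$), and the second is at most $|\eta_i|\le 2$ since $|y_i|,|u^\top x_i|\le 1$. Invoking the ridge confidence bound of \pref{prop:wt-regret}, on an event of probability at least $1-\delta$ we have $\|w-u\|_A\lesssim \sqrt{\lambda}\|u\|+\sqrt{d\log T\log(1/\delta)}\lesssim \sqrt K+\sqrt{d\log T\log(1/\delta)}$; dividing by $\sqrt K$ and absorbing the resulting $O(1)$ additive terms into the dominant $\sqrt{d\log T\log(1/\delta)}$ factor gives $|x_i^\top w-y_i|\lesssim \sqrt{d\log T\log(1/\delta)}$, and multiplying by the Mahalanobis factor above yields the advertised bound. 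The main obstacle I anticipate is the control of $\|x\|_{\tilde A^{-1}}$: naively, leave-one-out downdates can inflate leverages unboundedly as $|U|+1\to K$, and the whole argument rests on the observation that the regularizer $\lambda=K$ is never eroded along the deletion sequence, which is precisely why this choice of $\lambda$ appears in \pref{alg:bbq-unlearning} and is what caps each Sherman--Morrison multiplicative blow-up at $K/(K-1)$.
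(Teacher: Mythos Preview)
Your proposal is correct and follows essentially the same route as the paper: a Sherman--Morrison/normal-equations identity gives $w'-w=(x_i^\top w-y_i)\tilde A^{-1}x_i$, then Cauchy--Schwarz plus the three scalar bounds you list are exactly the paper's \pref{lem:x_i-x-data-norm}, \pref{lem:x_i-data-norm}, \pref{corr:x-data-norm}, and \pref{prop:wt-regret}. One minor arithmetic slip: your crude leverage bound $\|x_j\|_{M^{-1}}^2\le 1/K$ (from $M\succeq KI$) yields per-step inflation $K/(K-1)$ and hence only $(K/(K-1))^K\le eK/(K-1)$, not the $e(K+1)/K$ you claim from $(1+1/(K-1))^{K-1}\le e$; the paper instead uses the sharper $x_j^\top M^{-1}x_j\le 1/(\lambda+1)$ (\pref{lem:x_i-data-norm}, valid because $x_j$ is still a summand in $M$ at the moment of removal), giving $(1+1/K)^K\le e$ for $\|x\|_{A_{T\setminus U}^{-1}}^2$, and the stated $\sqrt{e(K+1)}/K$ then arises from an explicit $(1+1/\lambda)$ prefactor that appears when one expands with $A_{T\setminus U}$ rather than $\tilde A$.
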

\begin{proof}
    Let $A_{T\setminus U} = A_T - \sum_{j \in U} x_j x_j^\top$ and $b_{T\setminus U} = b_T - \sum_{j \in U} y_j x_j$. Then we have
\begin{align*}
    \Delta &= w_{T \setminus (D \cup x_i)}^\top x - w_{T \setminus U}^\top x \\
    &= (b_{T \setminus U} - y_i x_i)^\top (A_{T \setminus U} - x_i x_i ^\top)^{-1} x - b_{T \setminus U}^\top A^{-1}_{T \setminus U} x \\
    &= b_{T \setminus U}^\top (A_{T \setminus U} - x_i x_i ^\top)^{-1} x - y_i x_i^\top (A_{T \setminus U} - x_i x_i ^\top)^{-1} x - b_{T \setminus U}^\top A^{-1}_{T \setminus U} x \\
    &= b_{T \setminus U}^\top A^{-1}_{T \setminus U} x + \bigg ( \frac{b_{T \setminus U} ^\top A_{T \setminus U}^{-1} x_i x_i^\top A_{T \setminus U}^{-1} x}{1 - x_i^\top A_{T \setminus U}^{-1} x_i} \bigg ) - y_i x_i^\top A^{-1}_{T \setminus U} x - y_i \bigg ( \frac{x_i^\top A_{T \setminus U}^{-1} x_i x_i^\top A_{T \setminus U}^{-1} x}{1 - x_i^\top A_{T \setminus U}^{-1} x_i} \bigg ) - b_{T \setminus U}^\top A^{-1}_{T \setminus U} x \tag{Sherman-Morrison} \\
    &= \bigg ( \frac{b_{T \setminus U} ^\top A_{T \setminus U}^{-1} x_i x_i^\top A_{T \setminus U}^{-1} x}{1 - x_i^\top A_{T \setminus U}^{-1} x_i} \bigg ) - y_i x_i^\top A^{-1}_{T \setminus U} x - y_i \bigg ( \frac{x_i^\top A_{T \setminus U}^{-1} x_i x_i^\top A_{T \setminus U}^{-1} x}{1 - x_i^\top A_{T \setminus U}^{-1} x_i} \bigg ) \\
    &= \bigg ( \frac{w_{T \setminus U}^\top x_i x_i^\top A_{T \setminus U}^{-1} x}{1 - x_i^\top A_{T \setminus U}^{-1} x_i} \bigg ) - y_i x_i^\top A^{-1}_{T \setminus U} x - y_i \bigg ( \frac{x_i^\top A_{T \setminus U}^{-1} x_i x_i^\top A_{T \setminus U}^{-1} x}{1 - x_i^\top A_{T \setminus U}^{-1} x_i} \bigg ) \\
    &= \bigg ( \frac{w_{T \setminus U}^\top x_i x_i^\top A_{T \setminus U}^{-1} x}{1 - \frac{1}{\lambda+1}} \bigg ) - y_i x_i^\top A^{-1}_{T \setminus U} x - y_i \bigg ( \frac{x_i^\top A_{T \setminus U}^{-1} x}{(\lambda+1) (1 - \frac{1}{\lambda+1})} \bigg ) \tag{$x_i^\top A_{T \setminus U}^{-1} x_i \leq \frac{1}{\lambda+1}$ from \pref{lem:x_i-data-norm}} \\
    &= \frac{1}{1 - \frac{1}{\lambda+1}} \cdot w_{T \setminus U}^\top x_i \cdot x_i^\top A_{T \setminus U}^{-1} x - y_i x_i^\top A^{-1}_{T \setminus U} x - \frac{1}{\lambda}y_i x_i^\top A_{T \setminus U}^{-1} x \\
    &= \frac{1}{1 - \frac{1}{\lambda+1}} \cdot w_{T \setminus U}^\top x_i \cdot x_i^\top A_{T \setminus U}^{-1} x - \left(1 + \frac{1}{\lambda}\right) y_i x_i^\top A^{-1}_{T \setminus U} x \\
    &= \left(1 + \frac{1}{\lambda}\right) x_i^\top A_{T \setminus U}^{-1} x \cdot (w_{T \setminus U}^\top x_i - y_i) \\
    &\leq \left(1 + \frac{1}{\lambda}\right)  (w_{T \setminus U}^\top x_i - y_i) \cdot \sqrt{x_i^\top A_{T \setminus U}^{-1} x_i \cdot x^\top A_{T \setminus U}^{-1} x} \tag{applying \pref{lem:x_i-x-data-norm}} \\
    &\leq \left(\frac{\lambda+1}{\lambda}\right)  (w_{T \setminus U}^\top x_i - y_i)  \cdot \sqrt{\frac{e}{\lambda+1} \cdot T^{-\kappa}} \tag{applying \pref{lem:x_i-data-norm} and \pref{corr:x-data-norm}} \\
    &= \frac{\sqrt{e(\lambda+1)}}{\lambda} \cdot T^{-\kappa /2} (w_{T \setminus U}^\top x_i - y_i) \\
    &= \frac{\sqrt{e(\lambda+1)}}{\lambda} \cdot T^{-\kappa /2} (w_{T \setminus U}^\top x_i - u^\top x_i + \zeta_i) \\
    &= \frac{\sqrt{e(\lambda+1)}}{\lambda} \cdot T^{-\kappa} (w_{T \setminus U}^\top x_i - u^\top x_i) + \frac{\sqrt{e(\lambda+1)}}{\lambda} \cdot \zeta_i \cdot T^{-\kappa /2} \\
    &\leq \frac{\sqrt{e(\lambda+1)}}{\lambda} \cdot T^{-\kappa /2} \| w_{T \setminus U} - u \|_{A_{T \setminus U}} \| x_i \|_{A_{T \setminus U}^{-1}} + \frac{\sqrt{e(\lambda+1)}}{\lambda} \cdot \zeta_i \cdot T^{-\kappa /2} \\
    &\leq \frac{\sqrt{e(\lambda+1)}}{\lambda} \cdot T^{-\kappa /2} \| w_{T \setminus U} - u \|_{A_{T \setminus U}} \| x_i \|_{A_{T \setminus U}^{-1}} + \frac{\sqrt{e(\lambda+1)}}{\lambda} \cdot T^{-\kappa /2} \tag{$\zeta_i < 1$}\\
    &\leq \frac{\sqrt{e(\lambda+1)}}{\lambda} \cdot T^{-\kappa /2} \cdot \sqrt{d\log (T-K) \cdot \log (1/\delta)} +  \frac{\sqrt{e(\lambda+1)}}{\lambda} \cdot T^{-\kappa /2} \tag{\pref{prop:wt-regret}} \\
    &\leq \frac{2\sqrt{e(\lambda+1)}}{\lambda} \cdot T^{-\kappa /2} \cdot \sqrt{d\log T \cdot \log (1/\delta)}.
\end{align*}
\end{proof}

\begin{lemma}\label{lem:x-data-norm-K}
    Let $\lambda$ be the regularization parameter. Let $U$ be a set of deletions such that $|U|=K$. Let $A_{T\setminus U}$ denote $A_T - \sum_{x_j \in U} x_j x_j^\top$. Then we have
    \begin{align*}
        x^\top A_{T\setminus U}^{-1} x \leq \sum_{i=0}^K \frac{\binom{K}{i}}{\lambda^i} \cdot T^{-\kappa}.
    \end{align*}
\end{lemma}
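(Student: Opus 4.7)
The plan is to proceed by induction on $K = |U|$, rewriting the claimed bound via the binomial identity $\sum_{i=0}^K \binom{K}{i}/\lambda^i = (1 + 1/\lambda)^K$ and establishing the equivalent multiplicative form $x^\top A_{T\setminus U}^{-1} x \leq (1 + 1/\lambda)^K \cdot T^{-\kappa}$. The base case $K = 0$ reduces to $x^\top A_T^{-1} x \leq T^{-\kappa}$, which follows because $x$ is an unqueried point (implicit from the $T^{-\kappa}$ bound in the statement): at the time $t$ when $x$ was encountered by \textsc{BBQSampler}, the query condition failed, giving $x^\top A_{t-1}^{-1} x \leq T^{-\kappa}$, and since $A_T \succeq A_{t-1}$ from accumulating positive semidefinite updates, $A_T^{-1} \preceq A_{t-1}^{-1}$ extends this bound to $A_T$.

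For the inductive step, enumerate $U = \{u_1, \dots, u_K\}$ in any order and let $U_k = \{u_1, \dots, u_k\}$. It suffices to show $x^\top A_{T \setminus U_{k+1}}^{-1} x \leq (1 + 1/\lambda) \cdot x^\top A_{T \setminus U_k}^{-1} x$. Applying Sherman-Morrison to $A_{T \setminus U_{k+1}} = A_{T \setminus U_k} - u_{k+1} u_{k+1}^\top$ gives
\begin{align*}
    x^\top A_{T \setminus U_{k+1}}^{-1} x = x^\top A_{T \setminus U_k}^{-1} x + \frac{(x^\top A_{T \setminus U_k}^{-1} u_{k+1})^2}{1 - u_{k+1}^\top A_{T \setminus U_k}^{-1} u_{k+1}}.
\end{align*}
Cauchy-Schwarz in the inner product induced by the positive definite matrix $A_{T \setminus U_k}^{-1}$ bounds the numerator by $(x^\top A_{T \setminus U_k}^{-1} x) \cdot (u_{k+1}^\top A_{T \setminus U_k}^{-1} u_{k+1})$. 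Since $u_{k+1} \notin U_k$, the data point $u_{k+1}$ still contributes $u_{k+1} u_{k+1}^\top$ to $A_{T \setminus U_k}$, so \pref{lem:x_i-data-norm} yields $u_{k+1}^\top A_{T \setminus U_k}^{-1} u_{k+1} \leq 1/(\lambda + 1)$. Substituting and simplifying $1/(1 - 1/(\lambda+1)) = (\lambda+1)/\lambda = 1 + 1/\lambda$ closes the induction.

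The main (albeit modest) obstacle is ensuring that the hypothesis of \pref{lem:x_i-data-norm} is satisfied at each stage of the induction: the bound $u^\top B^{-1} u \leq 1/(\lambda+1)$ only applies when $u$'s outer product is actually part of the matrix $B$. Peeling off the deletions one at a time handles this automatically, since by construction $u_{k+1} \notin U_k$ so $A_{T \setminus U_k}$ still contains the $u_{k+1} u_{k+1}^\top$ contribution. Unrolling $K$ iterations gives $x^\top A_{T \setminus U}^{-1} x \leq (1 + 1/\lambda)^K T^{-\kappa}$, and the binomial theorem converts this to the stated bound $\sum_{i=0}^K \binom{K}{i}/\lambda^i \cdot T^{-\kappa}$.
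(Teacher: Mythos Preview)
Your proof is correct and follows essentially the same approach as the paper: induction on the number of deletions, Sherman--Morrison for the rank-one update, Cauchy--Schwarz (the paper's \pref{lem:x_i-x-data-norm}) for the cross term, and \pref{lem:x_i-data-norm} for the $1/(\lambda+1)$ bound on the deleted point. The only cosmetic difference is that you recognize $\sum_{i=0}^K \binom{K}{i}/\lambda^i = (1+1/\lambda)^K$ upfront, which lets you phrase the induction as a clean multiplicative factor and avoids the Pascal's-identity bookkeeping the paper carries out on the binomial-sum form.
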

\begin{proof}
We prove the claim using induction. First, assume that $x^\top A_{T\setminus U}^{-1} x \leq \sum_{i=0}^K \frac{\binom{K}{i}}{\lambda^i} \cdot T^{-\kappa}$ (induction hypothesis). Consider an additional deletion and the effect on the query condition, $x^\top (A_{T\setminus U} - x_i x_i)^{-1} x$. We have
\begin{align*}
    x^\top (A_{T \setminus U} - x_i x_i^\top)^{-1} x &= x^\top A^{-1}_{T \setminus U} x + \bigg ( \frac{x^\top A_{T \setminus U}^{-1} x_i x_i^\top A_{T \setminus U}^{-1} x}{1 - x_i^\top A_{T \setminus U}^{-1} x_i} \bigg ) \\
    &\leq x^\top A^{-1}_{T \setminus U} x + \bigg ( \frac{ (x_i^\top A_{T \setminus U}^{-1} x)^2}{(1 - \frac{1}{\lambda+1})} \bigg ) \tag{$x_i^\top A_{T \setminus (D \setminus x_i)}^{-1} x_j \leq \frac{1}{\lambda+1}$ from \pref{lem:x_i-data-norm}} \\
    &\leq \sum_{i=0}^K \frac{\binom{K}{i}}{\lambda^i} \cdot T^{-\kappa} + \bigg ( \frac{ (x_i^\top A_{T \setminus U}^{-1} x)^2}{(1 - \frac{1}{\lambda+1})} \bigg ) \tag{$x^\top A^{-1}_{T \setminus U} x \leq \sum_{i=0}^K \frac{\binom{K}{i}}{\lambda^i} \cdot T^{-\kappa}$ from induction hypothesis} \\
    &\leq \sum_{i=0}^K \frac{\binom{K}{i}}{\lambda^i} \cdot T^{-\kappa} + \bigg ( \frac{x_i^\top A_{T \setminus U}^{-1} x_i \cdot x_i^\top A_{T \setminus U}^{-1} x}{(1 - \frac{1}{\lambda+1})} \bigg ) \tag{\pref{lem:x_i-x-data-norm}} \\
    &\leq \sum_{i=0}^K \frac{\binom{K}{i}}{\lambda^i} \cdot T^{-\kappa} + \bigg ( \frac{ \sum_{i=0}^K \frac{\binom{K}{i}}{\lambda^i} \cdot T^{-\kappa}}{(\lambda+1)(1 - \frac{1}{\lambda+1})} \bigg ) \tag{using induction hypothesis and \pref{lem:x_i-data-norm}} \\
    &\leq \sum_{i=0}^K \frac{\binom{K}{i}}{\lambda^i} \cdot T^{-\kappa}  + \sum_{i=0}^K \frac{\binom{K}{i}}{\lambda^{i+1}} \cdot T^{-\kappa} \\
    &\leq \sum_{i=0}^K \frac{\binom{K}{i}}{\lambda^i} \cdot T^{-\kappa}  + \sum_{i=1}^{K+1} \frac{\binom{K}{i-1}}{\lambda^{i}} \cdot T^{-\kappa} \\
    &\leq \frac{\binom{K}{0}}{\lambda^0} \cdot T^{-\kappa} + \sum_{i=1}^K \frac{\binom{K}{i}}{\lambda^i} \cdot T^{-\kappa}  + \sum_{i=1}^{K} \frac{\binom{K}{i-1}}{\lambda^{i}} \cdot T^{-\kappa} + \frac{\binom{K}{K}}{\lambda^{K+1}} \cdot T^{-\kappa} \\
    &\leq \frac{\binom{K}{0}}{\lambda^0} \cdot T^{-\kappa} + \sum_{i=1}^K \frac{\binom{K+1}{i}}{\lambda^i} \cdot T^{-\kappa} + \frac{\binom{K}{K}}{\lambda^{K+1}} \cdot T^{-\kappa} \tag{Pascal's Identity} \\
    &\leq \frac{\binom{K}{0}}{\lambda^0} \cdot T^{-\kappa} + \sum_{i=1}^K \frac{\binom{K+1}{i}}{\lambda^i} \cdot T^{-\kappa} + \frac{\binom{K+1}{K+1}}{\lambda^{K+1}} \cdot T^{-\kappa} \\
    &\leq \sum_{i=0}^{K+1} \frac{\binom{K+1}{i}}{\lambda^i} \cdot T^{-\kappa}.
\end{align*}
\end{proof}

\vspace{3mm}

\begin{corollary}\label{corr:x-data-norm}
    Let $\lambda = K$ be the regularization parameter. Let $U$ be a set of deletions such that $|U|<K$, then $x^\top A_{T\setminus U}^{-1} x \leq e \cdot T^{-\kappa}$  
\end{corollary}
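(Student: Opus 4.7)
The plan is to invoke Lemma~\ref{lem:x-data-norm-K} directly and then bound the resulting finite sum by $e$ via the binomial theorem. Concretely, since $|U| < K = \lambda$, the bound from Lemma~\ref{lem:x-data-norm-K} (with the same proof carried out for any $|U| \le K$ deletions rather than exactly $K$) yields
\[
    x^\top A_{T\setminus U}^{-1} x \;\leq\; \sum_{i=0}^{|U|} \frac{\binom{|U|}{i}}{K^i} \cdot T^{-\kappa} \;\leq\; \sum_{i=0}^{K} \frac{\binom{K}{i}}{K^i} \cdot T^{-\kappa},
\]
where the second inequality follows by monotonicity (extending the sum with additional nonnegative terms, and using $\binom{|U|}{i} \le \binom{K}{i}$ for $|U|\le K$).

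Next, I would apply the binomial theorem to rewrite
\[
    \sum_{i=0}^{K} \binom{K}{i} K^{-i} \;=\; \left(1 + \tfrac{1}{K}\right)^{K},
\]
and then invoke the standard inequality $(1+1/K)^K \le e$, which holds for every positive integer $K$ (since the sequence $(1+1/K)^K$ is monotonically increasing to its limit $e$). Combining these two steps gives
\[
    x^\top A_{T\setminus U}^{-1} x \;\leq\; e \cdot T^{-\kappa},
\]
as desired.

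There is essentially no obstacle here: the corollary is a purely algebraic consequence of Lemma~\ref{lem:x-data-norm-K}. The only mild subtlety is that the lemma is phrased for $|U|=K$, so one must either observe that the inductive proof of the lemma already establishes the bound at each intermediate step $i \le K$ (yielding a valid bound with $|U|$ in place of $K$), or simply upper bound using the full sum up to $K$ as above. Either route delivers the factor of $e$ cleanly.
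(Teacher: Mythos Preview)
Your proposal is correct and matches the paper's proof essentially line for line: invoke Lemma~\ref{lem:x-data-norm-K} (noting its inductive proof gives the bound for any $|U|\le K$), enlarge the sum to $\sum_{i=0}^{K}\binom{K}{i}K^{-i}$, apply the binomial theorem to get $(1+1/K)^K$, and bound by $e$. There is no meaningful difference between your argument and the paper's.
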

\begin{proof} From \pref{lem:x-data-norm-K}, we have
    \begin{align*}
        x^\top A_{T\setminus U}^{-1} x 
        &\leq 
        \sum_{i=0}^{|U|} \frac{\binom{|U|}{i}}{\lambda^i} \cdot T^{-\kappa} \\
        &\leq 
        \sum_{i=0}^K \frac{\binom{K}{i}}{\lambda^i} \cdot T^{-\kappa} \\
        &\leq 
        \sum_{i=0}^K \frac{\binom{K}{i}}{K^i} \cdot T^{-\kappa} \\
        &= \left( 1 + \frac{1}{K} \right) ^K \cdot T^{-\kappa} \\
        &\leq e \cdot T^{-\kappa}.
\end{align*}
\end{proof}

\vspace{3mm}

\begin{lemma}\label{lem:x_i-data-norm}
    $x_i^\top A_{S}^{-1} x_i \leq \frac{1}{\lambda+1}$, for any set of $S$ points such that $x_i \in S$, where $A_{S} = I + \sum_{x_t \in S} x_t x_t^\top$.
\end{lemma}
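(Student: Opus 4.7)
The plan is to isolate the contribution of $x_i$ to $A_S$ and then use operator monotonicity of matrix inversion to upper bound $A_S^{-1}$ by the inverse of a much simpler rank-one-plus-identity matrix. Concretely, since $x_i \in S$, I would first write
\begin{align*}
A_S \;=\; \lambda I + x_i x_i^\top + \sum_{x_t \in S,\, t \neq i} x_t x_t^\top \;\succeq\; \lambda I + x_i x_i^\top,
\end{align*}
where the inequality follows because each omitted term $x_t x_t^\top$ is PSD. By operator monotonicity of the matrix inverse on the PSD cone, this gives $A_S^{-1} \preceq (\lambda I + x_i x_i^\top)^{-1}$, and hence $x_i^\top A_S^{-1} x_i \leq x_i^\top (\lambda I + x_i x_i^\top)^{-1} x_i$.

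Next, I would compute the right-hand side explicitly via the Sherman--Morrison identity applied to $\lambda I + x_i x_i^\top$:
\begin{align*}
(\lambda I + x_i x_i^\top)^{-1} \;=\; \tfrac{1}{\lambda} I \;-\; \tfrac{x_i x_i^\top}{\lambda(\lambda + \|x_i\|^2)}.
\end{align*}
Sandwiching this between $x_i^\top$ and $x_i$ and simplifying yields
\begin{align*}
x_i^\top (\lambda I + x_i x_i^\top)^{-1} x_i \;=\; \tfrac{\|x_i\|^2}{\lambda} - \tfrac{\|x_i\|^4}{\lambda(\lambda + \|x_i\|^2)} \;=\; \tfrac{\|x_i\|^2}{\lambda + \|x_i\|^2}.
\end{align*}

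Finally, since the paper's assumptions impose $\|x_i\| \leq 1$ and the map $a \mapsto a/(\lambda + a)$ is increasing on $a \geq 0$, I would conclude
\begin{align*}
x_i^\top A_S^{-1} x_i \;\leq\; \tfrac{\|x_i\|^2}{\lambda + \|x_i\|^2} \;\leq\; \tfrac{1}{\lambda + 1},
\end{align*}
which is the claim. There is no serious obstacle here; the only subtlety is remembering that the bound $\frac{1}{\lambda+1}$ requires the normalization $\|x_i\| \leq 1$ and that the lemma's displayed $A_S = I + \sum_{x_t \in S} x_t x_t^\top$ should read $\lambda I + \sum_{x_t \in S} x_t x_t^\top$ to be consistent with the notation used elsewhere in the paper (and with the target bound involving $\lambda$).
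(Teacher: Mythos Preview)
Your proof is correct and uses the same ingredients as the paper's: Sherman--Morrison, the PSD lower bound coming from dropping the other $x_t x_t^\top$ terms, and the normalization $\|x_i\|\le 1$. The only cosmetic difference is ordering: you apply operator monotonicity first and then Sherman--Morrison on $\lambda I + x_i x_i^\top$, whereas the paper applies Sherman--Morrison to $A_S = A_{S\setminus i} + x_i x_i^\top$ first and then bounds the scalar $a = x_i^\top A_{S\setminus i}^{-1} x_i \le 1/\lambda$, arriving at the same $\tfrac{a}{1+a}\le \tfrac{1}{\lambda+1}$.
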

\begin{proof}
    We want to consider the $x_i$ that maximizes $x_i^\top A_{S}^{-1} x_i$. Let $A_{S \setminus i} = I + \sum_{x_t \in S \setminus \{x_i\}} x_t x_t^\top$. Then we want to maximize the following,
    \begin{align*}
        x_i^\top A_{S}^{-1} x_i &= x_i^\top (A_{S \setminus i} + x_i x_i ^\top )^{-1} x_i \\
        &= x_i^\top A_{S \setminus i}^{-1} x_i - \frac{x_i^\top A_{S \setminus i}^{-1} x_i x_i^\top A_{S \setminus i}^{-1} x_i}{1 + x_i^\top A_{S \setminus i}^{-1} x_i}.
    \end{align*}
    Let $a = x_i^\top A_{S \setminus i}^{-1} x_i$. Then we have
    \begin{align*}
        x_i^\top A_{S}^{-1} x_i = a - \frac{a^2}{1+a} = \frac{a}{1+a} = 1 - \frac{1}{1+a}.
    \end{align*}
    We want to maximize the above expression where $0 \leq a \leq \frac{1}{\lambda}$ (since $0 \leq x_i^\top A_{S \setminus i}^{-1} x_i \leq \frac{1}{\lambda})$.
    The expression is maximized when $a=\frac{1}{\lambda}$. Thus, $x_i^\top A_{T-1}^{-1} x_i \leq \frac{1}{\lambda (1 + \frac{1}{\lambda})} = \frac{1}{\lambda+1}$.
\end{proof}

\begin{lemma}\label{lem:x_i-x-data-norm}
    $(x_i^\top A^{-1}_{S} x)^2 \leq x_i^\top A^{-1}_{S} x_i \cdot x^\top A^{-1}_{S} x$, for any set of $S$ points such that $x_i \in S$, where $A_{S} = \lambda I + \sum_{x_t \in S} x_t x_t^\top$.
\end{lemma}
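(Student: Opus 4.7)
The plan is to recognize this as a direct application of the Cauchy--Schwarz inequality with respect to the inner product induced by the positive definite matrix $A_S^{-1}$. Since $A_S = \lambda I + \sum_{x_t \in S} x_t x_t^\top$ is a sum of a strictly positive multiple of the identity (assuming $\lambda > 0$) and rank-one positive semidefinite matrices $x_t x_t^\top$, it is positive definite, hence invertible with $A_S^{-1}$ also positive definite. Therefore $\langle u, v \rangle_{A_S^{-1}} \ldef u^\top A_S^{-1} v$ defines a genuine inner product on $\mathbb{R}^d$.

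First I would state this observation about $A_S^{-1}$ being positive definite and thus defining a valid inner product. Then I would apply the standard Cauchy--Schwarz inequality to the vectors $x_i$ and $x$ in this inner product space, which yields
\begin{align*}
    \left( x_i^\top A_S^{-1} x \right)^2 = \langle x_i, x \rangle_{A_S^{-1}}^2 \leq \langle x_i, x_i \rangle_{A_S^{-1}} \cdot \langle x, x \rangle_{A_S^{-1}} = \left(x_i^\top A_S^{-1} x_i\right) \cdot \left( x^\top A_S^{-1} x \right),
\end{align*}
which is exactly the claimed bound.

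Alternatively, if one prefers to avoid appealing to Cauchy--Schwarz abstractly, I would factor $A_S^{-1} = M^\top M$ using the symmetric square root $M = A_S^{-1/2}$ (which exists since $A_S^{-1}$ is symmetric positive definite), and then rewrite $x_i^\top A_S^{-1} x = (M x_i)^\top (M x)$, $x_i^\top A_S^{-1} x_i = \|M x_i\|^2$, and $x^\top A_S^{-1} x = \|M x\|^2$; the claim then reduces to the familiar Euclidean Cauchy--Schwarz $|(Mx_i)^\top (Mx)|^2 \leq \|Mx_i\|^2 \|Mx\|^2$. There is no genuine obstacle here; the only minor subtlety is making sure that $A_S$ is indeed invertible, which is immediate from $\lambda > 0$. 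Notably the hypothesis $x_i \in S$ is not actually required for the inequality to hold.
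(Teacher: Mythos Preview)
Your proof is correct. You invoke Cauchy--Schwarz in the inner product $\langle u,v\rangle = u^\top A_S^{-1} v$, which is valid because $A_S^{-1}$ is positive definite. You are also right that the hypothesis $x_i \in S$ plays no role.

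The paper takes a different, more hands-on route: it writes the spectral decomposition $A_S^{-1} = \sum_i \lambda_i u_i u_i^\top$, expands $x_i = \sum_i \alpha_i u_i$ and $x = \sum_i \beta_i u_i$ in the eigenbasis, and then reduces the claim to $\bigl(\sum_i \lambda_i \alpha_i \beta_i\bigr)^2 \leq \bigl(\sum_i \lambda_i \alpha_i^2\bigr)\bigl(\sum_i \lambda_i \beta_i^2\bigr)$, which it proves via Jensen's inequality with weights $p_i = \lambda_i \alpha_i^2 / \sum_j \lambda_j \alpha_j^2$ applied to $f(t)=t^2$. In effect, the paper is re-deriving the weighted Cauchy--Schwarz inequality from convexity. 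Your argument is shorter and more transparent because it names the inequality being used rather than re-proving it; the paper's argument is self-contained but adds no extra information (and implicitly assumes $\alpha_i \neq 0$ when setting $x_i = \beta_i/\alpha_i$, a minor gap your approach avoids entirely).
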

\begin{proof}
    We can decompose the terms as $A_{S}^{-1} = \sum_{i=1}^d \lambda_i u_i u_i^\top$, $x_i = \sum_{i=1}^d \alpha_i u_i$, and $x = \sum_{i=1}^d \beta_i u_i$. Using these decompositions, we compute the following two terms,
    \begin{align*}
        (x_i^\top A^{-1}_T x)^2 &= \left( \left( \sum_{i=1}^d \alpha_i u_i^\top \right) \left( \sum_{i=1}^d \lambda_i u_i u_i^\top \right) \left( \sum_{i=1}^d \beta_i u_i \right) \right)^2 \\
        &= \left( \sum_{i=1}^d \lambda_i \alpha_i \beta_i \right) ^2,
    \end{align*}
    and
    \begin{align*}
        x_i^\top A^{-1}_T x_i \cdot x^\top A^{-1}_T x &= \left( \sum_{i=1}^d \alpha_i u_i^\top \right) \left( \sum_{i=1}^d \lambda_i u_i u_i^\top \right) \left( \sum_{i=1}^d \alpha_i u_i \right) \left( \sum_{i=1}^d \beta_i u_i^\top \right) \left( \sum_{i=1}^d \lambda_i u_i u_i^\top \right) \left( \sum_{i=1}^d \beta_i u_i \right) \\
        &= \left( \sum_{i=1}^d \lambda_i \alpha_i^2 \right) \left( \sum_{i=1}^d \lambda_i \beta_i^2 \right).
    \end{align*}
    From Jensen's inequality, we know that $(\sum_{i=1}^d p_i x_i)^2 \leq \sum_{i=1}^d p_i x_i^2$ where $p_i > 0$ for all $i$ and $\sum_{i=1}^d p_i = 1$ since $f(x)=x^2$ is convex. Let $p_i = \lambda_i \alpha_i^2 / (\sum_{j=1}^d \lambda_j \alpha_j^2)$ (note that all $\lambda_i$'s $>0$) and let $x_i = \beta_i / \alpha_i$. This gives us 
    \begin{align*}
        \frac{\left( \sum_{i=1}^d \lambda_i \alpha_i \beta_i \right) ^2}{\left( \sum_{i=1}^d \lambda_i \alpha_i^2 \right) ^2} &\leq \frac{\sum_{i=1}^d \lambda_i \alpha_i^2 \cdot \frac{\beta_i^2}{\alpha_i^2}}{\sum_{i=1}^d \lambda_i \alpha_i^2} \\
        \left( \sum_{i=1}^d \lambda_i \alpha_i \beta_i \right) ^2 &\leq \left( \sum_{i=1}^d \lambda_i \alpha_i^2 \right) \left( \sum_{i=1}^d \lambda_i \beta_i^2 \right)
    \end{align*}
    This directly implies that $(x_i^\top A^{-1}_T x)^2 \leq x^\top A^{-1}_T x \cdot x_i^\top A^{-1}_T x_i$.
\end{proof}

\begin{restatable}{theorem}{bbqorigregretbound}\label{thm:bbq-orig-regret-bound}
    Let $\lambda = K \leq T$ be the regularization parameter and $0 < \kappa < 1$ be the sampling parameter of the \textsc{BBQSampler}. Then we have the following regret and query complexity bounds on the \textsc{BBQSampler},
    \begin{align*}
        R_T &= \min_\varepsilon \varepsilon T_{\varepsilon} + O \left( \frac{1}{\varepsilon} \bigg ( K + d \log T +\log\frac{T}{\delta}\bigg ) + \frac{1}{\varepsilon^{2/\kappa}}\right), \\
        N_T &= O(dT^\kappa \log T).
    \end{align*}
\end{restatable}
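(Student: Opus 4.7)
The plan is to adapt the standard selective-sampling analysis of Cesa-Bianchi et al.\ (2009) and Dekel et al.\ (2012) to our specific \textsc{BBQSampler}. The two central tools are: (i) a martingale-based concentration inequality controlling $\|w_{t-1} - u\|_{A_{t-1}}$ by roughly $\sqrt{K + d\log T + \log(T/\delta)}$ (essentially the self-normalized bound underlying \pref{prop:wt-regret}); and (ii) the elliptical potential lemma $\sum_{t=1}^T \min\{1, x_t^\top A_{t-1}^{-1} x_t\} = O(d\log T)$, which follows from a telescoping log-determinant argument on the sequence $A_0 \preceq A_1 \preceq \cdots \preceq A_T$ with $A_0 = \lambda I = K I$.

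First I would establish the query complexity $N_T$. Each queried round contributes, by definition of the query rule, at least $T^{-\kappa}$ to $\sum_t x_t^\top A_{t-1}^{-1} x_t$, while each single term is at most $1/(\lambda+1) \leq 1$ by \pref{lem:x_i-data-norm}. Rearranging the elliptical potential bound then yields $N_T \cdot T^{-\kappa} \leq O(d \log T)$, giving $N_T = O(d T^\kappa \log T)$ as claimed.

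For the regret, I would partition rounds $t \in [T]$ into three classes based on a free parameter $\varepsilon > 0$: (a) margin rounds with $|u^\top x_t| \leq \varepsilon$, contributing at most $\varepsilon T_\varepsilon$; (b) unqueried non-margin rounds; and (c) queried rounds. For class (b), we have $\|x_t\|_{A_{t-1}^{-1}} \leq T^{-\kappa/2}$ by the query rule, so
\begin{align*}
|w_{t-1}^\top x_t - u^\top x_t| \leq \|w_{t-1}-u\|_{A_{t-1}} \cdot \|x_t\|_{A_{t-1}^{-1}} \leq \sqrt{K + d\log T + \log(T/\delta)} \cdot T^{-\kappa/2},
\end{align*}
which is $< \varepsilon$ (so no mistake is made) whenever $T^{\kappa} \gtrsim (K + d\log T + \log(T/\delta))/\varepsilon^2$; the regime where this fails contributes the $1/\varepsilon^{2/\kappa}$ term. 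For class (c), I would bound the per-round instantaneous excess risk by $\frac{1}{\varepsilon}(u^\top x_t)^2 \cdot \ind{\text{mistake}}$ (standard Tsybakov-style trick) and sum via a second application of the elliptical potential lemma, contributing the $\frac{1}{\varepsilon}(K + d\log T + \log(T/\delta))$ term. Adding the three contributions and minimizing over $\varepsilon$ gives the stated bound.

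The main obstacle is the queried-rounds bound: carefully turning per-step ridge-regression concentration into a cumulative regret bound that scales as $\frac{1}{\varepsilon}(K + d\log T + \log(T/\delta))$ rather than something worse requires threading the martingale concentration uniformly over $t \leq T$ and then invoking the elliptical potential on the queried subsequence — the bookkeeping of the regularization $\lambda = K$ inside the log-determinant and of the failure probability $\delta$ is where subtle constants can go wrong. The margin and unqueried cases, by contrast, reduce to direct applications of Cauchy–Schwarz in the $A_{t-1}$-norm and are mostly mechanical.
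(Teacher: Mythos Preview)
Your treatment of the query complexity $N_T$ and of the queried-rounds term $Q_\varepsilon$ is essentially the paper's: elliptical potential for $N_T$, and the Tsybakov trick $|\Delta_t|\cdot\ind{\text{mistake}}\leq \tfrac{1}{\varepsilon}\Delta_t^2\cdot\ind{\text{mistake}}$ followed by the ridge-regression regret telescoping (Dekel et al., Lemmas 23--25) for $Q_\varepsilon$. These parts are fine.

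The gap is in your handling of the unqueried non-margin rounds. Your argument says: on the good event of \pref{prop:wt-regret}, for every unqueried $t$,
\[
|w_{t-1}^\top x_t - u^\top x_t|\;\le\;\|w_{t-1}-u\|_{A_{t-1}}\cdot\|x_t\|_{A_{t-1}^{-1}}\;\lesssim\;\sqrt{K+d\log T+\log(T/\delta)}\cdot T^{-\kappa/2},
\]
so there is no mistake whenever $T^\kappa\gtrsim (K+d\log T+\log(T/\delta))/\varepsilon^2$, and ``the regime where this fails contributes the $1/\varepsilon^{2/\kappa}$ term.'' That last sentence is not justified: when the condition fails, i.e.\ when $\varepsilon^2 T^\kappa\lesssim K+d\log T+\log(T/\delta)$, your bound on $U_\varepsilon$ is simply $U_\varepsilon\le T$, and the constraint only gives $T\lesssim\big((K+d\log T+\log(T/\delta))/\varepsilon^2\big)^{1/\kappa}$. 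So your approach yields an additive term of order $\big(K+d\log T+\log(T/\delta)\big)^{1/\kappa}\cdot \varepsilon^{-2/\kappa}$, not the stated $\varepsilon^{-2/\kappa}$. Since $\kappa<1$, the extra $(K+d\log T+\log(T/\delta))^{1/\kappa}$ factor is genuinely larger and is \emph{not} absorbed by the $\tfrac{1}{\varepsilon}(K+d\log T+\log(T/\delta))$ term.

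The paper avoids this by \emph{not} using the uniform-in-$t$ confidence bound for unqueried rounds. Instead it invokes the per-round sub-Gaussian tail bound from Cesa-Bianchi et al.\ (2009, Theorem~1): conditionally on the past, $\hat\Delta_t-\Delta_t$ has sub-Gaussian tails with variance proxy $r_t=x_t^\top A_{t-1}^{-1}x_t$, so $\ind{|\hat\Delta_t-\Delta_t|>\varepsilon}$ is replaced (in expectation) by $O\!\big(\exp(-\varepsilon^2/(8r_t))\big)$, a bound that is \emph{dimension- and $\lambda$-free} given $r_t$. Using $r_t\le T^{-\kappa}$ for unqueried rounds and then the crude relaxation $\exp(-\varepsilon^2 T^\kappa/8)\le\exp(-\varepsilon^2 t^\kappa/8)$, one sums $\sum_{t\ge 1}\exp(-\varepsilon^2 t^\kappa/8)=O\!\big((1/\varepsilon)^{2/\kappa}\big)$, which is exactly the clean term in the statement. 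If you want to recover the theorem as stated, you need this per-round exponential tail argument (or an equivalent) rather than the uniform Cauchy--Schwarz bound via \pref{prop:wt-regret}.
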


\begin{proof}
    Adapted from the analysis in \citet{dekel2012selective} and \citet{cesabianchi2009selective}.
    
    Let $\Delta_t = u^\top x_t$ and $\hat{\Delta}_t = w_t^\top x$. We decompose the regret as follows, 
    \begin{align*}
        R_T &\leq \varepsilon T_{\varepsilon} + \sum_{t=1}^T\bar{Z}_t \ind{\Delta_t\hat{\Delta}_t<0,\Delta_t^2>\varepsilon^2} + \sum_{t=1}^T Z_t\boldsymbol{1}\big\{\Delta_t\hat{\Delta}_t<0,\Delta_t^2>\boldsymbol{\varepsilon}^2\big\}|\Delta_t| \\
        &= \varepsilon T_{\varepsilon} + U_\varepsilon + Q_\varepsilon. \tag{regret decomposition from \citet{dekel2012selective} Lemma 3}
    \end{align*}
    
    We define an additional term
    \begin{align*}
        \hat{\Delta}_t' =
        \begin{cases}
            \mathrm{sign}(\hat{\Delta}_t) & \text{if } |\hat{\Delta}_t| > 1 \\
            \hat{\Delta}_t & \text{otherwise}
        \end{cases}.
    \end{align*}
    Then we have
    \begin{align*}
        Q_\varepsilon &\leq \frac1\varepsilon\sum_{t=1}^TZ_t\ind{\hat{\Delta}_t\Delta_t<0} \Delta_t^2 \\
        &= \frac1\varepsilon\sum_{t=1}^TZ_t \ind{\hat{\Delta}_t'\Delta_t<0}\Delta_t^2 \tag{$\hat{\Delta}_t$ and $\hat{\Delta}_t'$ have the same sign}\\
        &\leq \frac1\varepsilon\sum_{t=1}^TZ_t(\Delta_t-\hat{\Delta}_t)^2 \tag{$\hat{\Delta}_t'\Delta_t<0$ implies $\Delta_t^2 \leq (\Delta_t-\hat{\Delta}_t')^2$} \\
        &\leq \frac{2}{\varepsilon} \bigg ( \sum_{t=1}^T Z_t ((\Delta_t -y)^2 - (\hat{\Delta}_t -y)^2) + 144\log\frac{T}{\delta}\bigg )  \tag{\citet{dekel2012selective} Lemma 23 (i)} \\
        &\leq \frac{4}{\varepsilon} \bigg ( \sum_{t=1}^T Z_t \bigg (d_{t-1}(w^*, w_{t-1}) - d_{t}(w^*, w_t) + 2\log \frac{|A_{t}|}{|A_{t-1}|} \bigg ) +144\log\frac{T}{\delta} \bigg ) \tag{\citet{dekel2012selective} Lemma 25 (iv) where $d_t(w^*,w) = \frac{1}{2}(w^*-w)^\top A_t (w^*-w)$} \\
        &\leq \frac{4}{\varepsilon} \bigg ( d_0(w^*, w_0) + \log |A_T| + 144\log\frac{T}{\delta} \bigg ) \\
        &\leq \frac{2}{\varepsilon} \bigg ( \lambda + d \log (\lambda + N_T) + 144\log\frac{T}{\delta} \bigg ) \tag{\citet{dekel2012selective} Lemma 24 (iii)}\\
        &= O \bigg ( \frac{1}{\varepsilon} \bigg ( \lambda + d \log T + \log \frac{T}{\delta} \bigg ) \bigg ).
    \end{align*}

    Let $r_t = x_t^\top A_t^{-1} x_t$. Then we have
    \begin{align*}
        U_\varepsilon &\leq \sum_{t=1}^T \bar{Z}_t\mathrm{~}\ind{|\hat{\Delta}_t - \Delta_t| > \varepsilon}\\
        &\leq (2+e) \sum_{t=1}^T \bar{Z}_t \exp \bigg (-\frac{\varepsilon^2}{8r_t} \bigg ) \tag{following \citet{cesabianchi2009selective} Theorem 1}\\
        &= (2+e) \sum_{t=1}^T \bar{Z}_t \exp \bigg (-\frac{\varepsilon^2 T^{\kappa}}{8} \bigg ) \tag{when $\bar{Z}_t = 1$, $r_t < T^{-\kappa}$ by the query condition}\\
        &\leq (2+e) \sum_{t=1}^T \bar{Z}_t \exp \bigg (-\frac{\varepsilon^2 t^{\kappa}}{8} \bigg )  \\
        &\leq (2+e) \lceil 1/\kappa \rceil ! \bigg (\frac{8}{\varepsilon^2} \bigg )^{1/\kappa} \tag{following \citet{cesabianchi2009selective} Theorem 1}\\
        &\leq O \left( \frac{1}{\varepsilon^{2 / \kappa}}\right).
    \end{align*}
    Putting the above terms together completes the proof of regret. 

    Now for the number of queries. Let $r_t = x_t^\top A_t^{-1} x_t$. Consider the following sum,
    \begin{align*}
        \sum_{t=1}^T Z_t r_t &\leq \sum_{t=1}^T Z_t \cdot \log \frac{|A_t|}{|A_{t-1}|} \tag{Lemma 24 from \citet{dekel2012selective} where $|\cdot|$ is the determinant} \\
        &= \log \frac{|A_T|}{|A_0|} \\
        &\leq \log |A_T| \\
        &\leq d \log (\lambda + N_T) \\
        &\leq d \log (T).
    \end{align*}
    We use the above sum to bound the number of queries,
    \begin{align*}
        N_T &= \sum_{r_t > T^{-\kappa}} 1 \\
        &\leq \sum_{r_t > T^{-\kappa}} \frac{r_t}{T^{-\kappa}} \\
        &\leq T^\kappa \sum_{r_t > T^{-\kappa}} r_t \\
        &\leq O(dT^\kappa \log (T)). \tag{using the sum above}
    \end{align*}
\end{proof}

\newpage
\section{Missing Details of \textsc{GeneralBBQSampler}} \label{app:gen-bbq-sampler} 
Define
\begin{align*}
    D^2(x; \langle x_1, \dots, x_{t-1} \rangle ) = \sup_{f,g \in \mathcal{F}} \frac{(f(x)-g(x))^2}{\sum_{i=1}^t (f(x_i)-g(x_i))^2 + 1}.
\end{align*}

\begin{algorithm}
    \begin{algorithmic}[1]
    \Require 
    \begin{minipage}[t] {0.8\textwidth} 
    \begin{itemize}[noitemsep, leftmargin=*] 
        \item Dataset $S$ of size $T$
        \item Confidence level $\delta\in(0,1]$      
    \end{itemize}
    \end{minipage}
    \vspace{2mm}
        \State \textbf{Initialize:} $\mathcal{P}_{0}=S$ and $\mathcal{R}_{0}=S$
        \For{$\ell=1,2,\ldots$}
        \State Initialize within stage $\ell$: $\varepsilon_{\ell}=2^{-\ell} / \sqrt{\mathfrak{R}(T, \delta)}$, $t=0$, $\mathcal{Q}_{\ell}=\emptyset$ \\
        \While{$\mathcal{P}_{\ell-1} \setminus \mathcal{Q}_{\ell} \neq \emptyset$ and $\max_{x\in\mathcal{P}_{\ell-1} \setminus \mathcal{Q}_{\ell}} D(x,\mathcal{Q}_{\ell})>\varepsilon_{\ell}$}
        \State $t=t+1$
        \State Pick $x_{\ell, t} \in \mathrm{argmax}_{x\in\mathcal{P}_{\ell-1} \setminus \mathcal{Q}_{\ell}} D(x,\mathcal{Q}_{\ell})$
        \State Update $\mathcal{Q}_\ell = \mathcal{Q}_\ell \cup \{x_{\ell, t}\}$
        \EndWhile \\
        \State Set $T_\ell=t$, the number of queries made in stage $\ell$
        \If{$\mathcal{Q}_{\ell} \neq \emptyset$}
        \State Query the labels $y_{\ell,1},\ldots,y_{\ell,T_\ell}$ associated with the unlabeled data in $\mathcal{Q}_\ell$ and compute $$\hat{f}_{\mathcal{Q}_\ell}=\underset{f\in\mathcal{F}}{\operatorname*{argmin}}\sum_{t=1}^{T_\ell}\left(\frac{1+y_{\ell,t}}2-f(x_{\ell,t})\right)^2$$
        \State Set $\mathcal{C}_\ell=\{x\in\mathcal{P}_{\ell-1} \setminus \mathcal{Q}_{\ell}: |\hat{f}_{\mathcal{Q}_\ell}(x)-1/2|> 3 \cdot 2^{-\ell}\}$
        \Else $~\hat{f}_\ell=1/2, \mathcal{C}_\ell = \emptyset$
        \EndIf \\
        \State Set $\mathcal{P}_{\ell}=\mathcal{P}_{\ell-1}\setminus \mathcal{Q}_{\ell}$ and $\mathcal{R}_{\ell}=\mathcal{R}_{\ell-1}\setminus (\mathcal{C}_{\ell}\cup\mathcal{Q}_{\ell})$
        \If{$\mathfrak{D}(\mathcal{F},\mathcal{P}) \cdot \mathfrak{R}(T, \delta)/2^{-\ell+1}>2^{-\ell+1}|\mathcal{R}_\ell|$}
        \State Set $L = \ell$
        \State \textbf{exit for loop}
        \EndIf
        \EndFor \\
        \State Set $\mathcal{Q} = \bigcup_{\ell = 1}^L \mathcal{Q}_\ell$
        \State Compute $$\hat{f}_{\mathcal{Q}}=\underset{f\in\mathcal{F}}{\operatorname*{argmin}}\sum_{(x_i, y_i) \in \mathcal{Q}}\left(\frac{1+y_i}2-f(x_i)\right)^2$$
        \Return $\hat{f}_{\mathcal{Q}}$, $\mathcal{Q}$
    \end{algorithmic}
    \caption{\textsc{GeneralBBQSampler} (Slightly Modified Version of Algorithm 2 from \citet{gentile2022fastratespoolbasedbatch})} \label{alg:gen-bbq} 
\end{algorithm}

\section{Proofs from \pref{sec:gen-bbq-unlearning}}

\subsection{Notation}
\begin{itemize}
    \item $[n] = \{1, 2, \dots, n\}$
    \item $\hat{f}_{S}$ - ERM computed over set $S$
    \item $\hat{f}_{S \setminus i}$ - ERM computed over set $S \setminus \{x_i\}$
    \item $\mathcal{Q}_{\ell}(t)$ - the set of points queried in stage $\ell$ up to time $t$
    \item $\varepsilon_{\ell}=2^{-\ell} / \sqrt{\mathfrak{R}(T, \delta)}$
    \item $D^2(x; \langle x_1, \dots, x_{t-1} \rangle ) = \sup_{f,g \in \mathcal{F}} \frac{(f(x)-g(x))^2}{\sum_{i=1}^t (f(x_i)-g(x_i))^2 + 1}$
\end{itemize}
\vspace{3mm}

\genbbqunlearning*
\begin{proof}
    Fix any sample $S$ and set of deletions $U$. Define $S' = \core(S) = \mathcal{Q}$. Clearly, $S' \subseteq S$. The core set of the \textsc{GeneralBBQSampler} is exactly the set of points that it queries.
    Thus, applying \pref{thm:gen-monotonic}, we know $\core(\core(S) \setminus U) = \core(S) \setminus U$. $A(S' \setminus U, \emptyset)$ returns an ERM over $\core(\core(S) \setminus U)$, which is exactly $\core(S) \setminus U$, and stores that ERM and the set $\core(S) \setminus U$. To process the deletion of $U$, $A(S, U)$ returns an ERM over $\core(S) \setminus U$ and stores that ERM and the set $\core(S) \setminus U$. Thus, $\state_A(S, U) = \state_A(S' \setminus U, \emptyset)$ for all $U \subseteq S$.
\end{proof}

\genbbqmonotonic*
\begin{proof}
    Consider the deletion of any $x_j \in S$. Let $\mathcal{Q} = \bigcup_{\ell=1}^L \mathcal{Q}_\ell$ be the set of queried points after executing the $\textsc{GeneralBBQSampler}$ on $S$, and let $\mathcal{Q'} = \bigcup_{\ell=1}^L \mathcal{Q'}_\ell$ be the set of queried points after executing the $\textsc{GeneralBBQSampler}$ on $\mathcal{Q} \setminus \{x_j\}$. Note that $\core (S) = \mathcal{Q}$ and $\core (\core (S) \setminus \{x_j\}) = \mathcal{Q}'$. We want to show that $\mathcal{Q}' = \mathcal{Q} \setminus \{x_j\}$.

    First, note that the deletion of any unqueried point has no effect on the query condition of any other point, so the set of queried points when executing the \textsc{GeneralBBQSampler} on $\mathcal{Q}$ is exactly $\mathcal{Q}$. If $x_j \notin \mathcal{Q}$, then clearly $\mathcal{Q}' = \mathcal{Q} \setminus \{x_j\}$. Thus, we focus on the case where $x_j \in \mathcal{Q}$.

    Next, note that the query condition of points queried before $x_j$ only depends on points in $\mathcal{Q} \setminus \{x_j\}$. Thus, all points queried before $x_j$ will still be queried.

    Let $\ell^*$ be the stage and $t^*$ be the time at which $x_j$ was originally queried. Let $\mathcal{Q}_\ell (t')$ be the set of points queried in stage $\ell$ before time $t'$ in the original run of the $\textsc{GeneralBBQSampler}$. For each point queried at time $t' > t^*$, we know $D^2(x, \mathcal{Q}_{\ell^*} (t')) > \varepsilon_{\ell^*}^2$. Now consider
    \begin{align*}
        D^2(x, \mathcal{Q}_{\ell^*} (t') \setminus \{x_j\}) &= \sup_{f,g \in \mathcal{F}} \frac{(f(x)-g(x))^2}{\sum_{\mathcal{Q}_{\ell^*} (t') \setminus \{x_j\}} (f(x_i)-g(x_i))^2 + 1} \\
        &\geq \sup_{f,g \in \mathcal{F}} \frac{(f(x)-g(x))^2}{\sum_{\mathcal{Q}_{\ell^*} (t')} (f(x_i)-g(x_i))^2 + 1} \\
        &> \varepsilon_\ell^2.
    \end{align*}

    Furthermore, observe that
    \begin{align*}
        \underset{x \in (\mathcal{Q}_{\ell^*-1} \setminus \{x_j\})\setminus \mathcal{Q}_{\ell^*} (t')}{\operatorname*{argmax}} D^2 (x, \mathcal{Q}'_{\ell^*} \setminus \{x_j\}) = \underset{x \in \mathcal{P}_{\ell^*-1} \setminus \mathcal{Q}_{\ell^*} (t')}{\operatorname*{argmax}} D^2 (x, \mathcal{Q}_{\ell^*} (t')),
    \end{align*}
    where the first term represents what would be queried at time $t'$ when executing on $\mathcal{Q} \setminus \{x_j\}$ and the second term represents what would be queried at time $t'$ when executing on $\mathcal{Q}$. Thus, we will query the same set of points in both executions after time $t$ in stage $\ell$.
    Thus, at the end of stage $\ell$, when executing on dataset $S$ with $x_j$ deleted, we have queried every point in $\mathcal{Q}_\ell \setminus \{x_j\}$.

    At the end of stage $\ell$ in the execution on $\mathcal{Q}$, the set of points remaining in the pool is exactly $\bigcup_{\ell = \ell^* + 1}^L \mathcal{Q}_\ell$. Each of the query conditions in future stages after stage $\ell^*$ only depends on the points in $\bigcup_{\ell = \ell^* + 1}^L \mathcal{Q}_\ell$. These query conditions are unaffected, and all points in $\bigcup_{\ell = \ell^* + 1}^L \mathcal{Q}_\ell$ will be queried.

    Thus, the set of queried points after executing the \textsc{GeneralBBQSampler} on $\mathcal{Q} \setminus \{x_j\}$ is exactly $\mathcal{Q} \setminus \{x_j\}$. We can apply the above argument inductively for each $x_j \in U$ to conclude that $\core (\core (S) \setminus U) = \core (S) \setminus U$.
\end{proof}

\vspace{3mm}
\genbbqmemory*
\begin{proof}
    For points in $\mathcal{R}_L = \mathcal{P} \setminus (\bigcup_{\ell=1}^L \mathcal{Q}_\ell \cup \bigcup_{\ell=1}^L \mathcal{C}_\ell)$, we know that $|\hat{f}_{\mathcal{Q}_L}(x)-1/2| \leq 3 \cdot 2^{-L}$ by the design of \pref{alg:gen-bbq}. We also know that for all $x \in \mathcal{R}_L$, $|f^*(x) - \hat{f}_{\mathcal{Q}_\ell}(x)| \leq 2^{-L}$ from \pref{lem:close-to-f-star}. Thus, for all $x \in \mathcal{R}_L$, $|f^*(x) - 1/2| \leq 4 \cdot 2^{-L} = 2^{-L+2}$.

From the design of \pref{alg:gen-bbq}, we also know that $|\mathcal{R}_L| = |\mathcal{P} \setminus (\bigcup_{\ell=1}^L \mathcal{Q}_\ell \cup \bigcup_{\ell=1}^L \mathcal{C}_\ell)|\leq 4^{L-1} \cdot \mathfrak{D}(\mathcal{F},S)$.

    Let 
    \begin{align*}
        T_{\varepsilon} = \sum_{t=1}^T \mathbbm{1}\{|\mathrm{sign}(f^*(x_t)) - 1/2| \leq \varepsilon\}
    \end{align*}
    When $2^{-L+2} \leq \varepsilon$, we can upper bound $|\mathcal{R}_L|$ by $T_{\varepsilon}$. Thus, we have
    \begin{align*}
        4^{L-1} \cdot \mathfrak{D}(\mathcal{F},S) \cdot \mathfrak{R}(T, \delta) \leq T_{\varepsilon}
    \end{align*}
    
    Otherwise, when $2^{-L+2} \geq \varepsilon$, we know that
    \begin{align*}
        |\mathcal{R}_L| &\leq 4^{L-1} \cdot \mathfrak{D}(\mathcal{F}, S) \cdot \mathfrak{R}(T, \delta) \tag{\pref{thm:gen-bbq-orig-query-bound}}\\
        &\leq \frac{4}{\varepsilon^2} \cdot \mathfrak{D}(\mathcal{F}, S) \cdot \mathfrak{R}(T, \delta) \tag{$2^L \leq \frac{4}{\varepsilon}$ }
    \end{align*}

    Thus, we have that
    \begin{align*}
        4^{L-1} \cdot \mathfrak{D}(\mathcal{F}, S) \cdot \mathfrak{R}(T, \delta) \leq \min_{\varepsilon} \left \{T_{\varepsilon} + \frac{4}{\varepsilon^2} \cdot \mathfrak{R}(T, \delta) \cdot \mathfrak{D}(\mathcal{F}, S) \right \}.
    \end{align*}

    From \pref{thm:gen-bbq-orig-query-bound}, we know that
    \begin{align*}
        N_T &\leq 4^{L+1} \cdot \mathfrak{R}(T, \delta) \cdot \mathfrak{D}(\mathcal{F}, S) \tag{\pref{thm:gen-bbq-orig-query-bound}}\\
        &\leq O \left(\min_{\varepsilon} \left \{T_{\varepsilon} + \frac{64}{\varepsilon^2} \cdot \mathfrak{R}(T, \delta) \cdot \mathfrak{D}(\mathcal{F}, S) \right \} \right).
    \end{align*}    
\end{proof}

\vspace{3mm}
\genbbqexcessrisk*
\begin{proof}
    From \pref{lem:unif-stability}, we know that the regression oracle satisfies the following, such that for all $S \in \mathcal{Z}^n$, for all $i \in [n]$, for all $\{x_1, \dots, x_n\}$
\begin{align*}
    \sum_{t=1}^n (\hat{f}_{S \setminus i} (x_t) - \hat{f}_S (x_t))^2 \leq n \cdot \beta(n)^2.
\end{align*}

Let $\mathcal{Q}$ be the set of queried points and let $N_T$ be the number of queried points. Let $D$ be a set of $K$ deletions. Let $D_i$ be the set of the first $i$ deletions. Then, we have
\begin{align*}
    \sum_{\mathcal{Q}_\ell}(\hat{f}_{\mathcal{Q} \setminus D_{i+1}} (x_t) - \hat{f}_{\mathcal{Q}\setminus D_{i}} (x_t))^2 \leq N_T \cdot \beta(N_T)^2.
\end{align*}

In stage $\ell$ of \pref{alg:gen-bbq}, we know that for all $x \in \mathcal{C}_{\ell}$ (the set of unqueried points for which we are confident on the label), we have
\begin{align*}
    \sup_{f, g \in \mathcal{F}} (f(x)-g(x))^2 \leq \varepsilon_{\ell}^2 \left(\sum_{Q_{\ell}} (f(x_i)-g(x_i))^2 + 1 \right),
\end{align*}
due to the query condition of stage $\ell$.

Thus,
\begin{align*}
    \sum_{t=1}^{N_T} (\hat{f}_{\mathcal{Q} \setminus D} (x_t) - \hat{f}_{\mathcal{Q}} (x_t))^2 &\leq \sum_{t=1}^{N_T} \left( \sum_{i=1}^K (\hat{f}_{\mathcal{Q} \setminus D_{i}} (x_t) - \hat{f}_{\mathcal{Q} \setminus D_{i-1}} (x_t)) \right)^2 \\
    &\leq K^2 \sum_{t=1}^{N_T} \left( \frac{1}{K} \sum_{i=1}^K (\hat{f}_{\mathcal{Q} \setminus D_{i}} (x_t) - \hat{f}_{\mathcal{Q} \setminus D_{i-1}} (x_t)) \right)^2 \\
    &\leq K^2 \sum_{t=1}^{N_T} \frac{1}{K} \sum_{i=1}^m (\hat{f}_{\mathcal{Q} \setminus D_{i}} (x_t) - \hat{f}_{\mathcal{Q} \setminus D_{i-1}} (x_t))^2 \tag{Jensen's Inequality} \\
    &= K \sum_{t=1}^{N_T} \sum_{i=1}^K (\hat{f}_{\mathcal{Q}\setminus D_{i}} (x_t) - \hat{f}_{\mathcal{Q} \setminus D_{i-1}} (x_t))^2 \\
    &\leq K \sum_{i=1}^K \sum_{t=1}^{N_T}(\hat{f}_{\setminus D_{i}} (x_t) - \hat{f}_{\setminus D_{i-1}} (x_t))^2 \\
    &\leq K \sum_{i=1}^K N_T \cdot \beta(N_T)^2 \tag{applying \pref{lem:unif-stability}} \\
    &= K^2 \cdot N_T \cdot \beta(N_T)^2
\end{align*}

Plugging this in, we have
\begin{align*}
    \sup_{f, g \in \mathcal{F}} (f(x)-g(x))^2 &\leq \varepsilon_{\ell}^2 \left(\sum_{Q_{\ell}} (f(x_i)-g(x_i))^2 + 1 \right) \\
    &\leq \varepsilon_{\ell}^2 \cdot K^2 \cdot N_T \cdot \beta(N_T)^2
\end{align*}
for all $x \in \mathcal{C}_\ell$. \\

The above implies that for any $x \in \mathcal{C}_\ell$,
\begin{align*}
    (\hat{f}_{\mathcal{Q} \setminus D} (x) - \hat{f}_{\mathcal{Q}} (x))^2 \leq \varepsilon_{\ell}^2 \cdot K^2 \cdot N_T \cdot \beta(N_T)^2 \\
    |\hat{f}_{\setminus D} (x_t) - \hat{f} (x_t))| \leq \varepsilon_{\ell} \cdot K \cdot \sqrt{N_T} \cdot \beta(N_T)
\end{align*}

Furthermore, from \pref{lem:correct-margin-on-unqueried}, we know that for all stages $\ell$, for all $x \in \mathcal{C}_\ell$,
\begin{align*}
    |\hat{f}_{\mathcal{Q}} (x) - 1/2| > 2^{-\ell}.
\end{align*}
and $\mathrm{sign}(\hat{f}_{\mathcal{Q}}(x) - 1/2) = \mathrm{sign}(f^*(x) - 1/2)$. Thus, $\hat{f}_{\mathcal{Q}}$ classifies all of the points in $\bigcup_{\ell=1}^L \mathcal{C}_\ell$ correctly with some margin. After deletion, $\hat{f}_{\mathcal{Q} \setminus D}$ and $\hat{f}_{\mathcal{Q}}$ agree on the sign of $x$ when 
\begin{align*}
    \varepsilon_{\ell} \cdot K \cdot \sqrt{N_T} \cdot \beta(N_T) &\leq 2^{-\ell} \\
    2^{-\ell} \cdot \frac{1}{\sqrt{\mathfrak{R}(T, \delta)}} \cdot K \cdot \sqrt{N_T} \cdot \beta(N_T) &\leq 2^{-\ell} \tag{$\varepsilon_\ell = 2^{-\ell} / \sqrt{\mathfrak{R}(T, \delta)}$} \\
    \frac{1}{\sqrt{\mathfrak{R}(T, \delta)}} \cdot K \cdot \sqrt{N_T} \cdot \beta(N_T) &\leq 1 \\
    K &\leq \frac{\sqrt{\mathfrak{R}(T, \delta)}}{\sqrt{N_T} \cdot \beta(N_T)}.
\end{align*}

Thus, for up to 
\begin{align*}
    K &\leq \frac{\sqrt{\mathfrak{R}(T, \delta)}}{\sqrt{N_T} \cdot \beta(N_T)}
\end{align*}
deletions, $\mathrm{sign}(\hat{f}_{\mathcal{Q} \setminus D}(x) - 1/2) = \mathrm{sign}(\hat{f}_{\mathcal{Q}}(x) - 1/2) = \mathrm{sign}(f^*(x) - 1/2)$ on all of the points in $\bigcup_{\ell=1}^L \mathcal{C}_\ell$.

For points in $\mathcal{R}_L = \mathcal{P} \setminus (\bigcup_{\ell=1}^L \mathcal{Q}_\ell \cup \bigcup_{\ell=1}^L \mathcal{C}_\ell)$, we know that $|\hat{f}_{\mathcal{Q}_L}(x)-1/2| \leq 3 \cdot 2^{-L}$ by the design of \pref{alg:gen-bbq}. We also know that for all $x \in \mathcal{R}_L$, $|f^*(x) - \hat{f}_{\mathcal{Q}_\ell}(x)| \leq 2^{-L}$ from \pref{lem:close-to-f-star}. Thus, for all $x \in \mathcal{R}_L$, $|f^*(x) - 1/2| \leq 4 \cdot 2^{-L} = 2^{-L+2}$.

From the design of \pref{alg:gen-bbq}, we also know that $|\mathcal{R}_L| = |\mathcal{P} \setminus (\bigcup_{\ell=1}^L \mathcal{Q}_\ell \cup \bigcup_{\ell=1}^L \mathcal{C}_\ell)|\leq 4^{L-1} \cdot \mathfrak{D}(\mathcal{F},S) \cdot \mathfrak{R}(T, \delta)$.

    $\mathcal{R}_L$ is the set of points which we did not query but we are unsure of the label. We know from \pref{thm:gen-bbq-memory} that 
    \begin{align*}
        |\mathcal{R}_L|, N_T = O \left(\min_{\varepsilon} \left \{T_{\varepsilon} + \frac{1}{\varepsilon^2} \cdot \mathfrak{R}(T, \delta) \cdot \mathfrak{D}(\mathcal{F}, S) \right \} \right).
    \end{align*}

    As shown above, for all other unqueried points, $\hat{f}_{\mathcal{Q} \setminus D}$ agrees with the classification of the Bayes optimal classifier, despite not using these points during training. In particular, we have that
    \begin{align*}
        \hat{L}_{S \setminus \{\mathcal{Q} \setminus D\}}(\hat{f}_{\mathcal{Q} \setminus D}) = \sum_{S \setminus \{\mathcal{Q} \setminus D\}} \mathbbm{1}\{\mathrm{sign}(\hat{f}_{\mathcal{Q} \setminus D}(x) - 1/2) \neq \mathrm{sign}(f^* (x) - 1/2)\} \leq N_T + K,
    \end{align*}
    because $\hat{f}_{\mathcal{Q} \setminus D}$ and $f^*(x)$ may disagree on the classification of the $K$ deleted queried points and the points in $N_T$ points in $\mathcal{R}_L$; they must agree on all other unqueried points.

    Similarly to the linear case, we can use techniques from generalization for sample compression algorithms \citep{lecturenotes-tewari} to convert the empirical classification loss to an excess risk bound for $\hat{f}_{\mathcal{Q} \setminus D}$. First observe that
\begin{align*}
    \mathcal{E}(\hat{h}) &= \mathbb{E}_{(x, y) \sim \mathcal{D}}[\mathbbm{1}\{\mathrm{sign}(\hat{f}_{\mathcal{Q} \setminus D}(x) - 1/2) \neq y\} - \mathbbm{1}\{\mathrm{sign}(f^* (x) - 1/2) \neq y\}] \\
    &= \mathbb{E}_{(x, y) \sim \mathcal{D}}[|2|f^*(x) - 1/2| - 1| \cdot \mathbbm{1}\{\mathrm{sign}(\hat{f}_{\mathcal{Q} \setminus D}(x) - 1/2) \neq \mathrm{sign}(f^* (x) - 1/2)\}] \\
    &\leq \mathbb{E}_{(x, y) \sim \mathcal{D}}[\mathbbm{1}\{\mathrm{sign}(\hat{f}_{\mathcal{Q} \setminus D}(x) - 1/2) \neq \mathrm{sign}(f^* (x) - 1/2)\}]
\end{align*}

We look to bound the loss of $L(\hat{f}_{\mathcal{Q} \setminus D}) = \mathbb{E}_{(x, y) \sim \mathcal{D}}[\mathbbm{1}\{\mathrm{sign}(\hat{f}_{\mathcal{Q} \setminus D}(x) - 1/2) \neq \mathrm{sign}(f^* (x) - 1/2)\}]$. We are interested in the event that there exists a $\mathcal{Q} \setminus D \subseteq S$, $|\mathcal{Q} \setminus D| = l$ such that $\hat{L}_{S \setminus \{\mathcal{Q} \setminus D\}}(\hat{f}_{\mathcal{Q} \setminus D}) \leq N_T + K$ and $L(\hat{f}_{\mathcal{Q} \setminus D}) \geq \varepsilon$
\begin{align*}
    \Pr[\exists~ &\mathcal{Q} \setminus D \subseteq S \text{ such that } \hat{L}_{S \setminus \{\mathcal{Q} \setminus D\}}(\hat{f}_{\mathcal{Q} \setminus D}) \leq N_T + K \text{ and } L(\hat{f}_{\mathcal{Q} \setminus D}) \geq \varepsilon] \\
    &\leq \sum_{l=1}^T \Pr[\exists~ \mathcal{Q} \setminus D \subseteq S,~ |\mathcal{Q} \setminus D| = l \text{ such that } \hat{L}_{S \setminus \{\mathcal{Q} \setminus D\}}(\hat{f}_{\mathcal{Q} \setminus D}) \leq N_T + K \text{ and } L(\hat{f}_{\mathcal{Q} \setminus D}) \geq \varepsilon]\\
    &\leq \sum_{l=1}^T \sum_{\mathcal{Q} \setminus D \subseteq S, |\mathcal{Q} \setminus D| = l} \Pr[\hat{L}_{S \setminus \{\mathcal{Q} \setminus D\}}(\hat{f}_{\mathcal{Q} \setminus D}) \leq N_T + K \text{ and } L(\hat{f}_{\mathcal{Q} \setminus D}) \geq \varepsilon] \\
    &= \sum_{l=1}^T \sum_{\mathcal{Q} \setminus D \subseteq S, |\mathcal{Q} \setminus D| = l} \mathbb{E}\left[\Pr_{S \setminus \{\mathcal{Q} \setminus D\}}[\hat{L}_{S \setminus \{\mathcal{Q} \setminus D\}}(\hat{f}_{\mathcal{Q} \setminus D}) \leq N_T + K \text{ and } L(\hat{f}_{\mathcal{Q} \setminus D}) \geq \varepsilon \mid{} \mathcal{Q} \setminus D]\right]
\end{align*}

Let $|\mathcal{Q}| = N_T$ and let $|D| = K$, where $N_T - K = l$.
Now for any fixed $\mathcal{Q} \setminus D$, the above probability is just the probability of having a true risk greater than $\varepsilon$ and an empirical risk at most $N_T + K$ on a test set of size $T - N_T + K$. Now for any random variable $z \in [0, 1]$, if $\mathbb{E}[z] \geq \varepsilon$ then $\Pr[z = 0] \leq 1 - \varepsilon$. Thus, for a given $\mathcal{Q} \setminus D$,
\begin{align*}
    \Pr_{S \setminus \{\mathcal{Q} \setminus D\}}[\hat{L}_{S \setminus \{\mathcal{Q} \setminus D\}}(\hat{f}_{\mathcal{Q} \setminus D}) \leq N_T + K \text{ and } L(\hat{f}_{\mathcal{Q} \setminus D}) \geq \varepsilon] \leq (1 - \varepsilon)^{T-2N_T}
\end{align*}

Plugging this in above, we have
\begin{align*}
    \Pr[\exists~ &\mathcal{Q} \setminus D \subseteq S,~ |\mathcal{Q} \setminus D| = l \text{ such that } \hat{L}_{S \setminus \{\mathcal{Q} \setminus D\}}(\hat{f}_{\mathcal{Q} \setminus D}) \leq N_T + K \text{ and } L(\hat{f}_{\mathcal{Q} \setminus D}) \geq \varepsilon]\\
    &\leq \sum_{l=1}^T \sum_{\mathcal{Q} \setminus D \subseteq S, |\mathcal{Q} \setminus D| = l} (1 - \varepsilon)^{T-2N_T} \\
    &\leq \sum_{l=1}^T T^l \cdot (1 - \varepsilon)^{T-2N_T} \\
    &= \sum_{l=1}^T T^{N_T - K} \cdot (1 - \varepsilon)^{T-2N_T} \\
    &= \sum_{l=1}^T T^{N_T-K} \cdot (1 - \varepsilon)^{T-2N_T} \\
    &\leq \sum_{l=1}^T T^{N_T} \cdot e^{-\varepsilon(T-2N_T)} \\
    &= T^{N_T+1} \cdot e^{-\varepsilon(T-2N_T)}
\end{align*}

We want this probability to be at most $\delta$. Setting $\varepsilon$ appropriately, we have
\begin{align*}
    \varepsilon &= \frac{1}{T - 2N_T} \cdot ((N_T + 1) \log T + \log (1/\delta)).
\end{align*}

Thus, with probability at least $1 - \delta$, 
\begin{align*}
    L(\hat{f}_{\mathcal{Q} \setminus D}) \leq \frac{1}{T - 2N_T} \cdot ((N_T + 1) \log T + \log (1/\delta)).
\end{align*}

This implies that with probability at least $1 - \delta$, 
\begin{align*}
    \mathcal{E}(\hat{h}) \leq \frac{1}{T - 2N_T} \cdot ((N_T + 1) \log T + \log (1/\delta)).
\end{align*}
\end{proof}

\subsection{Auxiliary Results}

\begin{lemma} \label{lem:close-to-f-star}
    With probability at least $1 - \delta$, for all stages $\ell$, for all $x \in \mathcal{P}_{\ell-1} \setminus \mathcal{Q}_\ell$,
    \begin{align*}
        |f^*(x) - \hat{f}_{\mathcal{Q}}(x)| \leq 2^{-\ell} \text{ and } |f^*(x) - \hat{f}_{\mathcal{Q}_\ell}(x)| \leq 2^{-\ell}
    \end{align*}
\end{lemma}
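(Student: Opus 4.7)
The plan is to combine the termination rule of the stage-$\ell$ while loop with the assumed regression oracle bound from \pref{thm:gen-bbq-memory}. First I would fix a stage $\ell$ and define a ``good event'' $\mathcal{E}_\ell$ on which both ERMs of interest, namely $\hat{f}_{\mathcal{Q}_\ell}$ (the within-stage regression fit) and $\hat{f}_{\mathcal{Q}}$ (the final ERM over all queried points $\mathcal{Q} = \bigcup_{\ell=1}^L \mathcal{Q}_\ell$), satisfy
\begin{align*}
\sum_{x_i \in \mathcal{Q}_\ell} (f^*(x_i) - \hat{f}(x_i))^2 \;\leq\; \mathfrak{R}(T, \delta).
\end{align*}
For $\hat{f}_{\mathcal{Q}_\ell}$ this is directly the hypothesized bound. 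For $\hat{f}_{\mathcal{Q}}$ the same bound holds on the full sum over $\mathcal{Q}$, and since $\mathcal{Q}_\ell \subseteq \mathcal{Q}$ and squared errors are non-negative, the restriction to the sub-sum over $\mathcal{Q}_\ell$ is free. By assumption each of these regression bounds fails with probability at most $\delta/T$.

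The second step is to use the termination condition of the stage-$\ell$ while loop: it exits only once $D(x, \mathcal{Q}_\ell) \leq \varepsilon_\ell = 2^{-\ell}/\sqrt{\mathfrak{R}(T,\delta)}$ for every remaining $x \in \mathcal{P}_{\ell-1} \setminus \mathcal{Q}_\ell$. Unrolling the definition of $D^2$ yields, for all such $x$ and all $f, g \in \mathcal{F}$,
\begin{align*}
(f(x) - g(x))^2 \;\leq\; \varepsilon_\ell^2\left(\sum_{x_i \in \mathcal{Q}_\ell}(f(x_i) - g(x_i))^2 + 1\right).
\end{align*}
Instantiating with $f = f^*$ and $g \in \{\hat{f}_{\mathcal{Q}_\ell}, \hat{f}_{\mathcal{Q}}\}$ and plugging in the ERM bound from $\mathcal{E}_\ell$ gives $(f^*(x) - g(x))^2 \leq \varepsilon_\ell^2(\mathfrak{R}(T,\delta) + 1) \leq 2^{-2\ell}$; the harmless ``$+1$'' can be absorbed into the definition of $\mathfrak{R}$ (equivalently, one assumes $\mathfrak{R}(T,\delta) \geq 1$, which is the standard regime). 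Taking square roots yields both claimed inequalities.

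To finish, I would take a union bound over the at most $T$ stages and the $\leq 2$ relevant ERMs per stage, obtaining a failure probability of at most $2\delta$, which is absorbed into $\delta$ by rescaling constants. The proof is essentially a direct unpacking of the definition of $D$ combined with the stage-$\ell$ termination rule, so I do not anticipate any serious obstacle. The only mildly non-trivial step is transporting the ERM guarantee for $\hat{f}_{\mathcal{Q}}$ down to the partial sum over $\mathcal{Q}_\ell$, and this is handled cleanly by non-negativity of squared errors together with $\mathcal{Q}_\ell \subseteq \mathcal{Q}$.
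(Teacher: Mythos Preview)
Your proposal is correct and follows essentially the same route as the paper: multiply and divide by the $\mathcal{Q}_\ell$-indexed denominator, bound the resulting ratio by $D^2(x;\mathcal{Q}_\ell)\leq \varepsilon_\ell^2$ via the while-loop termination condition, bound the sum by $\mathfrak{R}(T,\delta)$ via the regression oracle assumption (for $\hat f_{\mathcal{Q}}$ using $\mathcal{Q}_\ell\subseteq\mathcal{Q}$ and non-negativity, exactly as you say), and finish with a union bound over stages. The paper also silently absorbs the ``$+1$'' into $\mathfrak{R}$, so your explicit remark about this is, if anything, more careful than the original.
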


\begin{proof}
The following proof is adapted from \citet[Lemma 16]{gentile2022fastratespoolbasedbatch}.  

For all $\ell$, for all $x \in \mathcal{P}_{\ell-1} \setminus \mathcal{Q}_\ell$, 
\begin{align*}
    (f^*(x) & -\hat{f}_{\mathcal{Q}}(x))^2 \\
    &= \frac{(f^*(x)-\hat{f}_{\mathcal{Q}}(x))^2}{\sum_{x_t \in \mathcal{Q}}(f^*(x_{t})-\hat{f}_{\mathcal{Q}}(x_{t}))^2+1} \left(\sum_{x_t \in \mathcal{Q}}(f^*(x_{t})-\hat{f}_{\mathcal{Q}}(x_{t}))^2+1\right) \\
    &= \frac{(f^*(x)-\hat{f}_{\mathcal{Q}}(x))^2}{\sum_{x_t \in \mathcal{Q}_\ell}(f^*(x_{t})-\hat{f}_{\mathcal{Q}}(x_{t}))^2+1} \left(\sum_{x_t \in \mathcal{Q}}(f^*(x_{t})-\hat{f}_{\mathcal{Q}}(x_{t}))^2+1\right) \\
    & \leq \sup_{f,g\in\mathcal{F}}\frac{(f(x)-g(x))^2}{\sum_{x_t \in \mathcal{Q}_\ell}(f(x_{t})-g(x_{t}))^2+1}\left(\sum_{x_t \in \mathcal{Q}}(f^*(x_{t})-\hat{f}_{\mathcal{Q}}(x_{t}))^2+1\right) \\
    &= D^2(x;\mathcal{Q}_\ell) \left(\sum_{x_t \in \mathcal{Q}}(f^*(x_{t})-\hat{f}_{\mathcal{Q}}(x_{t}))^2+1\right) \\
    &\leq \varepsilon_\ell^2 \cdot \mathfrak{R}(\delta, T) \\
    &= (2^{-\ell})^2
\end{align*}
where the second to last line holds with probability $1 - \delta / T$.

We also have for all $\ell$, for all $x \in \mathcal{P}_{\ell-1} \setminus \mathcal{Q}_\ell$, 
\begin{align*}
    (f^*(x) & -\hat{f}_{\mathcal{Q}_\ell}(x))^2 \\
    &= \frac{(f^*(x)-\hat{f}_{\mathcal{Q}_\ell}(x))^2}{\sum_{x_t \in \mathcal{Q}_\ell}(f^*(x_{t})-\hat{f}_{\mathcal{Q}_\ell}(x_{t}))^2+1} \left(\sum_{x_t \in \mathcal{Q}_\ell}(f^*(x_{t})-\hat{f}_{\mathcal{Q}_\ell}(x_{t}))^2+1\right) \\
    & \leq \sup_{f,g\in\mathcal{F}}\frac{(f(x)-g(x))^2}{\sum_{x_t \in \mathcal{Q}_\ell}(f(x_{t})-g(x_{t}))^2+1}\left(\sum_{x_t \in \mathcal{Q}_\ell}(f^*(x_{t})-\hat{f}_{\mathcal{Q}_\ell}(x_{t}))^2+1\right) \\
    &= D^2(x;\mathcal{Q}_\ell) \left(\sum_{x_t \in \mathcal{Q}_\ell}(f^*(x_{t})-\hat{f}_{\mathcal{Q}_\ell}(x_{t}))^2+1\right) \\
    &\leq \varepsilon_\ell^2\cdot \mathfrak{R}(\delta, T) \\
    &= (2^{-\ell})^2
\end{align*}
where the second to last line holds with probability $1 - \delta / T$. Summing over the rest over all stages $\ell < T$ with a union bound yields the proof.
\end{proof}

\vspace{3mm}
\begin{lemma} \label{lem:correct-margin-on-unqueried}
    For every $\ell$, for every $x \in \mathcal{C}_{\ell}$, $\mathrm{sign}(\hat{f}_{\mathcal{Q}_\ell}(x)) = \mathrm{sign}(\hat{f}_{\mathcal{Q}}(x)) = \mathrm{sign}(f^*(x))$ and $|\hat{f}_{\mathcal{Q}}(x) - 1/2| > 2^\ell$.
\end{lemma}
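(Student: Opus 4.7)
The plan is to prove the lemma by a direct triangle-inequality argument, chaining together the definition of $\mathcal{C}_\ell$ with the approximation guarantees supplied by \pref{lem:close-to-f-star}. Both claims reduce to the simple observation that if $\hat{f}_{\mathcal{Q}_\ell}(x)$ is far from $1/2$ on the scale $2^{-\ell}$, and the other two relevant predictors ($f^*$ and $\hat{f}_{\mathcal{Q}}$) are within $2^{-\ell}$ of $\hat{f}_{\mathcal{Q}_\ell}$ on $x$, then all three predictors must lie on the same side of $1/2$.

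First I would fix a stage $\ell$ and a point $x \in \mathcal{C}_\ell$, and recall that by the construction of $\mathcal{C}_\ell$ in \pref{alg:gen-bbq} we have the strong margin
\begin{align*}
   \bigl|\hat{f}_{\mathcal{Q}_\ell}(x) - \tfrac{1}{2}\bigr| > 3\cdot 2^{-\ell}.
\end{align*}
Since $x \in \mathcal{C}_\ell \subseteq \mathcal{P}_{\ell-1}\setminus \mathcal{Q}_\ell$, I may invoke \pref{lem:close-to-f-star}, which gives (with probability $1-\delta$, uniformly over stages)
\begin{align*}
    |f^*(x) - \hat{f}_{\mathcal{Q}_\ell}(x)| \leq 2^{-\ell}
    \quad\text{and}\quad
    |f^*(x) - \hat{f}_{\mathcal{Q}}(x)| \leq 2^{-\ell}.
\end{align*}
The triangle inequality then forces $|f^*(x) - 1/2| \geq 3\cdot 2^{-\ell} - 2^{-\ell} = 2\cdot 2^{-\ell}$, so $f^*(x) - 1/2$ cannot change sign from $\hat{f}_{\mathcal{Q}_\ell}(x) - 1/2$; this yields $\mathrm{sign}(\hat{f}_{\mathcal{Q}_\ell}(x) - 1/2) = \mathrm{sign}(f^*(x) - 1/2)$.

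For the remaining equality and margin bound, I apply the triangle inequality a second time, this time with $\hat{f}_{\mathcal{Q}}$ in place of $\hat{f}_{\mathcal{Q}_\ell}$: from $|f^*(x) - 1/2| \geq 2\cdot 2^{-\ell}$ and $|f^*(x) - \hat{f}_{\mathcal{Q}}(x)| \leq 2^{-\ell}$ I obtain
\begin{align*}
   \bigl|\hat{f}_{\mathcal{Q}}(x) - \tfrac{1}{2}\bigr| \geq 2\cdot 2^{-\ell} - 2^{-\ell} = 2^{-\ell},
\end{align*}
which (reading the stated $2^\ell$ in the lemma as the typo $2^{-\ell}$, since $\hat f_{\mathcal{Q}}(x)\in[0,1]$) gives the advertised margin, and the same sign-preservation argument concludes that $\mathrm{sign}(\hat{f}_{\mathcal{Q}}(x) - 1/2) = \mathrm{sign}(f^*(x) - 1/2)$.

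There is no real obstacle here; the whole proof is two invocations of the triangle inequality bolted onto \pref{lem:close-to-f-star} and the definition of $\mathcal{C}_\ell$. The only subtlety worth flagging explicitly is that the $\mathrm{sign}$ in the statement should be read as the sign of ``predictor $-1/2$'' (the natural quantity for binary classification with $\mathcal{F}\subseteq[0,1]^{\mathcal{X}}$), and that the high-probability event on which \pref{lem:close-to-f-star} holds automatically propagates here by a union bound over stages $\ell$.
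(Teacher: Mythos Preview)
Your proof is correct and essentially identical to the paper's: both combine the margin $|\hat{f}_{\mathcal{Q}_\ell}(x)-1/2|>3\cdot 2^{-\ell}$ from the definition of $\mathcal{C}_\ell$ with the two $2^{-\ell}$ approximation bounds from \pref{lem:close-to-f-star} via the triangle inequality to conclude sign agreement and the $2^{-\ell}$ margin for $\hat f_{\mathcal{Q}}$. You are also right to flag the $2^\ell$ versus $2^{-\ell}$ typo and the intended reading of $\mathrm{sign}(\cdot)$ as $\mathrm{sign}(\cdot - 1/2)$; the paper's proof carries the same typo.
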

\begin{proof}
    Adapted from Lemma 17 from \citet{gentile2022fastratespoolbasedbatch}.
    
    For every stage $\ell$, every $x \in \mathcal{C}_\ell$, we know that $|\hat{f}_{\mathcal{Q}_\ell}(x) - 1/2| > 3 \cdot 2^\ell$ by the design of \pref{alg:gen-bbq}. Putting this together with \pref{lem:close-to-f-star}, since $\mathcal{C}_\ell \subseteq \mathcal{P}_{\ell - 1} \setminus \mathcal{Q}_\ell$, we must have for every $\ell$, for every $x \in \mathcal{C}_{\ell}$, $\mathrm{sign}(\hat{f}_{\mathcal{Q}_\ell}(x)) = \mathrm{sign}(\hat{f}_{\mathcal{Q}}(x)) = \mathrm{sign}(f^*(x))$. Furthermore, we have that $|f^* (x) - 1/2| > 2 \cdot 2^\ell$ and $|\hat{f}_{\mathcal{Q}}(x) - 1/2| > 2^\ell$.
\end{proof}

\vspace{3mm}
\begin{lemma} \label{lem:unif-stability}
    Let $\hat{f}_S \in \mathcal{F}$ be the predictor returned by a regression oracle on sample $S$ and let $\hat{f}_{S \setminus i} \in \mathcal{F}$ be the predictor returned by a regression oracle on sample $S \setminus \{x_i\}$. If the regression oracle satisfies uniform stability under the squared loss, $\ell(\hat{y}, y) = (\hat{y}-y)^2$, then for all $S = \mathcal{Z}^n$, for all $i \in [n]$, , for all $\{x_1, \dots, x_n\}$,
    \begin{align*}
        \sum_{t=1}^n (\hat{f}_{S \setminus i} (x_t) - \hat{f}_S (x_t))^2 \leq n \cdot \beta(n)^2.
\end{align*}
\end{lemma}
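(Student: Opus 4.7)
The plan is to reduce the claimed bound on $\sum_t (\hat f_{S \setminus i}(x_t) - \hat f_S(x_t))^2$ to a pointwise bound on $|\hat f_S(x_t) - \hat f_{S \setminus i}(x_t)|$, obtained by instantiating the uniform-stability hypothesis at each $x_t$ with a carefully chosen label $y$.

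First, I would invoke uniform stability at a generic point $(x_t, y) \in \cZ$ and factor the squared-loss difference using the identity $A^2 - B^2 = (A - B)(A + B)$. Writing $a \ldef \hat f_S(x_t)$ and $b \ldef \hat f_{S \setminus i}(x_t)$, this yields
\begin{align*}
    |a - b| \cdot |a + b - 2y| \;=\; \bigl|(a - y)^2 - (b - y)^2\bigr| \;\leq\; \beta(n).
\end{align*}

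Second, I would exploit the fact that $\cF \subseteq \{\cX \to [0,1]\}$, so $a, b \geq 0$, and choose the label $y = -1 \in \cY$ (allowed since $\cZ = \cX \times \{-1, +1\}$). This forces the amplifier $|a + b - 2y| = a + b + 2 \geq 2$, giving $|a - b| \leq \beta(n)/2$ and hence $(\hat f_S(x_t) - \hat f_{S \setminus i}(x_t))^2 \leq \beta(n)^2/4$. Summing this pointwise bound over $t = 1, \dots, n$ produces $\sum_{t=1}^n (\hat f_{S \setminus i}(x_t) - \hat f_S(x_t))^2 \leq n \beta(n)^2 / 4 \leq n \beta(n)^2$, which is the stated claim (with a factor of $4$ to spare).

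The only subtle step is the choice of label. A naive choice such as $y$ near $(a+b)/2$ would collapse the amplifier $|a + b - 2y|$ and yield the vacuous conclusion $|a-b| \cdot 0 \leq \beta(n)$; but since the stability hypothesis quantifies over \emph{all} $(x, y) \in \cZ$, I am free to pick the worst-case label for the purpose of extracting $|a - b|$. The label $y = -1$ works uniformly in $a, b$ precisely because predictors in $\cF$ are nonnegative, so $a + b + 2$ is bounded below by $2$ regardless of the test point. No induction, union bound, concentration, or chaining argument is required --- the result is a purely algebraic consequence of uniform stability under the squared loss.
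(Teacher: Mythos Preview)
Your proposal is correct and takes essentially the same approach as the paper: both arguments instantiate the uniform-stability bound at each $x_t$ with a label $y\in\{-1,+1\}$ chosen so that the squared-loss difference dominates $2|\hat f_S(x_t)-\hat f_{S\setminus i}(x_t)|$, then square and sum. Your use of the factorization $(a-y)^2-(b-y)^2=(a-b)(a+b-2y)$ together with the explicit choice $y=-1$ (and the observation that $a,b\in[0,1]$ forces $a+b+2\geq 2$) is a cleaner packaging of the same idea, and your tracking of the constant yields the extra factor of $4$ that the paper's write-up does not record.
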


\begin{proof}
    Since the regression oracle satisfies uniform stability under the squared loss, $\ell(\hat{y}, y) = (\hat{y}-y)^2$, we have for all $S \in \mathcal{Z}^n$, for all $i \in [n]$, for all $z = (x, y) \in \mathcal{Z}$,
\begin{align*}
    |(\hat{f}_{S}(x)-y)^2 - (\hat{f}_{S \setminus i}(x)-y)^2| \leq \beta(n).
\end{align*}

Expanding the left hand side, we have
\begin{align*}
    |(\hat{f}_{S}(x)-y)^2 - (\hat{f}_{S \setminus i}(x)-y)^2| &= |\hat{f}_{S}(x)^2 - 2y \hat{f}_{S}(x) - y^2 - \hat{f}_{S \setminus i}(x)^2 + 2y \hat{f}_{S \setminus i}(x) +y^2| \\
    &= |\hat{f}_{S}(x)^2 - \hat{f}_{S \setminus i}(x)^2 + 2y(\hat{f}_{S \setminus i}(x) - \hat{f}_{S}(x)) | \\
    &= |\hat{f}_{S}(x)^2 - \hat{f}_{S \setminus i}(x)^2| + 2|\hat{f}_{S \setminus i}(x) - \hat{f}_{S}(x)| | \tag{holds for all $y$, so set $y = \{-1, +1\}$ accordingly} \\
    &\geq 2|\hat{f}_{S \setminus i}(x) - \hat{f}_{S}(x)|
\end{align*}

From uniform stability, we can conclude
\begin{align*}
    |\hat{f}_{S \setminus i}(x) - \hat{f}_{S}(x)| \leq \beta(n).
\end{align*}

Thus,
\begin{align*}
    \sum_{t=1}^n (\hat{f}_{S \setminus i} (x_t) - \hat{f}_S (x_t))^2 \leq n \cdot \beta(n)^2.
\end{align*}
\end{proof}

\section{Helpful Theorems}

\begin{proposition}[\citet{agarwal2013selective}, Proposition 1] \label{prop:wt-regret}
    With probability at least $1 - \delta$, for all $t \in [T]$,
    \begin{align*}
        \|w_t - u\|_{A_t} \leq O\left(\sqrt{d\log T \cdot \log (1/\delta)}\right).
    \end{align*}
\end{proposition}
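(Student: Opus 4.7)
The plan is to follow the standard self-normalized concentration argument for regularized least squares (as in Abbasi-Yadkori, P\'al, Szepesv\'ari), and verify that it carries over verbatim to the selectively-sampled setting because the \textsc{BBQSampler}'s query condition is $y$-independent. First I would decompose the error algebraically. Writing $y_s = u^\top x_s + \zeta_s$ with $\zeta_s \ldef y_s - u^\top x_s$, and recalling that $A_t = \lambda I + \sum_{s \leq t : Z_s = 1} x_s x_s^\top$ and $b_t = \sum_{s \leq t : Z_s = 1} y_s x_s$, a direct manipulation gives
\begin{align*}
w_t - u = A_t^{-1}\bigl(b_t - A_t u\bigr) = A_t^{-1}\Bigl(\sum_{s \leq t : Z_s = 1} \zeta_s x_s \;-\; \lambda u\Bigr).
\end{align*}
Taking the $A_t$-norm and applying the triangle inequality,
\begin{align*}
\|w_t - u\|_{A_t} \;\leq\; \|S_t\|_{A_t^{-1}} + \sqrt{\lambda}\,\|u\|_2, \quad \text{where } S_t \ldef \sum_{s \leq t : Z_s = 1} \zeta_s x_s.
\end{align*}

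Next I would control $\|S_t\|_{A_t^{-1}}$ via the self-normalized martingale tail inequality. Because $\mathbb{E}[y_s \mid x_s] = u^\top x_s$ and $y_s \in \{\pm 1\}$, the noise $\zeta_s$ is a bounded (hence $O(1)$-sub-Gaussian) martingale difference with respect to the natural filtration $\mathcal{F}_s = \sigma(x_1, y_1, Z_1, \ldots, x_s)$. Applying the uniform-in-$t$ self-normalized bound yields, with probability at least $1-\delta$,
\begin{align*}
\|S_t\|_{A_t^{-1}}^{2} \;\leq\; 2\sigma^2 \log\!\left(\frac{\det(A_t)^{1/2}}{\lambda^{d/2}\,\delta}\right) \qquad \text{for all } t \in [T].
\end{align*}
Bounding the determinant via the trace bound $\mathrm{tr}(A_t) \leq \lambda d + T$ (since $\|x_s\|\leq 1$) and AM--GM gives $\log\det(A_t) \leq d\log\bigl((\lambda d + T)/d\bigr) = O(d\log T)$ for $\lambda \leq T$. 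Substituting back and using $\|u\|_2 \leq 1$ together with $\lambda = K \leq T$ (so $\sqrt{\lambda}$ is absorbed into the leading term, as is standard for the regime of interest here), we obtain the claimed
\begin{align*}
\|w_t - u\|_{A_t} \;\leq\; O\!\left(\sqrt{d\log T \cdot \log(1/\delta)}\right).
\end{align*}

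The main obstacle, and indeed the reason one needs to verify rather than just quote the off-the-shelf inequality, is the presence of the selective sampling indicators $Z_s$. The self-normalized bound requires $S_t$ to be a martingale in a filtration in which each $x_s$ is predictable. In our setting $Z_s = \ind{x_s^\top A_{s-1}^{-1} x_s > T^{-\kappa}}$ depends only on $x_s$ and past $x_j$'s, not on any $y_j$; hence $Z_s x_s$ is $\mathcal{F}_{s-1}$-measurable given $x_s$, and $\zeta_s$ remains conditionally mean-zero and sub-Gaussian under $\mathcal{F}_s$. The summation restricted to queried indices therefore inherits the martingale structure needed for the self-normalized inequality, and the covering/stopping-time argument of Abbasi-Yadkori et al.\ applies without modification. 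Once this predictability check is in hand, the rest of the proof is the routine chain of inequalities above.
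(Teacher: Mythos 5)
The paper does not actually prove this proposition; it is imported wholesale from \citet{agarwal2013selective}, so there is no in-paper argument to compare against. Your reconstruction via the self-normalized martingale inequality of Abbasi-Yadkori et al.\ is the standard (and essentially the cited) route, and the part that genuinely needed checking --- that the query indicators $Z_s$ depend only on $x_s$ and past $x_j$'s, never on labels, so that $Z_s x_s$ is predictable and $\zeta_s$ remains a conditionally mean-zero bounded martingale difference --- is handled correctly; the algebraic decomposition, the determinant--trace bound, and the uniformity over $t$ are all fine. The one soft spot is your dismissal of the regularization term: your own decomposition gives $\|w_t-u\|_{A_t}\le \|S_t\|_{A_t^{-1}}+\sqrt{\lambda}\,\|u\|_2$, and with the paper's choice $\lambda=K$, where $K$ can scale like $T^{\kappa}/(d\log T\log(1/\delta))$, the term $\sqrt{\lambda}$ is \emph{not} absorbed into $\sqrt{d\log T\cdot\log(1/\delta)}$; the honest conclusion of your argument is $O\bigl(\sqrt{\lambda}+\sqrt{d\log T\cdot\log(1/\delta)}\bigr)$, consistent with the separate additive $K$ that appears in the regret bound of \pref{thm:bbq-orig-regret-bound}. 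Since the proposition as stated in the paper also drops this dependence, you have proved exactly what is asserted under the same implicit assumption ($\lambda\lesssim d\log T\log(1/\delta)$ or $\|u\|$ small), but you should state that assumption rather than call the absorption ``standard.''
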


\vspace{3mm}
\begin{theorem}[\citet{benhamou2016weightedsamplingreplacement}, Theorem 1] \label{thm:sampling}
    Let $X$ be the cumulative value of sequence of length $n \leq N$ drawn from $\Omega$ without replacement, 
    \begin{align*}
        X = \nu(\mathbf{I}_1)+ \cdots +\nu(\mathbf{I}_n),
    \end{align*}
    and let $Y$ be the cumulative value of sequence of length $n \leq N$ drawn from $\Omega$ with replacement,
    \begin{align*}
        Y = \nu(\mathbf{J}_1)+ \cdots +\nu(\mathbf{J}_n).
    \end{align*}
    If the value function $\nu$ and the weight vector $W$ follow the property that
    \begin{align*}
        \omega(i)>\omega(j) \Longrightarrow \nu(i)\geq\nu(j),
    \end{align*}
    then
    \begin{align*}
        \mathbb{E}[X] \leq \mathbb{E}[Y]. 
    \end{align*}
\end{theorem}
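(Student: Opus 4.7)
The plan is to decompose both sides by linearity of expectation and then compare the per-draw marginals. Since the with-replacement draws $\mathbf{J}_1,\ldots,\mathbf{J}_n$ are i.i.d.\ with law $\omega$, $\mathbb{E}[\nu(\mathbf{J}_k)] = \sum_{i\in\Omega}\omega(i)\nu(i)$ for every $k$, so
\begin{equation*}
    \mathbb{E}[Y] \;=\; n\sum_{i\in\Omega} \omega(i)\nu(i).
\end{equation*}
Since $\mathbf{I}_1$ has the same law $\omega$, $\mathbb{E}[\nu(\mathbf{I}_1)]$ equals this same quantity. The theorem therefore reduces to showing that the sequence $k\mapsto \mathbb{E}[\nu(\mathbf{I}_k)]$ is non-increasing; summing then gives $\mathbb{E}[X] = \sum_{k=1}^n \mathbb{E}[\nu(\mathbf{I}_k)] \le n\,\mathbb{E}[\nu(\mathbf{I}_1)] = \mathbb{E}[Y]$.

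For the monotonicity in $k$, I would use a one-step pair-swap argument. Conditioning on the history $H=(\mathbf{I}_1,\ldots,\mathbf{I}_{k-1})$, let $S=\Omega\setminus H$ and $W_S=\sum_{j\in S}\omega(j)$. A direct computation gives, for distinct $a,b\in S$,
\begin{equation*}
    \Pr\bigl((\mathbf{I}_k,\mathbf{I}_{k+1})=(a,b)\bigm| H\bigr) \;=\; \frac{\omega(a)\,\omega(b)}{W_S\,(W_S-\omega(a))},
\end{equation*}
so the ratio to the swapped pair $(b,a)$ is $(W_S-\omega(b))/(W_S-\omega(a))$, which is $\ge 1$ precisely when $\omega(a)\ge\omega(b)$. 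Pairing each ordered pair with its swap then yields
\begin{equation*}
    \mathbb{E}\bigl[\nu(\mathbf{I}_k)-\nu(\mathbf{I}_{k+1})\bigm| H\bigr] \;=\; \sum_{\{a,b\}\subseteq S}\bigl(\nu(a)-\nu(b)\bigr)\bigl(\Pr((a,b)\mid H)-\Pr((b,a)\mid H)\bigr),
\end{equation*}
where the sum is over unordered pairs. The hypothesis $\omega(a)>\omega(b)\Rightarrow\nu(a)\ge\nu(b)$ ensures that $\nu(a)-\nu(b)$ has the same sign as $\omega(a)-\omega(b)$ (and both vanish when $\omega(a)=\omega(b)$), while the swap-ratio computation shows that $\Pr((a,b)\mid H)-\Pr((b,a)\mid H)$ also shares that sign. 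Hence every summand is non-negative, and taking expectations over $H$ yields $\mathbb{E}[\nu(\mathbf{I}_k)]\ge \mathbb{E}[\nu(\mathbf{I}_{k+1})]$.

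The main obstacle is recognizing that a single adjacent two-position swap is the correct object to analyze, rather than attempting a full-sequence coupling or invoking exchangeability, which in fact fails for weighted sampling without replacement. Once one commits to the adjacent-swap view, the argument reduces to a per-pair sign comparison in which the alignment hypothesis slots in cleanly on any remaining pool $S$; no additional probabilistic inequalities or rearrangement lemmas are needed, and chaining the one-step inequality from $k=1$ completes the proof.
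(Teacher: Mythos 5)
The paper does not prove this statement: it is imported verbatim as Theorem~1 of \citet{benhamou2016weightedsamplingreplacement} in the ``Helpful Theorems'' appendix, so there is no in-paper argument to compare against. Your proof is correct and self-contained. The reduction to monotonicity of $k\mapsto\mathbb{E}[\nu(\mathbf{I}_k)]$ is exactly the right move, and the adjacent-swap computation is sound: conditional on the history, $\Pr((a,b)\mid H)-\Pr((b,a)\mid H)=\frac{\omega(a)\omega(b)}{W_S}\cdot\frac{\omega(a)-\omega(b)}{(W_S-\omega(a))(W_S-\omega(b))}$, which has the sign of $\omega(a)-\omega(b)$, so each unordered-pair summand is non-negative under the alignment hypothesis, and the tower property plus chaining over $k$ finishes the argument. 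One harmless imprecision: you assert that $\nu(a)-\nu(b)$ ``vanishes when $\omega(a)=\omega(b)$,'' but the hypothesis imposes no constraint on $\nu$ for equal weights; the summand is still zero in that case only because the probability difference vanishes, so you should attribute the vanishing to the second factor rather than the first. Similarly, when $\omega(a)>\omega(b)$ the hypothesis gives only $\nu(a)-\nu(b)\ge 0$, not a strict sign match, which is all you need. Neither point affects validity.
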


\vspace{3mm}
\begin{theorem} [\citet{gentile2022fastratespoolbasedbatch}, Theorem 19] \label{thm:gen-bbq-orig-query-bound}
    For any pool realization $\mathcal{P}$, the label complexity $N_T$ of \pref{alg:gen-bbq} operating on a pool $\mathcal{P}$ of size $T$ is bounded deterministically as
    \begin{align*}
        N_T \leq 4^{L+1} \cdot \mathfrak{R}(T, \delta) \cdot \mathfrak{D}(\mathcal{F}, \mathcal{P}) 
    \end{align*}
\end{theorem}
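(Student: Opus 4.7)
The plan is to reduce the overall query count to a per-stage count, then control each per-stage count using the definition of $\mathfrak{D}(\mathcal{F}, \mathcal{P})$ together with the algorithm's doubling schedule. Decompose $N_T = \sum_{\ell=1}^{L} T_\ell$, where $T_\ell$ counts the queries made within stage $\ell$ of \textsc{GeneralBBQSampler}; it will suffice to bound $T_\ell$ individually and sum the resulting geometric series.

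Fix a stage $\ell$ and write $\mathcal{Q}_\ell^{(t)}$ for the queried set at the start of iteration $t$ inside that stage, so that $\mathcal{Q}_\ell^{(t)} = \{x_{\ell,1}, \ldots, x_{\ell,t-1}\}$. By the while-loop guard of \pref{alg:gen-bbq}, every selected point satisfies $D^2(x_{\ell,t}; \mathcal{Q}_\ell^{(t)}) > \varepsilon_\ell^2$, so summing over the $T_\ell$ iterations gives
\[
T_\ell \cdot \varepsilon_\ell^2 \;<\; \sum_{t=1}^{T_\ell} \sup_{f,g \in \mathcal{F}} \frac{(f(x_{\ell,t}) - g(x_{\ell,t}))^2}{\sum_{i=1}^{t-1}\bigl(f(x_{\ell,i}) - g(x_{\ell,i})\bigr)^2 + 1}.
\]
The crucial observation is that the right-hand side is a partial sum of precisely the form used in the definition of $\mathfrak{D}(\mathcal{F}, \mathcal{P})$: the greedy ordering $x_{\ell,1}, \ldots, x_{\ell,T_\ell}$ extends to a full permutation $\pi$ of $\mathcal{P}$ by appending the remaining points in any order, and since every summand in the defining expression is nonnegative, extending the sum only increases its value. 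Hence the partial sum is at most $\mathfrak{D}(\mathcal{F}, \mathcal{P})$, and substituting $\varepsilon_\ell = 2^{-\ell}/\sqrt{\mathfrak{R}(T,\delta)}$ yields $T_\ell \leq 4^\ell \, \mathfrak{R}(T,\delta) \, \mathfrak{D}(\mathcal{F}, \mathcal{P})$.

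Summing across stages and using $\sum_{\ell=1}^{L} 4^\ell \leq \tfrac{4}{3} \cdot 4^L \leq 4^{L+1}$ then gives
\[
N_T \;=\; \sum_{\ell=1}^{L} T_\ell \;\leq\; \mathfrak{R}(T,\delta)\, \mathfrak{D}(\mathcal{F}, \mathcal{P}) \sum_{\ell=1}^{L} 4^\ell \;\leq\; 4^{L+1} \, \mathfrak{R}(T,\delta)\, \mathfrak{D}(\mathcal{F}, \mathcal{P}),
\]
which is the claimed deterministic bound.

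The main obstacle is the comparison step: justifying that the greedy partial sum generated within a stage is majorized by $\mathfrak{D}(\mathcal{F}, \mathcal{P})$, even though $\mathfrak{D}$ is defined via a $\sup$ over permutations of the \emph{entire} pool $\mathcal{P}$. Resolving this hinges on two simple facts, namely that the greedy sequence is itself one specific ordering of a subset and can be freely extended to an ordering of $\mathcal{P}$, and that each summand in the defining ratio is nonnegative so extending the sum can only grow it. The rest of the argument is purely combinatorial: no probabilistic tools are needed, since the bound is deterministic and the only ingredients beyond the above step are the algorithm's query threshold $\varepsilon_\ell$ and a geometric series.
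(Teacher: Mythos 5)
Your proof is correct. Note that the paper itself offers no proof of this statement: it is imported verbatim as Theorem~19 of \citet{gentile2022fastratespoolbasedbatch} and listed under ``Helpful Theorems,'' so there is nothing internal to compare against. Your argument is the standard one for such label-complexity bounds and almost certainly mirrors the cited source: decompose $N_T = \sum_{\ell=1}^L T_\ell$, use the while-loop guard to get $T_\ell\,\varepsilon_\ell^2$ below the stage-$\ell$ greedy partial sum, majorize that partial sum by $\mathfrak{D}(\mathcal{F},\mathcal{P})$ by extending the greedy order to a full permutation and invoking nonnegativity of each ratio, and close with the geometric series $\sum_{\ell\le L}4^\ell \le 4^{L+1}/3$. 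The one thing worth flagging is that your key comparison step implicitly (and correctly) repairs two typos in the paper's displayed definitions --- the numerator of $\mathfrak{D}$ should read $(f(x_{\pi(t)})-g(x_{\pi(t)}))^2$ and the denominators should sum over the preceding $t-1$ points --- without which neither the definition nor the theorem parses.
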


\end{document}